\newcommand{\E}{\mathop{\mathbb{E}}} 
\newcommand{\Var}{\mathop{\mathbb{V}}} 
\newcommand{\Cov}{\mathop{\mathbb{C}}} 
\newcommand{\jact}{\frac{d\,\mathcal{T}_w(\repsilon)}{d\,w}}
\newcommand{\Tr}{\mathcal{T}_w(\repsilon)}
\newcommand{\T}{\mathcal{T}_w(\epsilon)}
\newcommand{\hess}{\nabla^2 f(\mu)}
\DeclareMathAlphabet{\mathbfsf}{\encodingdefault}{\sfdefault}{bx}{n}
\newcommand{\upgreektemplate}[2]{#2{
\renewcommand{\alpha}{\upalpha}
\renewcommand{\beta}{\upbeta}
\renewcommand{\theta}{\uptheta}
\renewcommand{\gamma}{\upgamma}
\renewcommand{\lambda}{\uplambda}
\renewcommand{\xi}{\upxi}
\renewcommand{\epsilon}{\upepsilon}
\renewcommand{\delta}{\updelta}
\renewcommand{\phi}{\upphi}
\renewcommand{\zeta}{\upzeta}
\renewcommand{\Lambda}{\Uplambda}
\renewcommand{\Gamma}{\Upgamma}
\renewcommand{\Delta}{\Updelta}
\renewcommand{\Theta}{\Uptheta}
#1
}}
\newcommand{\upgreek}[1]{\upgreektemplate{#1}{\mathsf}}
\newcommand{\rr}[1]{{\upgreek #1}}
\newcommand{\repsilon}{\rr{\epsilon}}
\newcommand{\rz}{\rr{z}}
\newtheorem{theorem}{Theorem}[section]
\newtheorem{lemma}[theorem]{Lemma}
\title{Approximation Based Variance Reduction for Reparameterization Gradients}
\author{
  Tomas Geffner \\
  College of Information and Computer Science\\
  University of Massachusetts, Amherst\\
  \texttt{tgeffner@cs.umass.edu} \\
  \And
  Justin Domke \\
  College of Information and Computer Science\\
  University of Massachusetts, Amherst\\
  \texttt{domke@cs.umass.edu} \\
}
\begin{document}

\maketitle

\begin{abstract}

Flexible variational distributions improve variational inference but are harder to optimize. In this work we present a control variate that is applicable for any reparameterizable distribution with known mean and covariance matrix, e.g. Gaussians with any covariance structure. The control variate is based on a quadratic approximation of the model, and its parameters are set using a double-descent scheme by minimizing the gradient estimator's variance. We empirically show that this control variate leads to large improvements in gradient variance and optimization convergence for inference with non-factorized variational distributions.
\end{abstract}


\section{Introduction}

This paper concerns estimating the gradient of $\E_{q_w(\rz)} f(\rz)$ with respect to $w$. This is a ubiquitous problem in machine learning, needed to perform stochastic optimization in variational inference (VI), reinforcement learning, and experimental design \cite{mohamed2019monte, jordan1999introduction, sutton2018reinforcement, chaloner1995bayesian}. A popular technique is the ``reparameterization trick'' \cite{pflug2012optimization, vaes_welling, glasserman2013monte}. Here, one defines a mapping $\mathcal{T}_w$ that transforms some base density $q_0$ into $q_w$. Then, the gradient is estimated by drawing $\epsilon \sim q_0$ and evaluating $\nabla_w f(\mathcal{T}_w(\repsilon))$.

In any application using stochastic gradients, variance is a concern. Several variance reduction methods exist, with control variates representing a popular alternative \cite{mcbook}. A control variate is a random variable with expectation zero, which can be added to an estimator to cancel noise and decrease variance. Previous work has shown that control variates can significantly reduce the variance of reparameterization gradients, and thereby improve optimization performance \cite{cvs, traylorreducevariance_adam, stickingthelanding}.


Miller et al. \cite{traylorreducevariance_adam} proposed a Taylor-expansion based control variate for the case where $q_w$ is a fully-factorized Gaussian parameterized by its mean and scale. Their method works well for the gradient with respect to the mean parameters. However, for the scale parameters, computational issues force the use of further approximations. In a new analysis (Sec. \ref{sec:miller4}) we observe that this amounts to using a {\em constant} Taylor approximation (i.e. an approximation of order zero). As a consequence, for the scale parameters, the control variate has little effect. Still, the approach is very helpful with fully-factorized Gaussians, because in this case most variance is contributed by gradient with respect to the mean parameters.




The situation is different for non fully-factorized distributions: Often, most of the variance is contributed by the gradient with respect to the {\em scale} parameters. This renders Taylor-based control variates practically useless. Indeed, empirical results in Section \ref{sec:results} show that, with diagonal plus low rank Gaussians or Gaussians with arbitrary dense covariances, Taylor-based control variates yield almost no benefit over the no control variates baseline. (We generalize the Taylor approach to full-rank and diagonal plus low-rank distributions in Appendix \ref{app:millerdrawbacks}.)

For VI, fully factorized variational distributions are typically much less accurate than those representing interdependence \cite{diagLR, householderflows}. Thus, we seek a control variate that can aid the use of more powerful distributions, such as Gaussians with any covariance structure (full-rank, factorized as diagonal plus low rank \cite{diagLR}, Householder flows \cite{householderflows}), Student-t, and location-scale and elliptical families. This paper introduces such a control variate.

Our proposed method can be described in two steps. First, given any quadratic function $\hat{f}$ that approximates $f$, we define the control variate as $\E [\nabla_w \hat f(\mathcal{T}_w(\repsilon))] - \nabla_w \hat f(\mathcal{T}_w(\repsilon))$. We show that this control variate is tractable for any distribution with known mean and covariance. Intuitively, the more accurately $\hat{f}$ approximates $f$, the more this will decrease the variance of the original reparameterization estimator $\nabla_w f(\mathcal{T}_w(\repsilon))$. Second, we fit the parameters of $\hat{f}$ through a ``double descent'' procedure aimed at reducing the estimator's variance. 


We empirically show that the use of our control variate leads to reductions in variance several orders of magnitude larger than the state of the art method when diagonal plus low rank or full-rank Gaussians are used as variational distributions. Optimization speed and reliability is greatly improved as a consequence.


\section{Preliminaries}


\textbf{Stochastic Gradient Variational Inference (SGVI).} Take a model $p(x, z)$, where $x$ is observed data and $z$ latent variables. The posterior $p(z|x)$ is often intractable. VI finds the parameters $w$ to approximate the target $p(z|x)$ with the simpler distribution $q_w(z)$ \cite{jordan1999introduction, jaakkola2000bayesian, blei2017variational, zhang2017advances}. It does this by maximizing the "evidence lower bound"
\begin{equation}
\mbox{ELBO}(w) = E_{q_w(\rz)} \log \frac{p(x,\rz)}{q_w(\rz)},
\end{equation}
which is equivalent to minimizing the KL divergence from the approximating distribution $q_w(z)$ to the posterior $p(z|x)$. Using $f(z) = \log p(x, z)$ and letting $\mathcal{H}(w) = -\E_{q_w(z)} \log q_w(z)$ denote the entropy, we can express the ELBO's gradient as
\begin{equation}
\nabla_w \mathrm{ELBO}(w) = \nabla_w \E_{q_w(\rz)} f(\rz) + \nabla_w \mathcal{H}(w).\label{eq:elbo}
\end{equation}

SGVI's idea is that, while the first term from Eq.~\ref{eq:elbo} typically has no closed-form, there are many unbiased estimators that can be used with stochastic optimization algorithms to maximize the ELBO \cite{neuralVI_minh, viasstochastic_jordan, blackbox_blei, doublystochastic_titsias, VIforMCobjectives_mnih, stickingthelanding, generalreparam_blei, overdispersed_blei}. (We assume that the entropy term can be computed in closed form. If it cannot, one can ``absorb'' $\log q_w$ into $f$ and estimate its gradient alongside $f$.) These gradient estimators are usually based on the score function method \cite{williams_reinforce} or the reparameterization trick \cite{vaes_welling, doublystochastic_titsias, rezende2014stochastic}. Since the latter usually provides lower-variance gradients in practice, it is the method of choice whenever applicable. It requires a fixed distribution $q_0(\epsilon)$, and a transformation $\mathcal{T}_w(\epsilon)$ such that if $\repsilon \sim q_0(\epsilon)$, then $\mathcal{T}_w(\repsilon) \sim q_w(z)$. Then, an unbiased estimator for the first term in Eq. \ref{eq:elbo} is given by drawing $\epsilon \sim q_0(\epsilon)$ and evaluating
\begin{equation}
g(w, \epsilon) = \nabla_w f(\mathcal{T}_w(\epsilon)). \label{eq:gbase}
\end{equation}

\textbf{Control Variates.} A control variate is a zero-mean random variable used to reduce the variance of another random variable \cite{mcbook}. Control variates are widely used in SGVI to reduce a gradient estimator's variance \cite{traylorreducevariance_adam, neuralVI_minh, REBAR, backpropvoid, blackbox_blei, cvs, boustati2020amortized}. 
Let $g(w,\epsilon)$ define the base gradient estimator, using random variables $\epsilon$, and let the function $c(w, \epsilon)$ define the control variate, whose expectation over $\epsilon$ is zero. Then, for any scalar $\gamma$ we can get an unbiased gradient estimator as
\begin{equation}
g_\mathrm{cv}(w, \epsilon) = g(w, \epsilon) + \gamma c(w, \epsilon). \label{eq:gcv}
\end{equation}
The hope is that $c$ approximates and cancels the error in the gradient estimator $g$. It can be shown that the optimal weight is\footnote{Since $g$ and $c$ are vectors, the expressions for $\gamma$ and $\Var{g_\mathrm{cv}}$ should be interpreted using $\Var{X} = \E\Vert X \Vert^2 - \Vert\E X \Vert^2$, $\Cov[X,Y]=\E[(X-\E X)^\top (Y- \E Y)]$, and $\mathrm{Corr}[X,Y]=\mathrm{Cov}[X,Y]/\sqrt{\Var[X]\Var[Y]}$} $\gamma = -\Cov[c,g]/\Var[c]$, which results in a variance of $\Var[g_\mathrm{cv}] = \Var[g]\left(1 - \mathrm{Corr}[c,g]^2\right)$. Thus, a good control variate will have high correlation with the gradient estimator (while still being zero mean). In the extreme case that $c = \E[g]- g$, variance would be reduced to zero. In practice, $\gamma$ must be estimated. This can be done approximately using empirical estimates of $\E[c^\top g]$ and $\E[c^\top c]$ from recent evaluations \cite{cvs}.

\section{New Control Variate} \label{sec:newcv}


This section presents our control variate. The goal is to estimate the gradient $\nabla_w \E_{q_w(\rz)} f(\rz)$ with low variance. The core idea behind our method is simple: if $f$ is replaced with a simpler function $\hat f$, a closed-form for $\nabla_w \E_{q_w(\rz)} \hat f(\rz)$ may be available. Then, the control variate is defined as the difference between the term $\nabla_w \E_{q_w(\rz)} \hat f(\rz)$ computed exactly and estimated using reparameterization. 
Intuitively, if the approximation $\hat f$ is good, this control variate will yield large reductions in variance.

We use a quadratic function $\hat{f}$ as our approximation (Sec. \ref{sec:validity}). The resulting control variate is tractable as long as the mean and covariance of $q_w$ are known (Sec. \ref{sec:challenge1}). While this is valid for any quadratic function $\hat{f}$, the effectiveness of the control variate depends on the approximation's quality. We propose to find the parameters of $\hat f$ by minimizing the final gradient estimator's variance $\Var[g+c]$ or a proxy to it (Sec. \ref{sec:challenge2}). We do this via a double-descent scheme to simultaneously optimize the parameters of $\hat f$ alongside the parameters of $q_w$ (Sec. \ref{sec:finalalg}).

\subsection{Definition, Validity, and Motivation} \label{sec:validity}

Given a function $\hat f_v$ that approximates $f$, we define the control variate as
\begin{equation}
c_v(w, \epsilon) = \nabla_w \E_{q_w(\rz)} \left[ \hat f_v(\rz) \right] - \nabla_w \hat f_v(\mathcal{T}_w(\epsilon)). \label{eq:cv}
\end{equation}
Since the second term is an unbiased estimator of the first one, $c_v(w, \epsilon)$ has expectation zero and thus represents a valid control variate. To understand the motivation behind this control variate consider the final gradient estimator, 
\begin{equation}
    g_\mathrm{cv}(w, \epsilon) = g(w,\epsilon) + \gamma c_v(w,\epsilon) = \underbrace{\gamma \nabla_w \E_{q_w(\rz)} \left[ \hat f_v(\rz) \right]}_{\text{ deterministic term}} + \underbrace{\vphantom{\E_{q_w(\rz)} \left[ \hat f_v(\rz) \right]}\nabla_w \left( f(\mathcal{T}_w(\epsilon)) - \gamma \hat f_v(\mathcal{T}_w(\epsilon)) \right)}_{\text{stochastic term}}.
\end{equation}
Intuitively, making $\hat f_v$ a better approximation of $f$ will tend to make the stochastic term smaller, thus reducing the estimator's variance.  We propose to set the approximating function to be a quadratic parameterized by $v$ and $z_0$,
\begin{equation}
\hat{f}_v(z) = b_v^\top (z - z_0) + \frac{1}{2} (z - z_0)^\top B_v (z - z_0), \label{eq:fhat}
\end{equation}
where $b_v$ and $B_v$ are a vector and a square matrix parameterized by $v$, and $z_0$ is a vector. (We avoid including an additive constant in the quadratic since it would not affect the gradient.)

\subsection{Tractability of the Control Variate} \label{sec:challenge1}

We now consider computational issues associated with the control variate from Eq. \ref{eq:cv}. Our first result is that, given $b_v, B_v$ and $z_0$, the control variate is tractable for any distribution with known mean and covariance. We begin by giving a closed-form for the expectation in Eq. \ref{eq:cv} (proven in Appendix \ref{app:proof}).

\begin{lemma} \label{prop:closedE}
Let $\hat f_v$ be defined as in Eq. \ref{eq:fhat}. If $q_w$ has mean $\mu_w$ and covariance $\Sigma_w$, then
\begin{equation}
\E_{q_w(\rz)} \hat f_v(\rz) = b_v^\top(\mu_w - z_0) + \frac{1}{2} \mathrm{tr}(B_v \Sigma_w)
+ \frac{1}{2}\big(\mu_w^\top B_v \mu_w - z_0^\top B_v \mu_w - \mu_w^\top B_v z_0 + z_0^\top B_v z_0 \big). \label{eq:closedE}
\end{equation}
\end{lemma}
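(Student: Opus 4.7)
The plan is to prove the identity by a direct computation that simply uses linearity of expectation together with the standard quadratic-form identity $\E[\rz^\top A \rz] = \mu_w^\top A \mu_w + \mathrm{tr}(A\Sigma_w)$ when $\rz$ has mean $\mu_w$ and covariance $\Sigma_w$. The lemma asserts no probabilistic inequality, just an algebraic consequence of knowing the first two moments of $q_w$, so the proof should be almost mechanical.

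First I would split $\hat f_v$ into its linear and quadratic pieces. By linearity, $\E_{q_w(\rz)} \hat f_v(\rz) = \E\,[b_v^\top(\rz - z_0)] + \tfrac{1}{2}\E\,[(\rz - z_0)^\top B_v (\rz - z_0)]$. The first term is immediate: $\E\,[b_v^\top(\rz - z_0)] = b_v^\top(\mu_w - z_0)$, which already matches the first summand on the right hand side of Eq.~\ref{eq:closedE}.

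Second, I would expand the quadratic piece as
\begin{equation}
(\rz - z_0)^\top B_v (\rz - z_0) = \rz^\top B_v \rz - z_0^\top B_v \rz - \rz^\top B_v z_0 + z_0^\top B_v z_0,
\end{equation}
and take expectations term by term. The last three are linear in $\rz$ and give $-z_0^\top B_v \mu_w$, $-\mu_w^\top B_v z_0$, and $z_0^\top B_v z_0$. For the first term I would invoke the standard identity $\E\,[\rz^\top B_v \rz] = \mu_w^\top B_v \mu_w + \mathrm{tr}(B_v \Sigma_w)$, which follows from writing $\rz = \mu_w + (\rz - \mu_w)$, expanding, using $\E[\rz - \mu_w] = 0$ to kill the cross terms, and using $\E\,[(\rz - \mu_w)^\top B_v (\rz - \mu_w)] = \mathrm{tr}(B_v\,\E[(\rz-\mu_w)(\rz-\mu_w)^\top]) = \mathrm{tr}(B_v \Sigma_w)$. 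Multiplying the sum by $1/2$ then collects exactly the remaining terms in Eq.~\ref{eq:closedE}, and adding back the linear piece finishes the proof.

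There is no real obstacle here; the only thing worth being careful about is not assuming $B_v$ is symmetric, since the statement does not require it. Keeping both cross terms $-z_0^\top B_v \mu_w$ and $-\mu_w^\top B_v z_0$ separate in the final expression (as the lemma does) avoids this issue and makes the derivation valid for arbitrary $B_v$.
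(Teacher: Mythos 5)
Your proposal is correct and follows essentially the same route as the paper's proof: a direct computation splitting $\hat f_v$ into linear and quadratic parts, then reducing the quadratic part to $\mathrm{tr}(B_v\Sigma_w)$ plus cross terms via the cyclic trace identity and the decomposition $\rz = \mu_w + (\rz-\mu_w)$. The only cosmetic difference is that you expand $(\rz-z_0)^\top B_v(\rz-z_0)$ before applying the trace trick to the single term $\E[\rz^\top B_v\rz]$, whereas the paper applies the trace to the whole shifted quadratic form first; your remark about not assuming $B_v$ symmetric is a sensible precaution and consistent with the stated result.
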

If we substitute this result into Eq. \ref{eq:cv}, we can easily use automatic differentiation tools to compute the gradient with respect to $w$, and thus compute the control variate. Therefore, our control variate can be easily used for any reparameterizable distribution $q_w$ with known mean and covariance matrix. These include fully-factorized Gaussians, Gaussians with arbitrary full-rank covariance, Gaussians with structured covariance (e.g. diagonal plus low rank \cite{diagLR}, Householder flows \cite{householderflows}), Student-t distributions, and, more generally, distributions in a location scale family or elliptical family.

\textbf{Computational cost.} The cost of computing the control variate depends on the cost of computing matrix-vector products (with matrix $B_v$) and the trace of $B_v \Sigma_w$ (see Eqs.~\ref{eq:fhat} and \ref{eq:closedE}). These costs depend on the structure of $B_v$ and $\Sigma_w$. We consider the case where $B_v$ and $\Sigma_w$ are parameterized as diagonal plus low rank matrices, with ranks $r_v$ and $r_w$, respectively. Then, computing the control variate has cost $\mathcal{O}(d \, (1 + r_v) \, (1 + r_w))$, where $d$ is the dimensionality of $z$.


Notice that the cost of evaluating the reparameterization estimator $g(w,\epsilon)$ is at least $\mathcal{O}(d (1 + r_w))$, since $\Sigma_w$ has $d (1 + r_w)$ parameters. However, constant factors here are usually significantly higher than for the control variate, since evaluating $f$ requries a pass through a dataset. Thus, as long as $r_v$ is ``small'', the control variate does not affect the algorithm's overall scalability.

These complexity results extend to cases where $B_v$ and/or $\Sigma_w$ are diagonal or full-rank matrices by replacing the corresponding rank, $r_v$ or $r_w$, by $0$ or $d$. For example, if $\Sigma_w$ is a full-rank matrix and $B_v$ is a diagonal plus rank-$r_v$, the control variate's cost is $\mathcal{O}(d^2 r_v)$. If both matrices are full-rank and $\Sigma_w$ is parameterized by its Cholesky factor $L$, the cost is $\mathcal{O}(d^3)$. This cubic cost comes entirely from instantiating $\Sigma_w = L L^\top$, all other costs are $\mathcal{O}(d^2)$.

\subsection{Constructing the Quadratic Approximation} \label{sec:challenge2}

The results in the previous section hold for any quadratic function $\hat f_v$. However, for the control variate to reduce variance, it is important that $\hat f_v$ is a good approximation of $f$. This section proposes two methods to find such an approximation.


A natural idea would be to use a Taylor approximation of $f$ \cite{traylorreducevariance_adam, viasstochastic_jordan}. However, as we discuss in Section \ref{sec:miller4}, this leads to serious computational challenges (and is suboptimal). Instead, we will directly seek parameters $v$ that minimize the variance of the final gradient estimator $g_\mathrm{cv}$. For a given set of parameters $w$, we set $z_0 = \mu_w$ and find the parameters $v$ by minimizing an objective $\mathcal{L}_w(v)$. We present two different objectives that can be used: 

\textbf{Method 1.} Find $v$ by minimizing the variance of the final gradient estimator (assuming $\gamma=1$),
\begin{equation}
\mathcal{L}_w(v) = \Var [g(w, \repsilon) + c_v(w, \repsilon)].\label{eq:L1}
\end{equation}
Using a sample $\epsilon \sim q_0(\epsilon)$ an unbiased estimate of $\nabla_v \mathcal{L}_w(v)$ can be obtained as
\begin{equation}
h_w(\epsilon, v) = \nabla_v \Vert g(w, \repsilon) + c_v(w, \repsilon) \Vert^2. \label{eq:estv1}
\end{equation}
\textbf{Method 2.} While the above method works well, it imposes a modest constant factor overhead, due to the need to differentiate through the control variate. As an alternative, we propose a simple proxy. The motivation is that the difference between the base gradient estimator and its approximation based on $\hat f_v$ is given by
\small
\begin{equation}
\nabla_w f(\mathcal{T}_w(\repsilon)) - \nabla_w \hat f_v(\mathcal{T}_w(\repsilon)) = \left( \frac{d\,\mathcal{T}_w(\repsilon)}{d\,w} \right)^\top \left( \nabla f(\mathcal{T}_w(\repsilon)) - \nabla \hat f_v(\mathcal{T}_w(\repsilon)) \right).
\end{equation}\normalsize
Thus, the closer $\nabla \hat f_v(z)$ is to $\nabla f(z)$, the better the control variate $c_v$ can approximate and cancel estimator $g$'s noise. Accordingly, we propose the proxy objective 
\begin{equation}
\mathcal{L}_w(v) = \frac{1}{2} \E_{q_0(\repsilon)} ||\nabla f(\Tr) - \nabla \hat f_v(\Tr)||^2. \label{eq:L2}
\end{equation}
Using a sample $\epsilon \sim q_0(\epsilon)$ an unbiased estimate of $\nabla_v \mathcal{L}_w(v)$ can be obtained as 
\begin{equation}
h_w(\epsilon, v) = \frac{1}{2} \nabla_v ||\nabla f(\Tr) - \nabla \hat f_v(\Tr)||^2. \label{eq:estv2}
\end{equation}
We observed that both methods lead to reductions in variance of similar magnitude (see Fig. \ref{fig:results_underover_s4} for a comparison). However, the second method introduces a smaller overhead. 


The idea of using a double-descent scheme to minimize gradient variance was explored in previous work. It has been done to set the parameters of a sampling distribution \cite{overdispersed_blei}, and to set the parameters of a control variate for discrete latent variable models \cite{backpropvoid, REBAR} using a continuous relaxation for discrete distributions \cite{gumbel1, gumbel2}.

\subsection{Final Algorithm} \label{sec:finalalg}

This section presents an efficient algorithm to use our control variate for SGVI. The approach involves maximizing the ELBO and finding a good quadratic approximation $\hat f_v$ simultaneously, via a double-descent scheme. We maximize the ELBO using stochastic gradient ascent with the gradient estimator from Eq. \ref{eq:gbase} and our control variate for variance reduction. Simultaneously, we find an approximation $\hat f_v$ by minimizing $\mathcal{L}_w(v)$ using stochastic gradient descent with the gradient estimators from Eq. \ref{eq:estv1} or \ref{eq:estv2}. Our procedure, summarized in Alg. \ref{alg:final}, involves alternating steps of each optimization process. Notably, optimizing $v$ as in Alg. \ref{alg:final} does not involve extra likelihood evaluations, since the model evaluations used to estimate the ELBO's gradient are re-used to estimate $\nabla_{v} \mathcal{L}_w(v)$.

Alg. \ref{alg:final} includes the control variate weight $\gamma$. This is useful in practice, specially at the beginning of training, when $v$ is far from optimal and $\hat f_v$ is a poor approximation of $f$. The (approximate) optimal weight can be obtained by keeping estimates of $\E[c(w,\epsilon)^\top g(w, \epsilon)]$ and $\E[c(w,\epsilon)^\top c(w,\epsilon)]$ as optimization proceeds \cite{cvs}.


\begin{algorithm}[t]
    \setstretch{1.1}
    \begin{small}
    \caption{SGVI with the proposed control variate.}
    \label{alg:final}
    \begin{algorithmic}
    \REQUIRE Learning rates $\alpha^{(w)}, \alpha^{(v)}$.
    \STATE Initialize $w_0$, $v_0$ and control variate weight $\gamma = 0$.
    \FOR{$k = 1, 2, \cdots$}
    \STATE Sample $\epsilon \sim q_0$ and compute $z = \mathcal{T}_{w_k}(\epsilon)$.
    \STATE Compute estimator and control variate $g = g(w_k, \epsilon)$, $c = c_{v_k}(w_k, \epsilon)$. \hfill (Eqs. \ref{eq:gbase} and \ref{eq:cv})
    \STATE Take primary step as $w_{k+1} \leftarrow w_k + \alpha^{(w)} (g+ \gamma c)$.
    \STATE Update $\gamma$ to minimize empirical $\Var[g + \gamma c]$. \hfill (Sec. \ref{sec:finalalg})
    \STATE Compute control variate gradient estimator $h = h_w(\epsilon, v_k)$. \hfill (Eq. \ref{eq:estv1} or \ref{eq:estv2})
    \STATE Take dual step as $v_{k+1} \leftarrow v_k - \alpha^{(v)} h$.
    \ENDFOR
    \end{algorithmic}
    \end{small}
\end{algorithm}

\section{Comparison of Approximations} \label{sec:miller4}




\textbf{Taylor-Based Approximations.} There is closely related work exploring Taylor-expansion based control variates for reparameterization gradients \cite{traylorreducevariance_adam}. These control variates can be expressed as
\small
\begin{equation}
\small c(w, \epsilon) = \E_{q_{0}(\repsilon)} \left[\left(\frac{d\,\mathcal{T}_w(\repsilon)}{d\,w}\right)^\top \nabla \hat{f}(\Tr) \right] - \left(\frac{d\,\mathcal{T}_w(\repsilon)}{d\,w}\right)^\top \nabla \hat{f}(\Tr). \label{eq:mcv-m}
\end{equation}\normalsize
This is similar to Eq. \ref{eq:cv}. The difference is that, here, the approximation $\hat f$ is set to be a Taylor expansion of $f$. In general this leads to an intractable control variate: the expectation may not be known, or the Taylor approximation may be intractable (e.g. requires computing Hessians). However, in some cases, it can be computed efficiently. For this discussion we focus on Gaussian variational distributions, where the parameters $w$ are the mean and scale. 


For the gradient with respect to the mean parameters, $\hat{f}(z)$ can be set to be a second-order Taylor expansion of $f(z)$ around the current mean. This might appear to be problematic, since computing the Hessian of $f$ will be intractable in general. However, it turns out that, for the mean parameters, this leads to a control variate that can be computed using only Hessian-vector products. This was first observed by Miller et al. \cite{traylorreducevariance_adam} for diagonal Gaussians.

For the scale parameters, even with a diagonal Gaussian, using a second-order Taylor expansion requires the diagonal of the Hessian, which is intractable in general. For this reason, Miller et al. \cite{traylorreducevariance_adam} propose an approach equivalent\footnote{The original paper \cite{traylorreducevariance_adam} describes the control variate for the scale parameters as using a second-order Taylor expansion, and then applies an additional approximation based on a minibatch to deal with intractable Hessian computations. In Appendix \ref{app:miller} we show these formulations are exactly equivalent.} to setting  $\hat f$ to a {\em first}-order Taylor expansion, so that $\nabla \hat{f}$ is constant. 


The biggest drawback of Taylor-based control variates is that the crude first-order Taylor approximation used for the scale parameters provides almost no variance reduction. Interestingly, this seems to pose very little problem with diagonal Gaussians. This is because, in this case, the gradient with respect to the mean parameters typically contribute almost all the variance. However, this approach may be useless in some other situations: With non-diagonal distributions, the scale parameters often contribute the majority of the variance (see Fig.~\ref{fig:results_underover_s4}).

A second drawback is that even a second-order Taylor expansion is not optimal. A Taylor expansion provides a good \textit{local} approximation, which may be poor for distributions $q_w$ with large variance.

\textbf{Demonstration.} Fig.~\ref{fig:results_underover_s4} compares four gradient estimators on a Bayesian logistic regression model (see Sec. \ref{sec:results}): plain reparameterization, reparameterization with a Taylor-based control variate, and reparameterization with our control variate (minimizing Eq.~\ref{eq:L1} or Eq.~\ref{eq:L2}, using a diagonal plus rank-$10$ matrix $B_v$). The variational distribution is either a diagonal Gaussian or a Gaussian with arbitrary full-rank covariance. We set the mean \small$\mu_w=0$\normalsize\ and covariance \small$\Sigma_w = \sigma^2 I$\normalsize. We measure each estimator's variance for different values of $\sigma$. For transparency, in all cases we use a fixed weight $\gamma=1$.

There are four key observations: (i) Our variance reduction for the mean parameters is somewhat better than a Taylor approximation (which even increases variance in some cases). This is not surprising, since a Taylor expansion was never claimed to be optimal; (ii) Our control variate is vastly better for the scale parameters; (iii) the variance for fully-factorized distributions is dominated by the mean, while the variance for full-covariance distributions it is dominated by the scale; (iv) the proxy for the gradient variance (Eq.~\ref{eq:L1}) performs extremely similarly to the true gradient variance (Eq.~\ref{eq:L2}).

\begin{figure}[t]
    \begin{center}
    \includegraphics[scale=0.3, trim={0cm, 1.9cm, 0cm, 0cm}, clip]{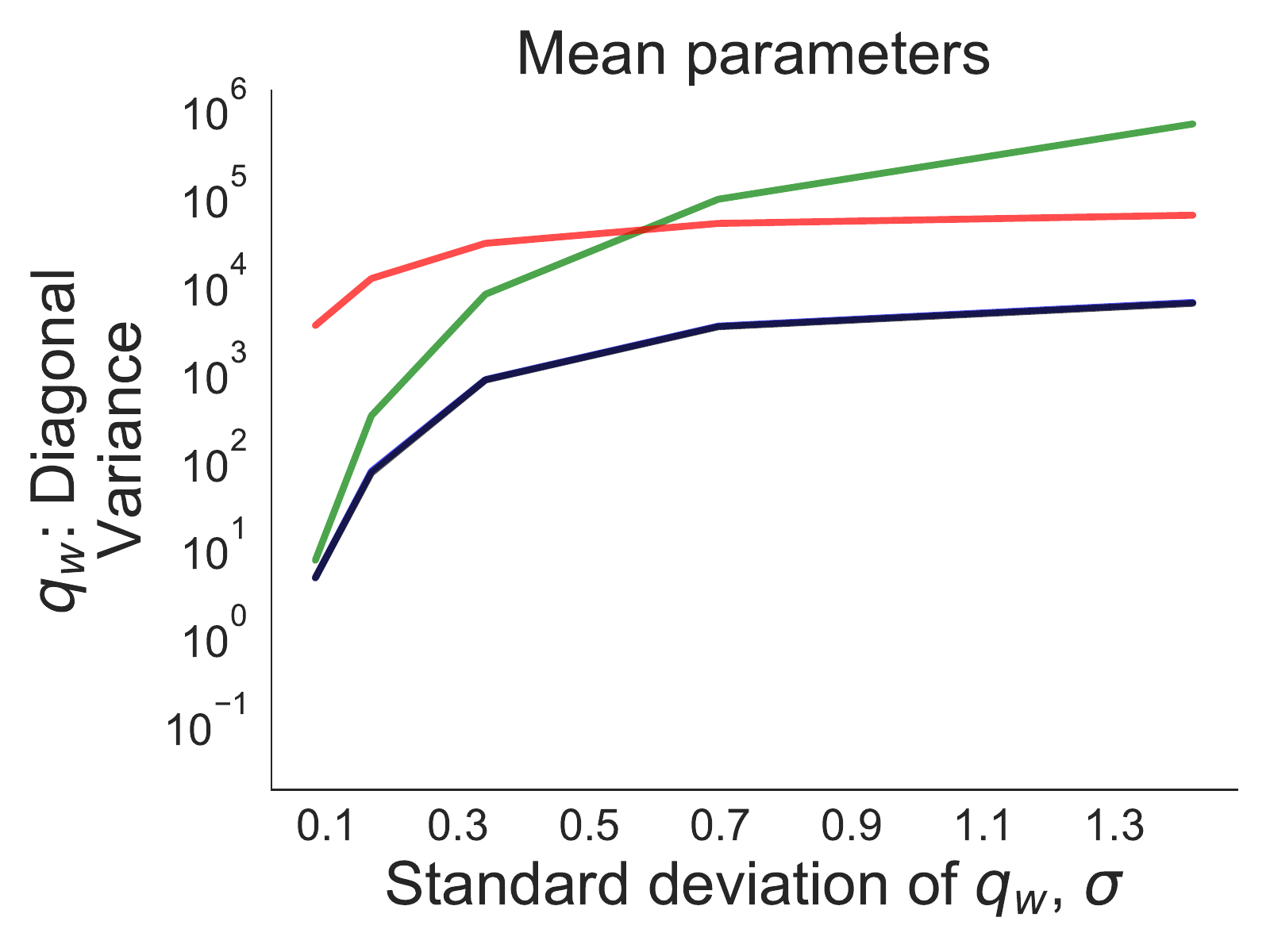}
    \includegraphics[scale=0.3, trim={3.3cm, 1.9cm, 0cm, 0cm}, clip]{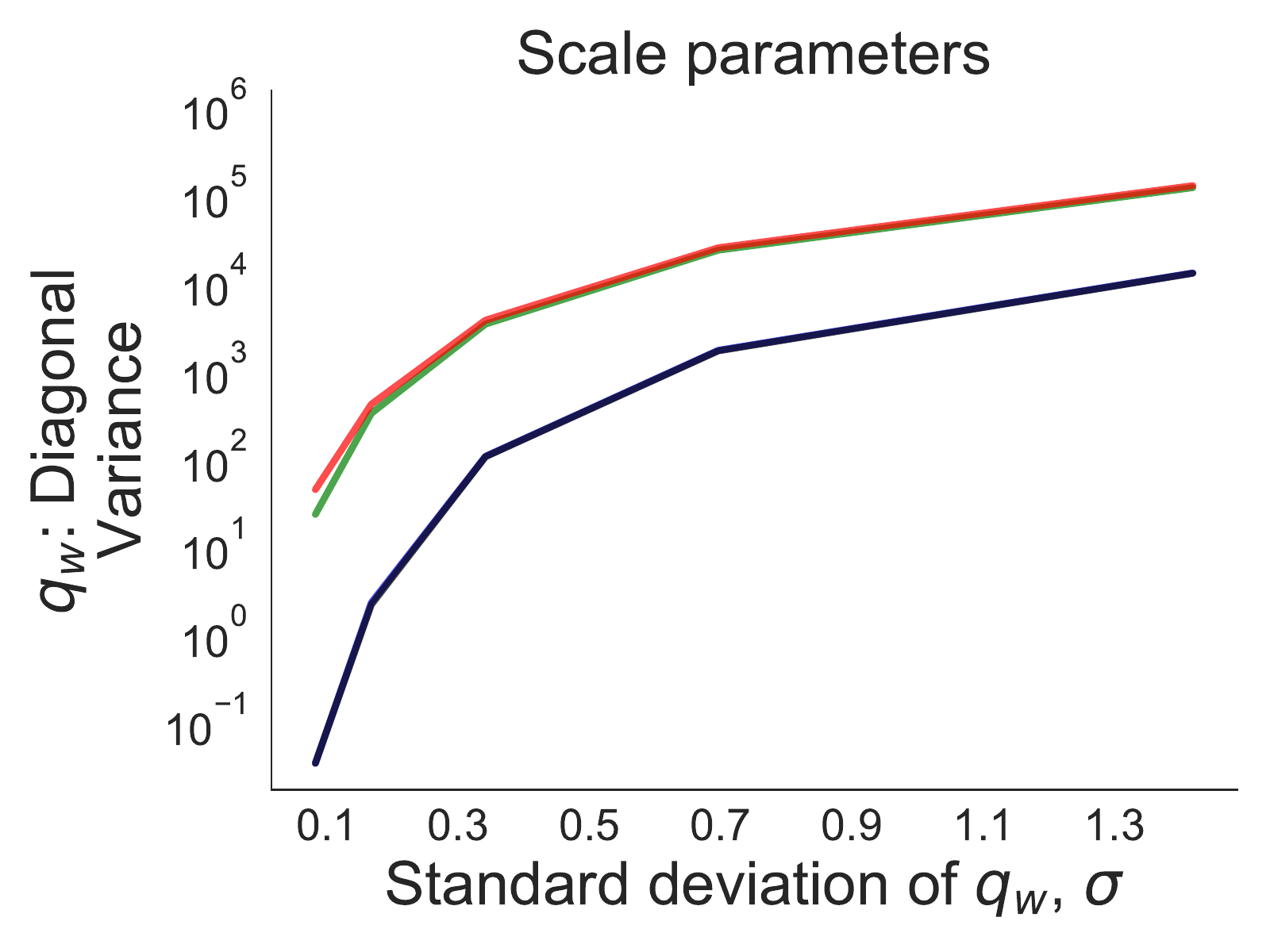}
    \includegraphics[scale=0.3, trim={3.3cm, 1.9cm, 0cm, 0cm}, clip]{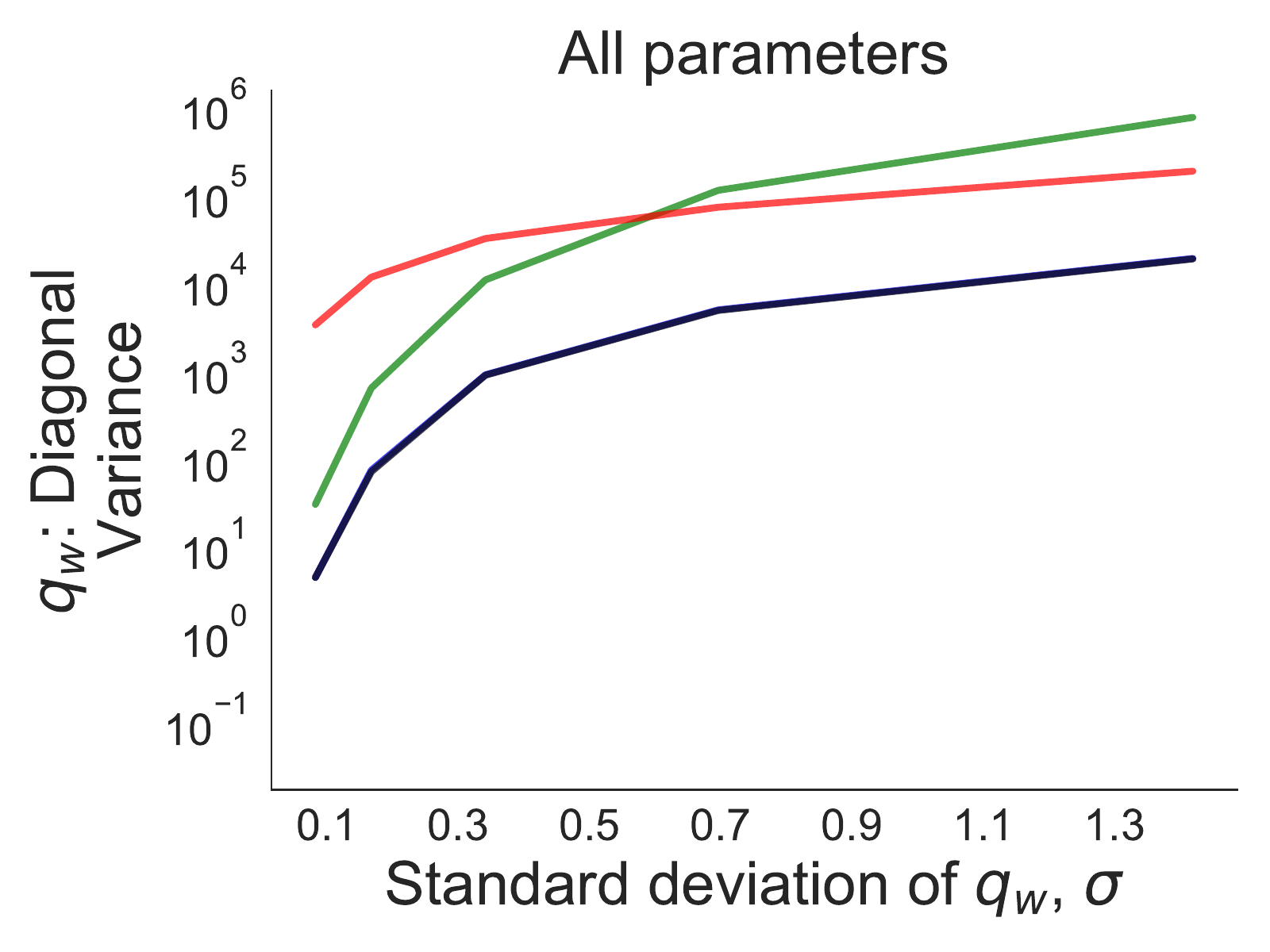}
    
    \hspace{0.13cm}
    \includegraphics[scale=0.3, trim={0cm, 0cm, 0cm, 1.1cm}, clip]{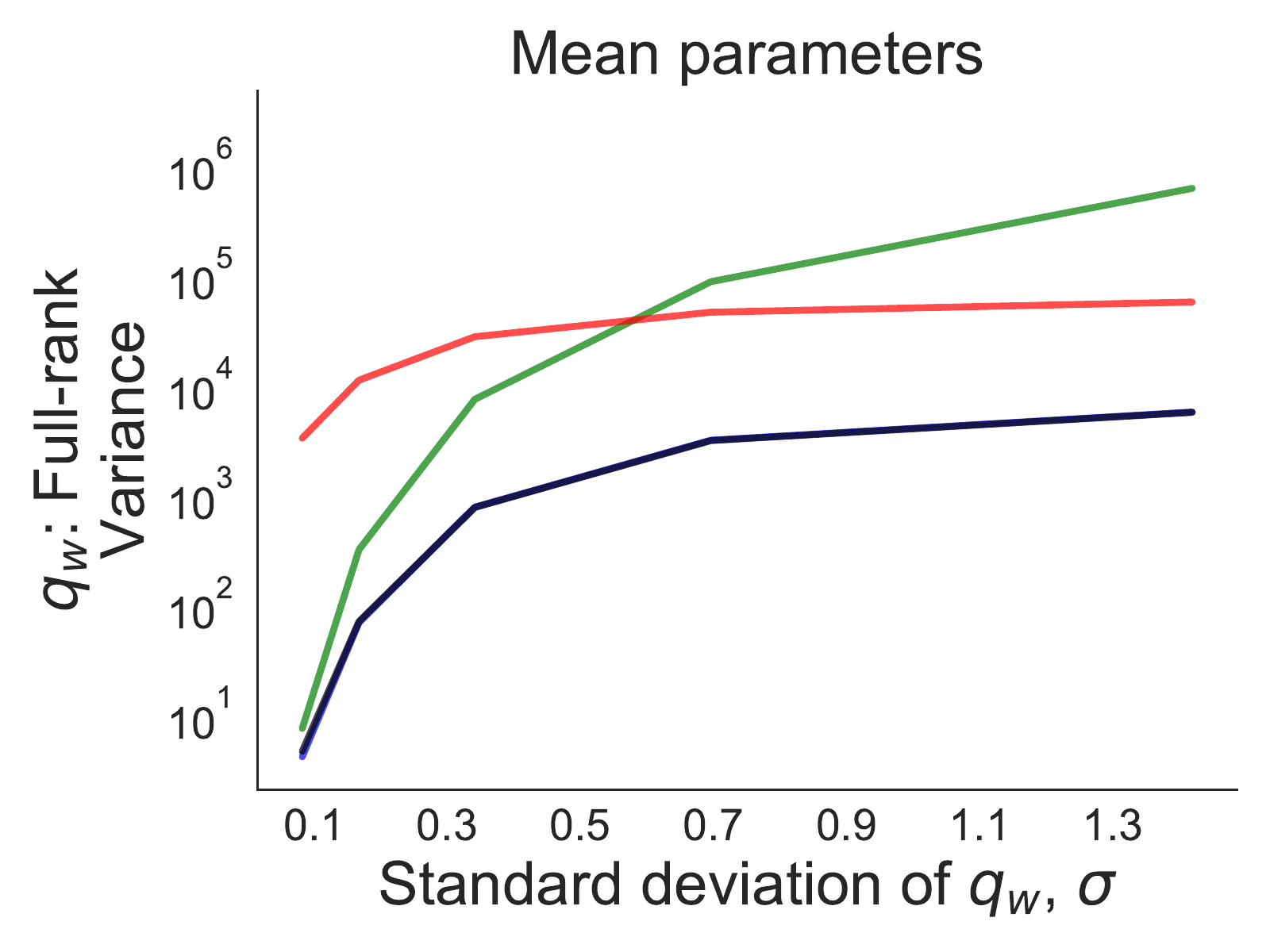}
    \includegraphics[scale=0.3, trim={3.2cm, 0cm, 0cm, 1.1cm}, clip]{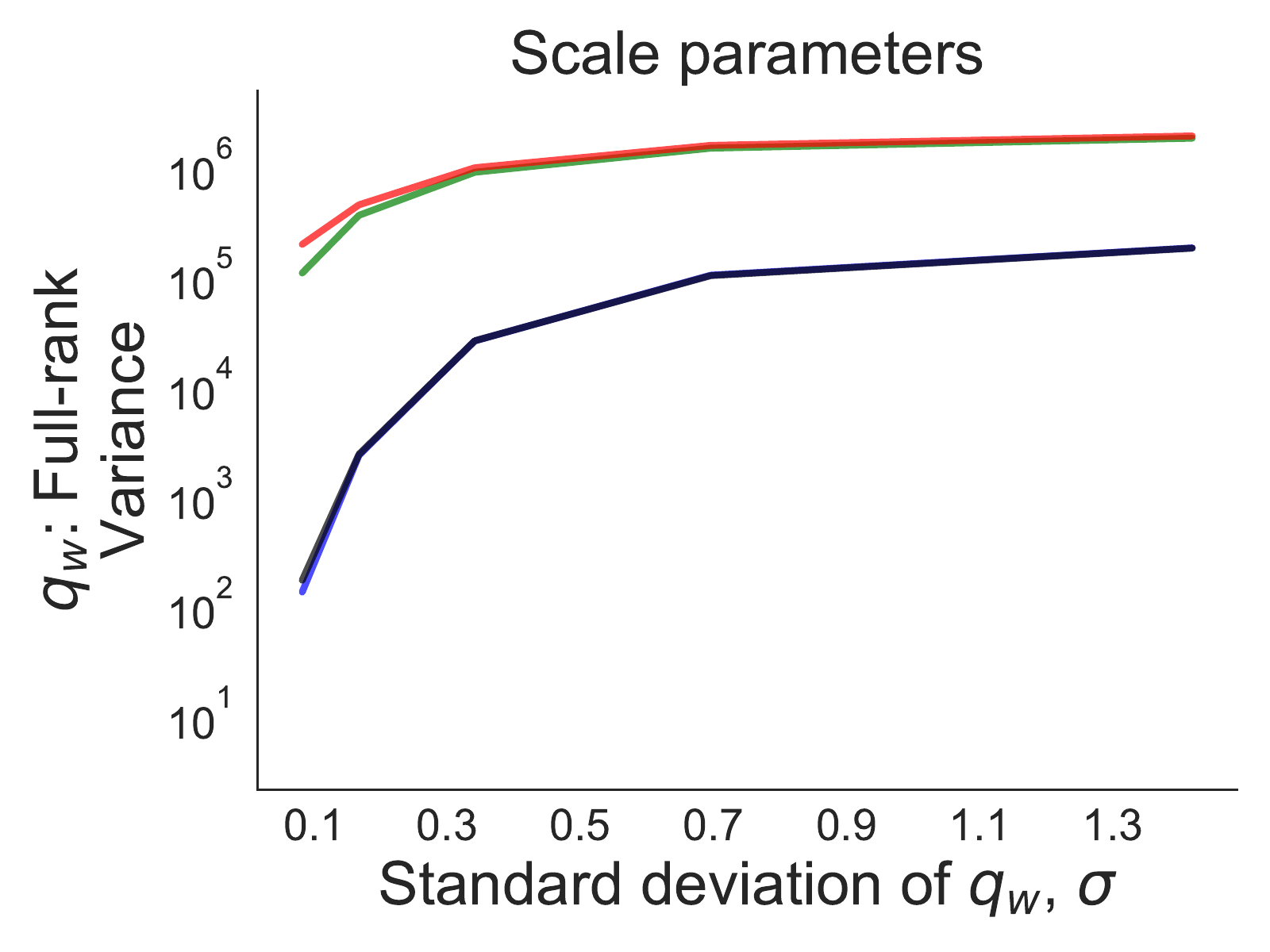}
    \includegraphics[scale=0.3, trim={3.2cm, 0cm, 0cm, 1.1cm}, clip]{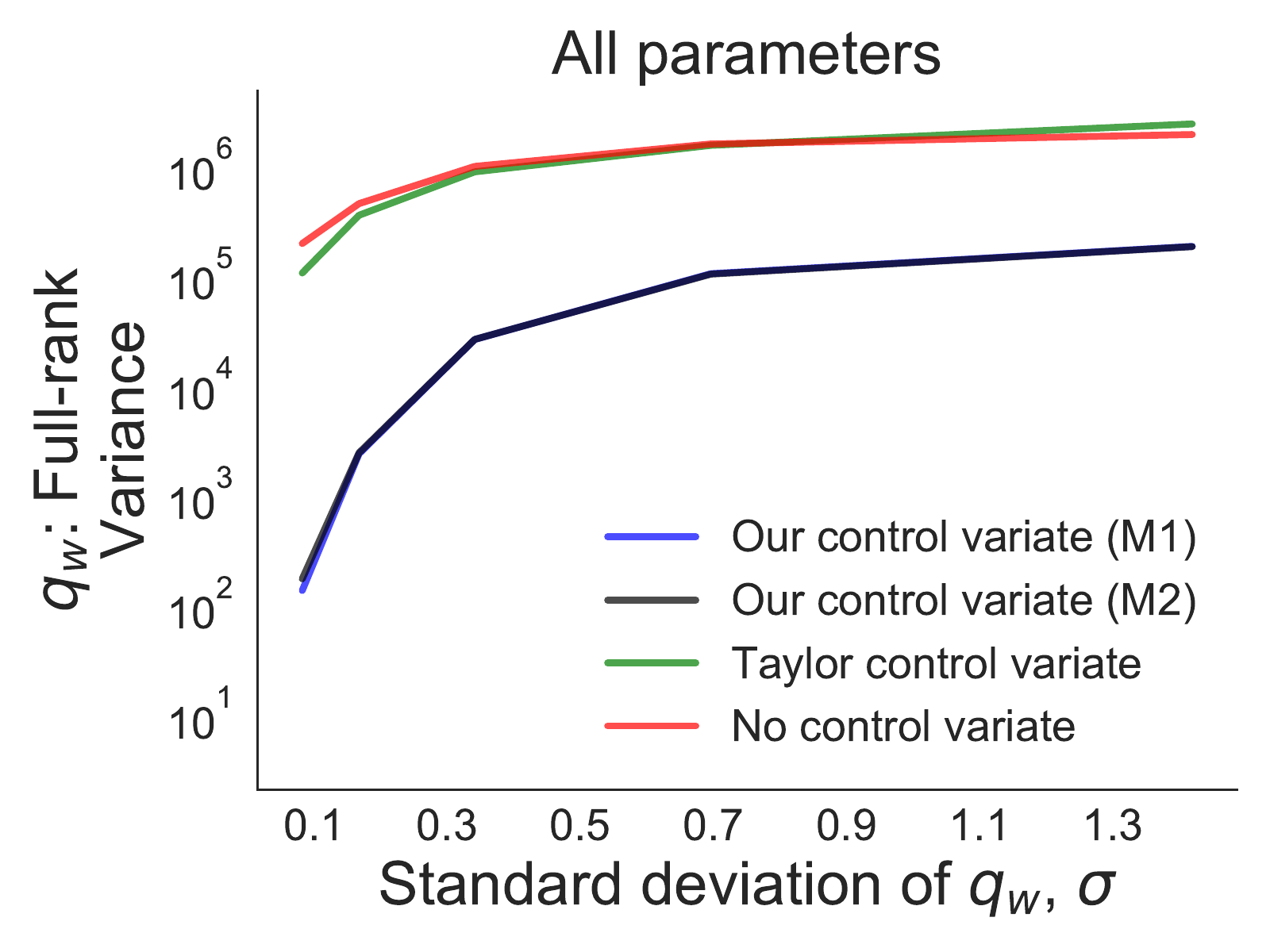}
    \caption{\textbf{The new control variate improves variance, particularly for scale parameters.} Variance of different gradient estimators on a Bayesian logistic regression model for a variational distribution with mean zero and covariance $\sigma^2 I$ for varying $\sigma$.
    The mean parameters (where Miller's approach often works well) dominate the variance for fully-factorized distributions, while the scale parameters (where Miller's approach does little) dominate for full-rank Gaussians. Method 1 (M1) and Method 2 (M2) to find the parameters of our control variate perform extremely similarly.}
    \label{fig:results_underover_s4}
    \end{center}
\end{figure}

It should be emphasized that, for this analysis, the parameters $v$ are trained to completion for each value of $\sigma$. This does not exactly reflect what would be expected in practice, where the dual-descent scheme "tracks" the optimal $v$ as $w$ changes. Experiments in the next section consider this practical setting.



\section{Experiments and Results} \label{sec:results}

We present results that empirically validate the the proposed control variate and algorithm. We perform SGVI on several probabilistic models using different variational distributions. We maximize the ELBO using the reparameterization estimator with the proposed control variate to reduce its variance (Alg.~\ref{alg:final}). We compare against optimizing using the reparameterization estimator without any control variates, and against optimizing using a Taylor-based control variates for variance reduction.

\subsection{Experimental details} \label{sec:exp_details}

\textbf{Tasks and datasets:} We use three different models: Logistic regression with the \textit{a1a} dataset, hierarchical regression with the \textit{frisk} dataset \cite{frisk}, and a Bayesian neural network with the \textit{red wine} dataset. The latter two are the ones used by Miller et al. \cite{traylorreducevariance_adam}. (Details for each model in App.~\ref{app:models}.)

\textbf{Variational distribution:} We consider diagonal Gaussians parameterized by the log-scale parameters, and diagonal plus rank-$10$ Gaussians, whose covariance is parameterized by a diagonal component $D$ and a factor $F$ of shape $d \times 10$ (i.e. \small$\Sigma_w = D + F F^\top$\normalsize) \cite{diagLR}. For the simpler models, logistic regression and hierarchical regression, we also consider full-rank Gaussians parameterized by the Cholesky factor of the covariance.

\textbf{Algorithmic details:} We use Adam \cite{adam} to optimize the parameters $w$ of the variational distribution $q_w$ (with step sizes between $10^{-5}$ and $10^{-2}$). We use Adam with a step size of $0.01$ to optimize the parameters $v$ of the control variate, by minimizing the proxy to the variance from Eq. \ref{eq:L2}. We parameterize $B_v$ as a diagonal plus rank-$r_v$. We set $r_v = 10$ when diagonal or diagonal plus low rank variational distributions are used, and $r_v = 20$ when a full-rank variational distribution is used. (We show results using other ranks in Appendix~\ref{sec:otherranks}.)

\textbf{Baselines considered:} We compare against optimization using the base reparameterization estimator (Eq. \ref{eq:gbase}). We also compare against using Taylor-based control variates. (We generalize the Taylor approach to full-rank and diagonal plus low-rank distributions in Appendix \ref{app:millerdrawbacks}.) For all control variates we find the (approximate) optimal weight using the method from Geffner and Domke \cite{cvs} (fixing the weight to 1 lead to strictly worse results). We use $M = 10$ and $M = 50$ samples from $q_w$ to estimate gradients.

We show results in terms of iterations and wall-clock time. Table \ref{tab:costs} shows the per iteration time-cost of each method in our experiments. Our method's overhead is around 50\%, while the Taylor approach has an overhead of around 150\%. These numbers depend on the implementation and platform, but should give a rough estimate of the overhead in practice (we use PyTorch 1.1.0 on an Intel i5 2.3GHz).

\begin{table}[]
\centering
\begin{tabular}{lllllllll}
\toprule
\multirow{2}{*}{\#Samples} & \multirow{2}{*}{Model} & \multicolumn{3}{c}{$q_w$: Diag plus low rank} & & \multicolumn{3}{c}{$q_w$: full-rank covariance} \\ \cmidrule{3-5} \cmidrule{7-9}
                          &          & Base      & Our CV & Taylor  && Base   & Our CV & Taylor   \\ \midrule  
\multirow{3}{*}{$M = 10$} & Hierarchical  & $4.4$     & $6.4$  & $10.8$  && $3.9$  & $6.0$  & $10.2$   \\
                          & Logistic & $3.8$     & $6.3$  & $7.7$   && $4.9$  & $8.1$  & $9.7$    \\ 
                          & BNN      & $11.1$    & $16.2$ & $31.2$  && $-$    & $-$    & $-$      \\ \midrule  
\multirow{3}{*}{$M = 50$} & Hierarchical  & $5.8$     & $8.3$  &  $12.8$ && $4.9$  & $7.4$  & $11.7$   \\
                          & Logistic & $8.1$     & $11$   &  $16.5$ && $14.2$ & $20.1$ & $32.1$    \\ 
                          & BNN      & $17.3$    & $25.6$ &  $48.4$ && $-$    & $-$    & $-$      \\ \bottomrule
\end{tabular}
\caption{Cost (milliseconds) of performing one optimization step using no control variates (Base), a Taylor-based control variate (Taylor), and our control variate (Our CV). For the latter, one step involves computing the gradient, control variate, and updating the parameters $v$. For reference, computing the Hessian of $f$ takes $131, 146 \mbox{ and } 2883$ milliseconds for the hierarchical regression, logistic regression and Bayesian neural network models. As expected, because of these high costs, using a second order Taylor-based control variate for the scale parameters is not practical.}
\label{tab:costs}
\end{table}

\subsection{Results}

Fig.~\ref{fig:opt1} shows optimization results for the diagonal plus low rank Gaussian variational distribution. The two leftmost columns show ELBO vs. iteration plots for two specific learning rates. The third column shows, for each method and iteration, the ELBO for the best learning rate chosen retrospectively. In all cases, our method improves over competing approaches. In fact, our method with $M = 10$ samples to estimate the gradients performs better than competing approaches with $M = 50$. On the other hand, Taylor-based control variates give practically no improvement over using the base estimator alone. This is because most of the gradient variance comes from estimating the gradient with respect to the scale parameters, for which Taylor-based control variates do little.

\begin{figure}[ht!]
    \begin{center}
    \includegraphics[scale=0.29,trim={0 0 0 0},clip]{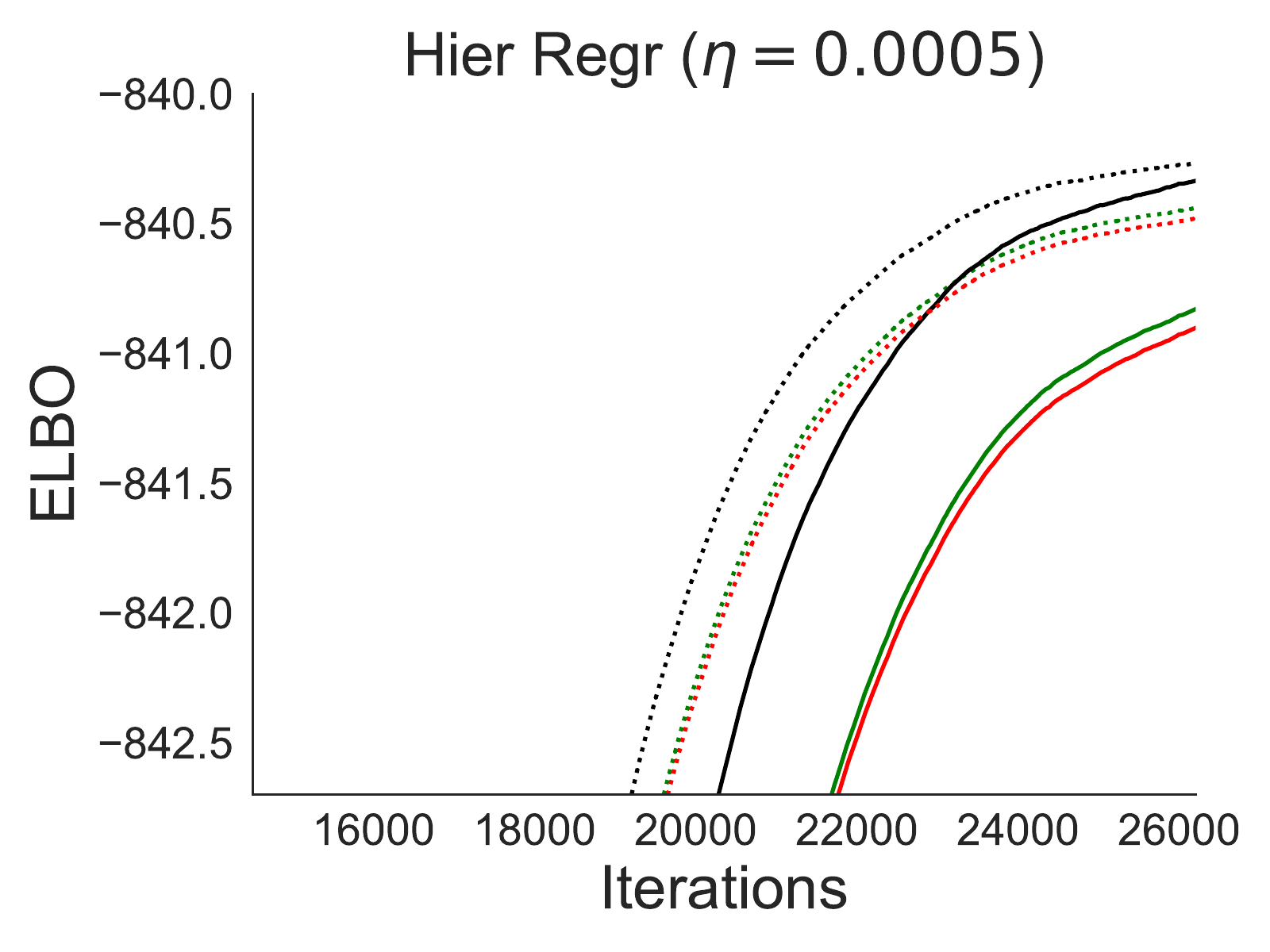}
    \includegraphics[scale=0.29,trim={3cm 0 0 0},clip]{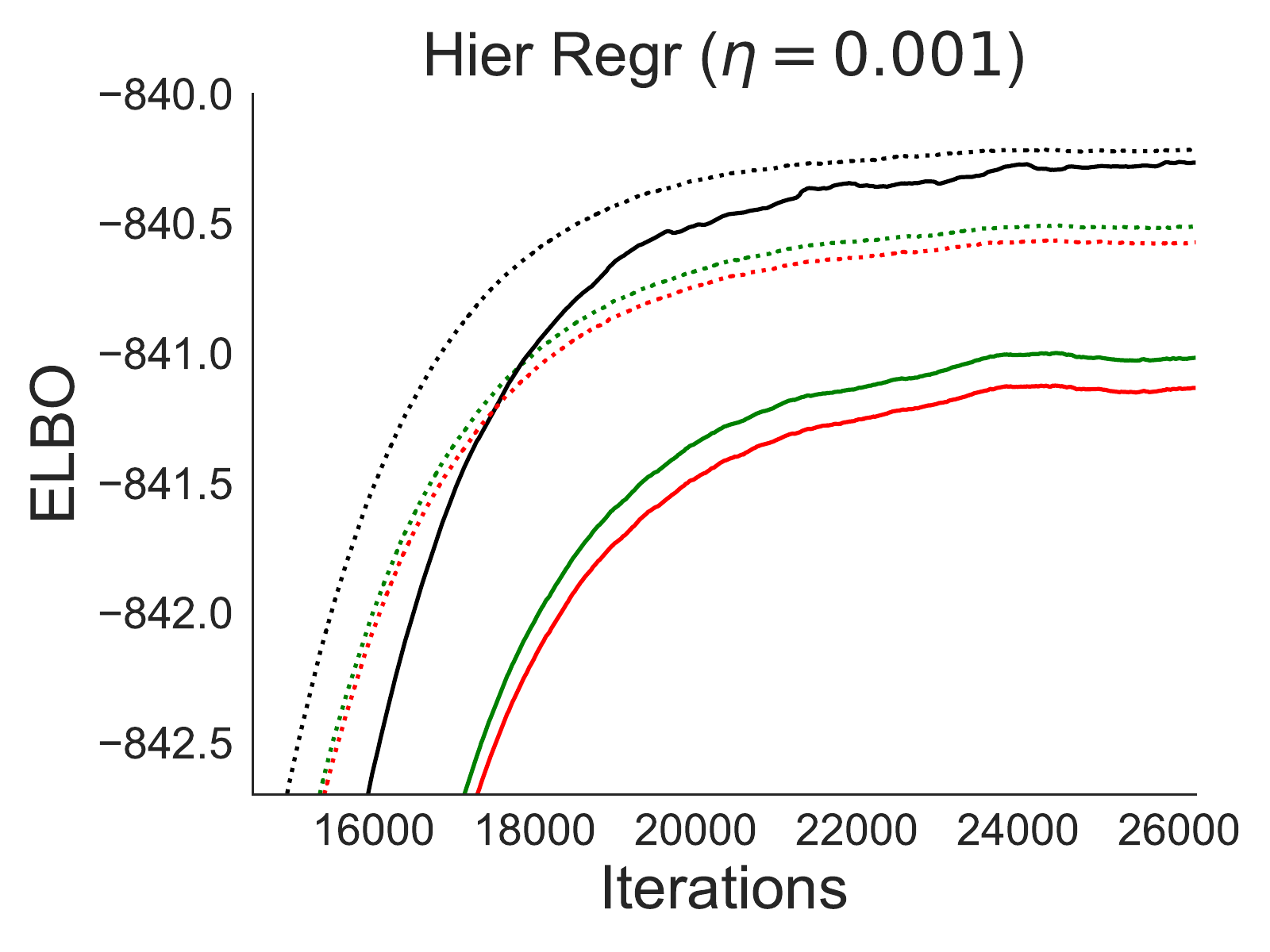}
    \includegraphics[scale=0.29,trim={3cm 0 0 0},clip]{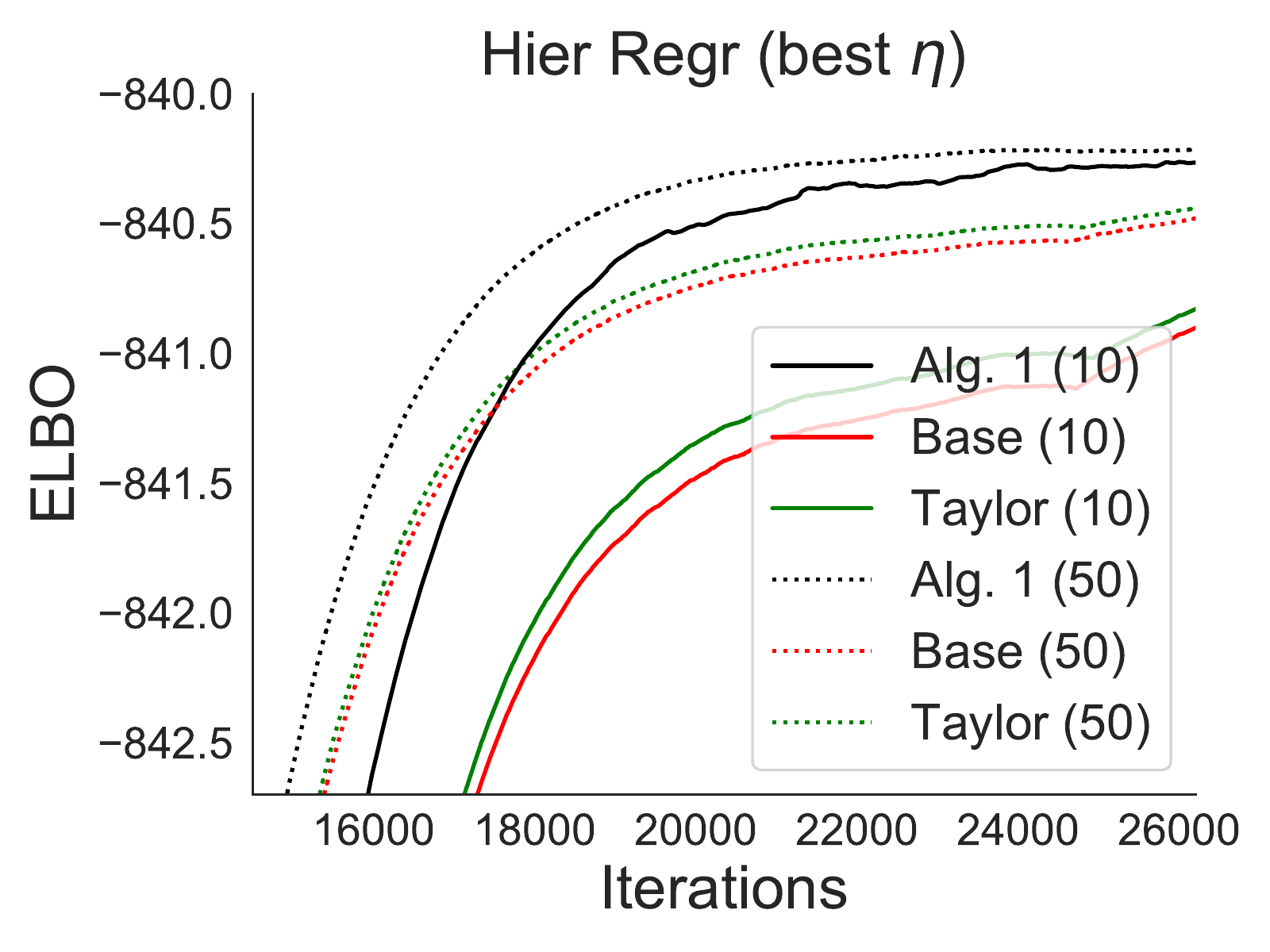}

    \includegraphics[scale=0.28,trim={0 0 0 0},clip]{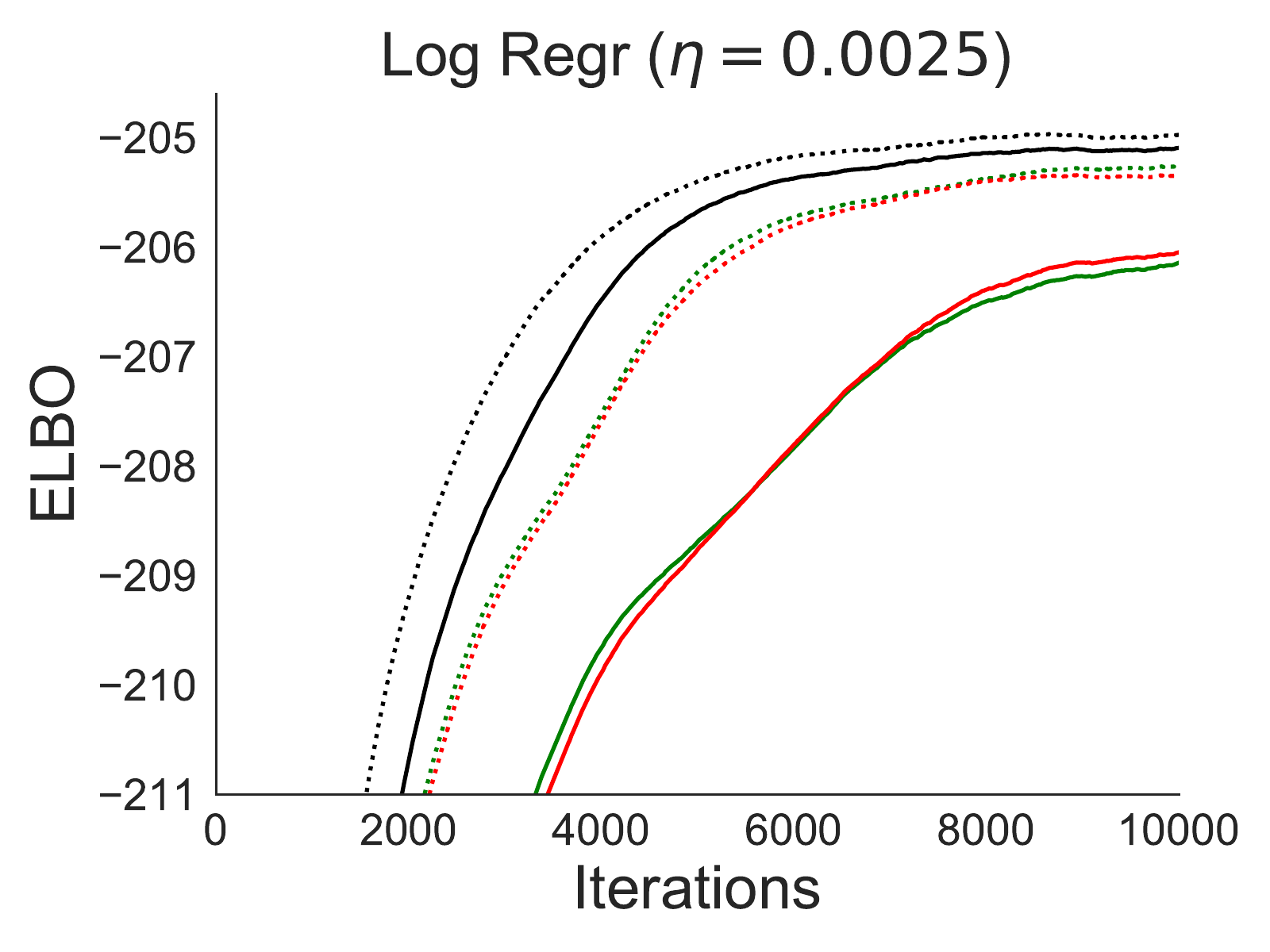}
    \includegraphics[scale=0.28,trim={2.5cm 0 0 0},clip]{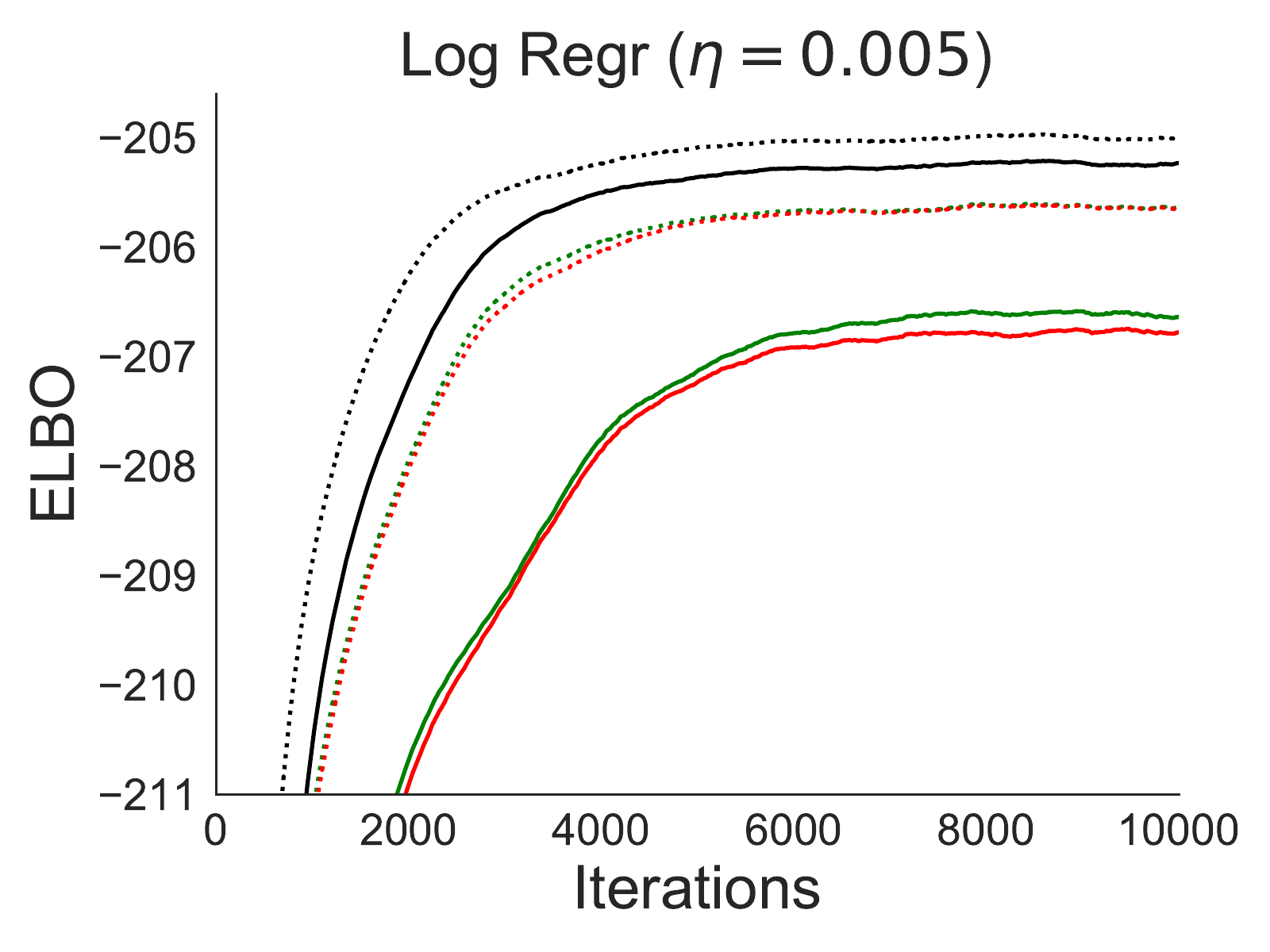}
    \includegraphics[scale=0.28,trim={2.5cm 0 0 0},clip]{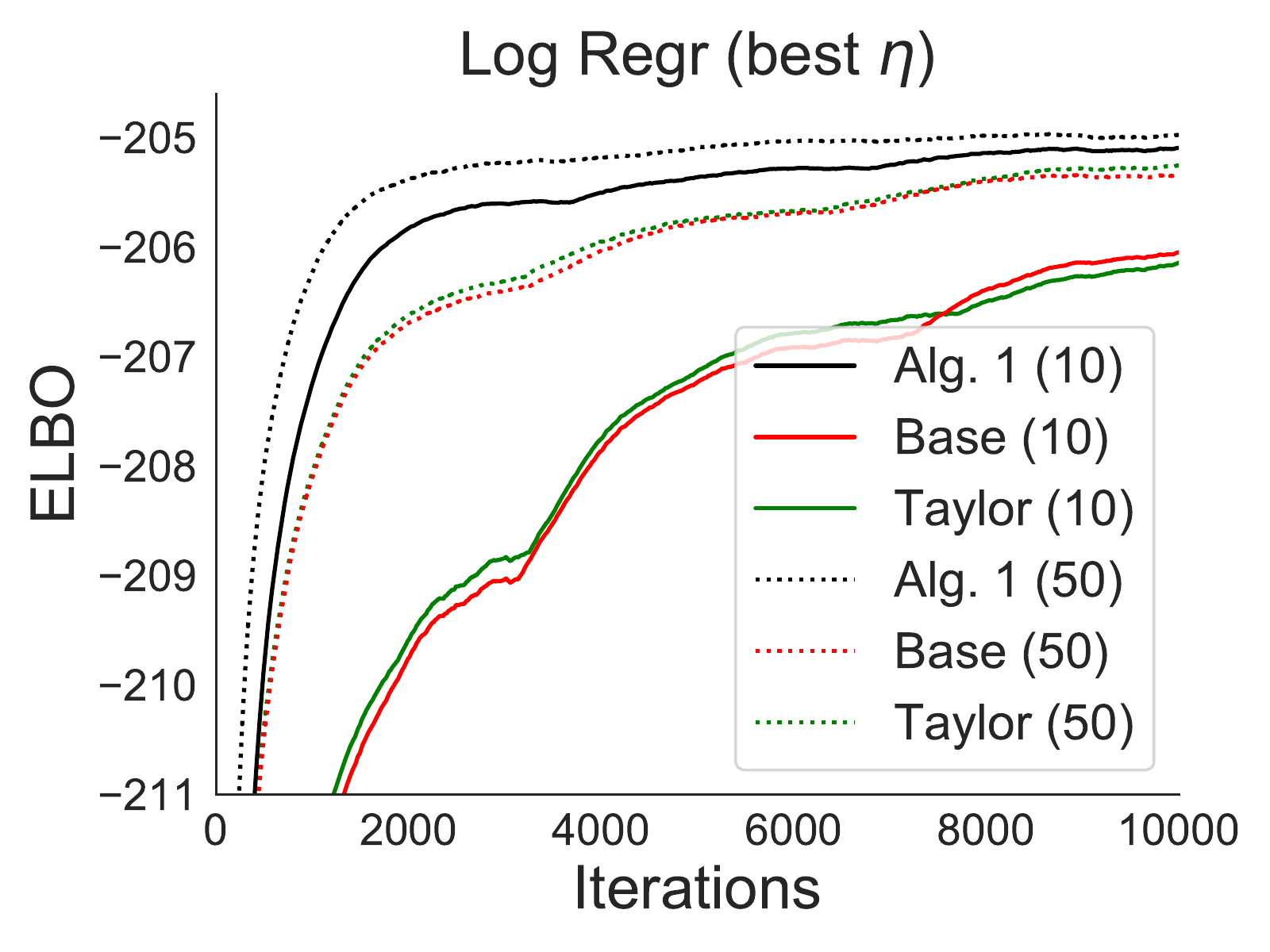}

    \includegraphics[scale=0.28,trim={0 0 0 0},clip]{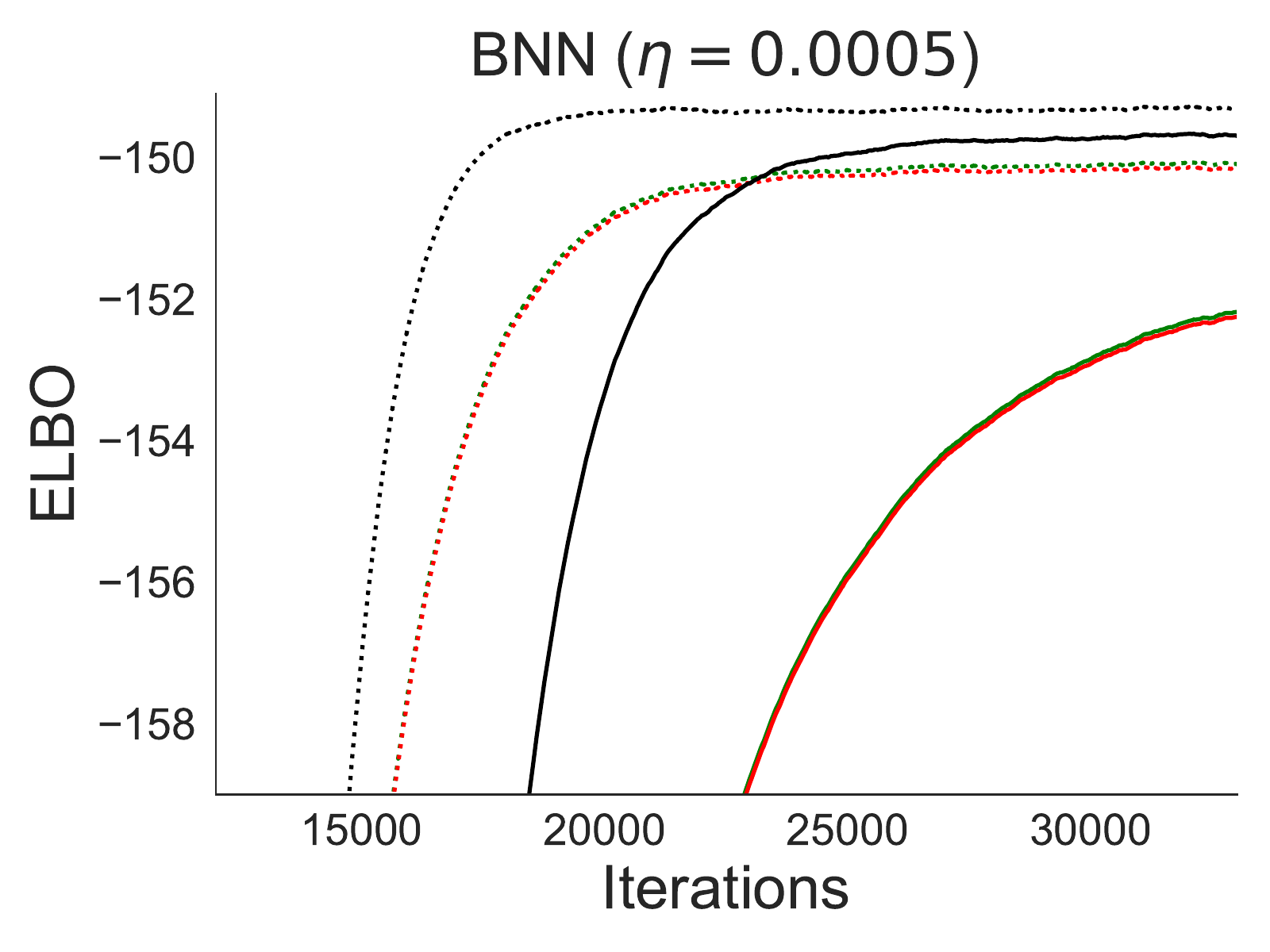}
    \includegraphics[scale=0.28,trim={2.5cm 0 0 0},clip]{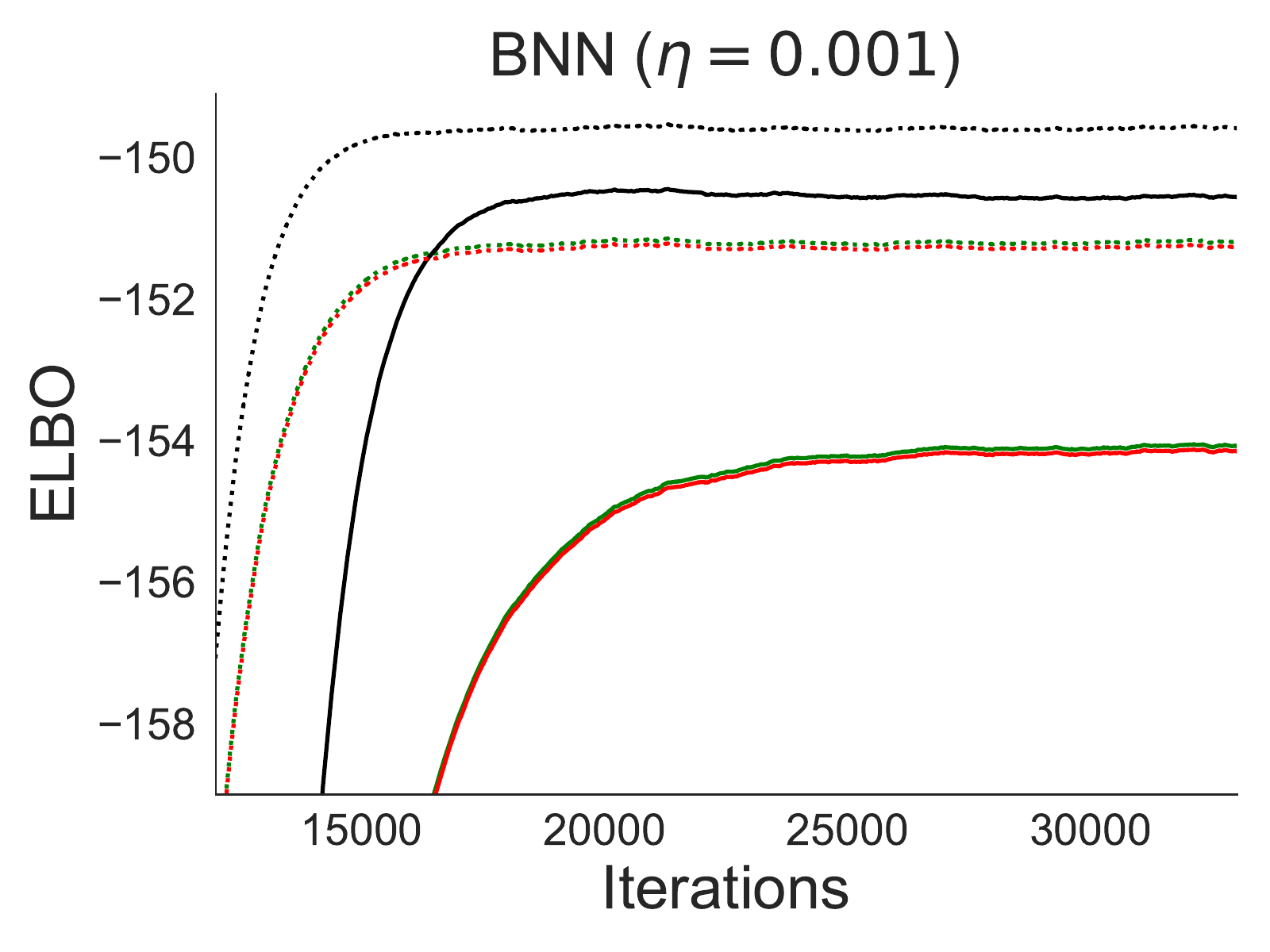}
    \includegraphics[scale=0.28,trim={2.5cm 0 0 0},clip]{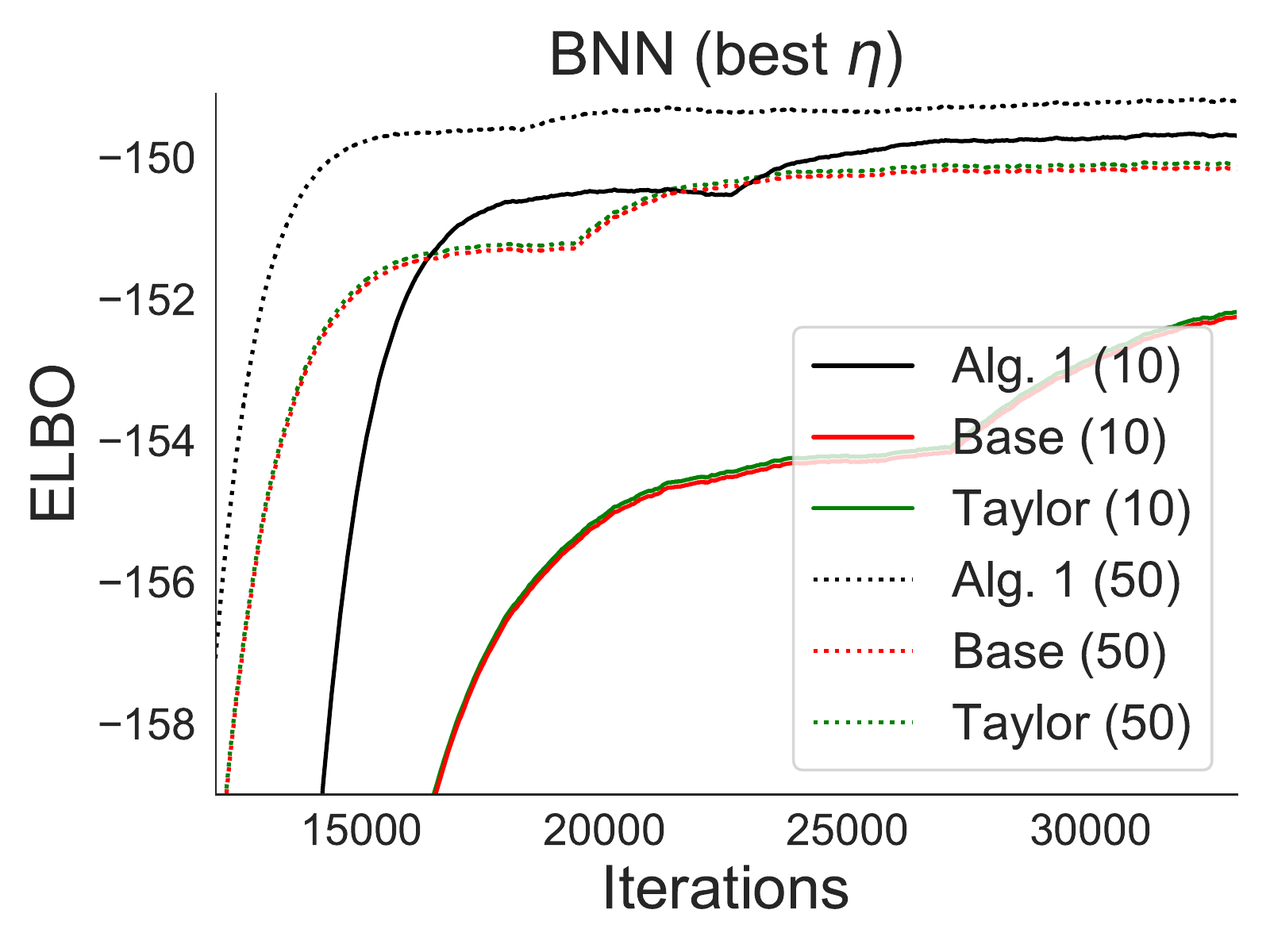}
    \caption{\textbf{The use of our control variate yields improved optimization convergence.} VI using a diagonal plus low rank Gaussian variational distribution. The first two columns show results for two different step-sizes, and the third one using the best step-size chosen retrospectively. "Base (M)" stands for the base reparameterization gradient estimated using $M$ samples, and "Taylor (M)" for using a Taylor-expansion based control variate for variance reduction.}
    \label{fig:opt1}
    \end{center}
\end{figure}

To test the robustness of optimization, in Fig.~\ref{fig:per1} we show the final training ELBO after 80000 steps as a function of the step size used. Our method is less sensitive to the choice of step size. In particular, our method gives reasonable results with larger learning rates, which translates to better results with a smaller number of iterations.

\begin{figure}[ht!]
    \begin{center}
    \includegraphics[scale=0.28,trim={0 0 0 0},clip]{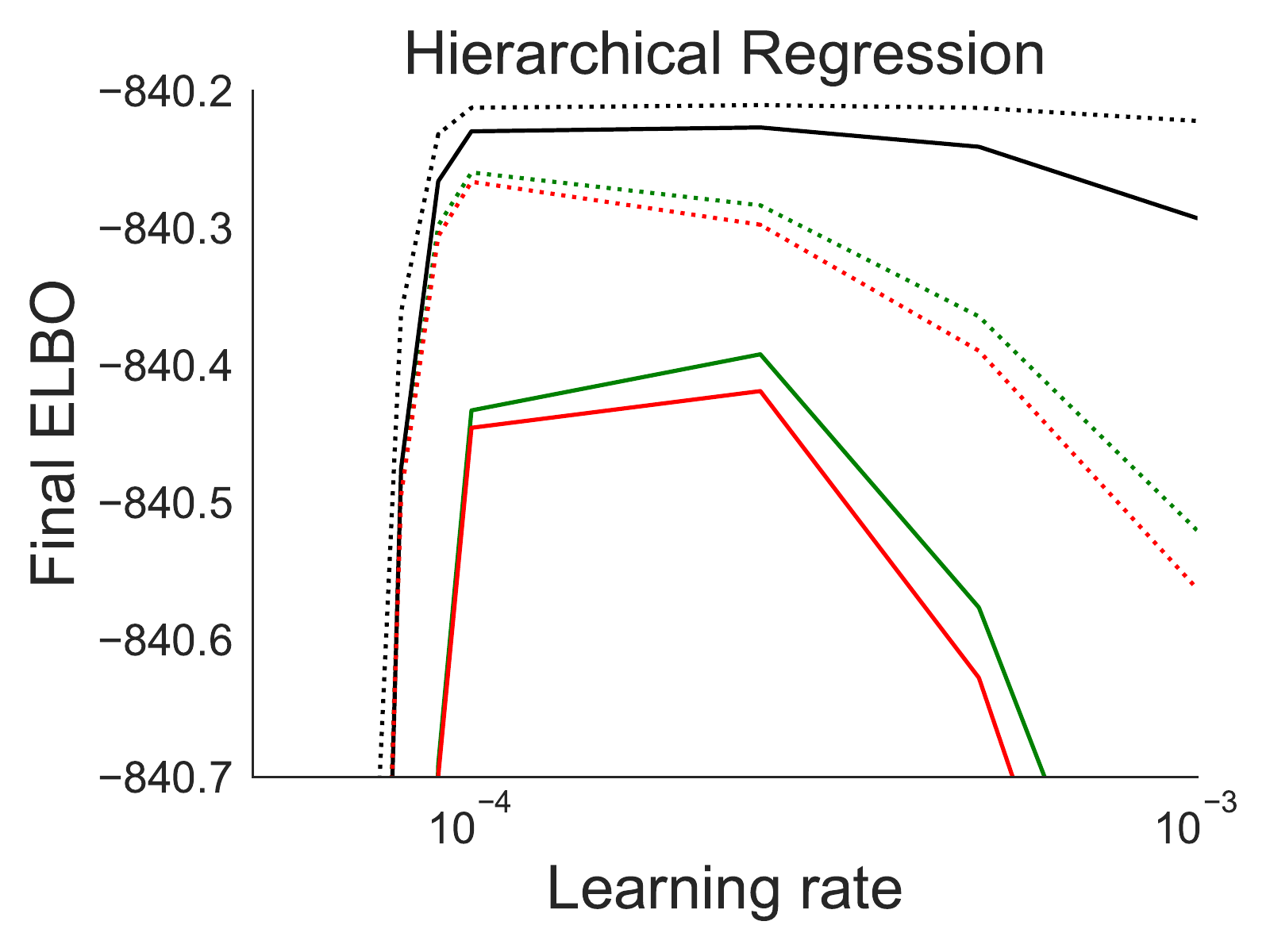}
    \includegraphics[scale=0.28,trim={1.3cm 0 0 0},clip]{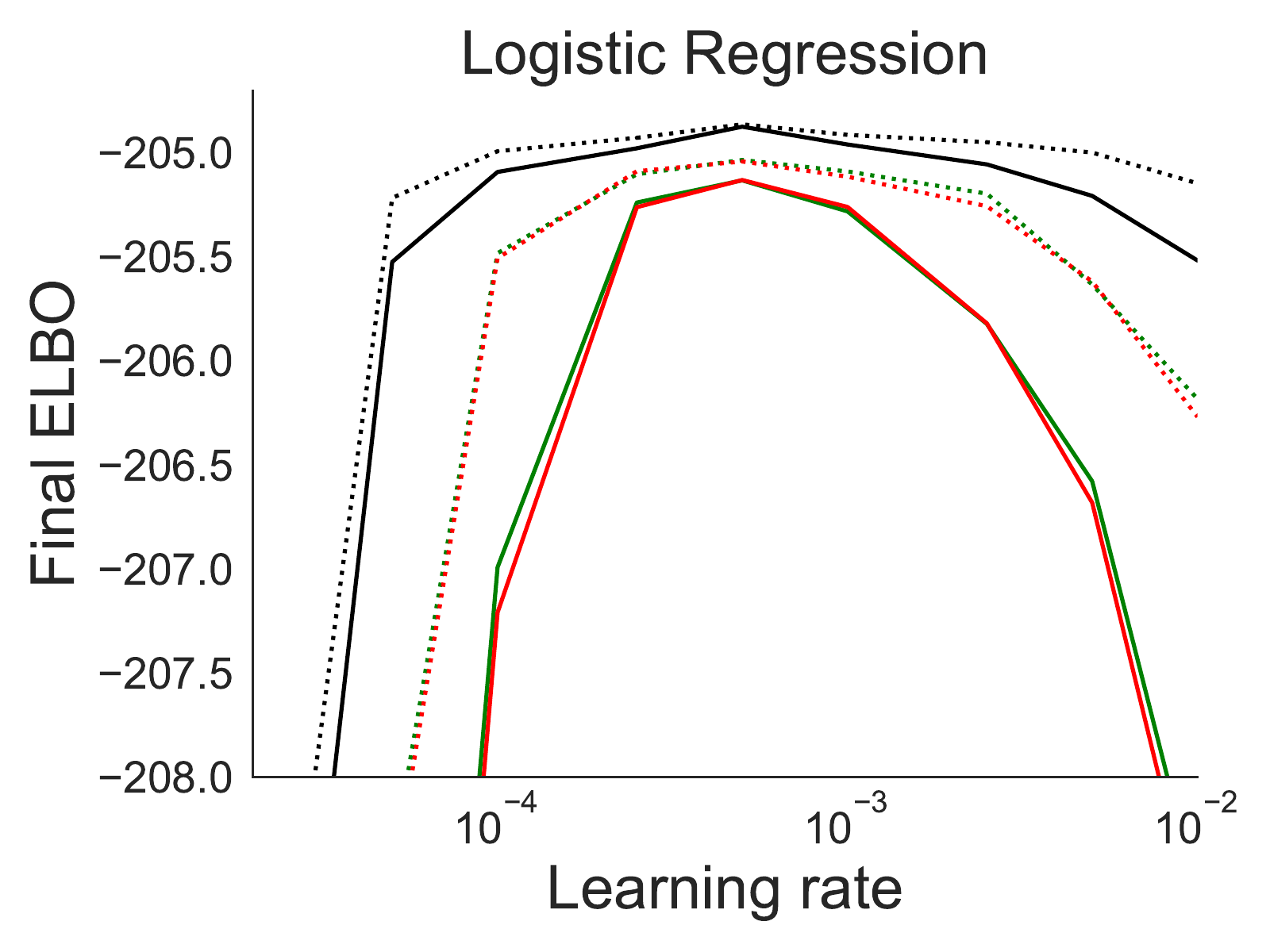}
    \includegraphics[scale=0.28,trim={1.3cm 0 0 0},clip]{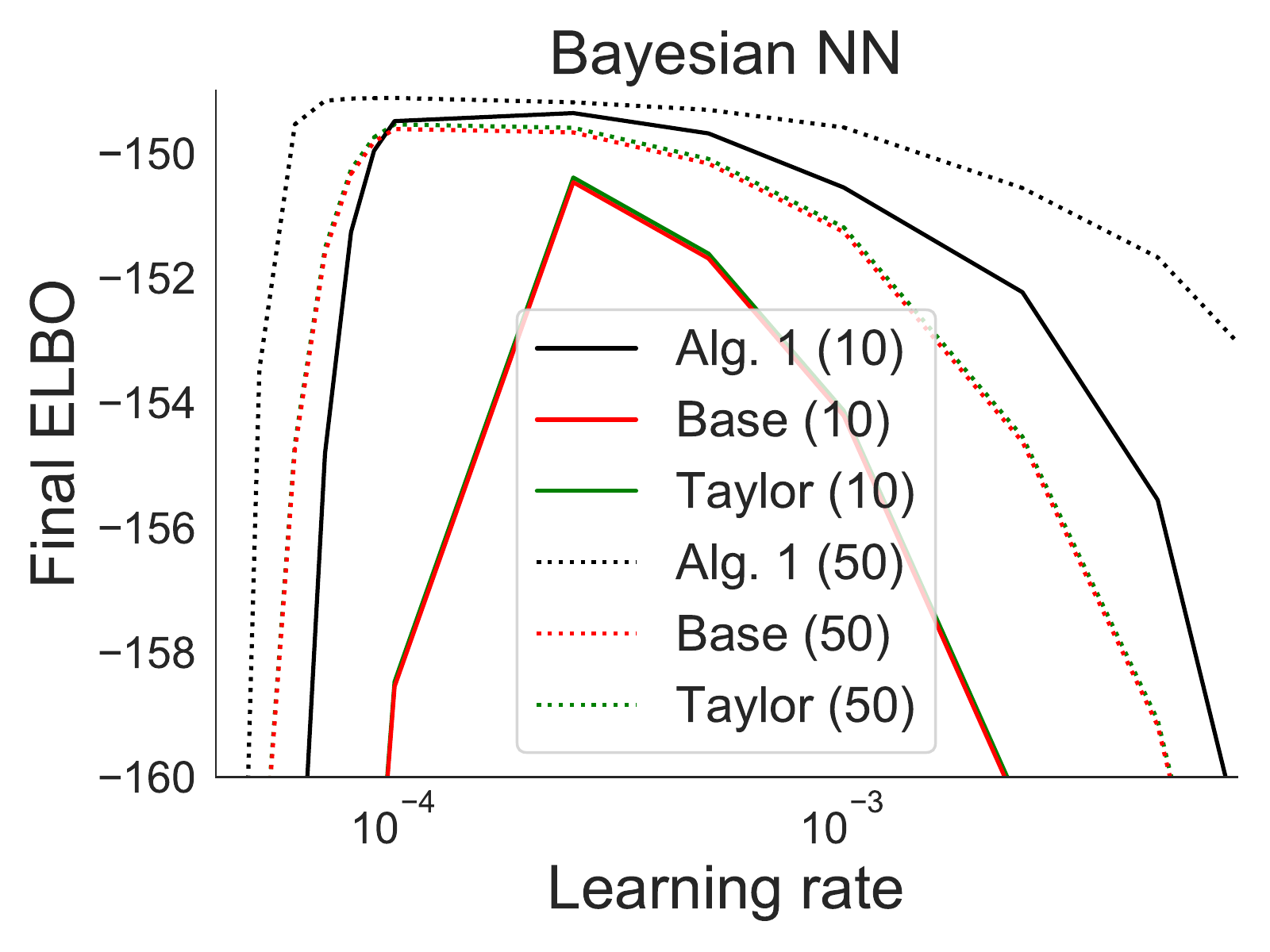}
    \caption{\textbf{The use of our control variate yields good results for a wider range of step sizes.} VI using a diagonal plus low rank covariance Gaussian variational distribution. The plots show the final ELBO achieved after training for 80000 steps vs. step size used. (Higher ELBO is better.)}
    \label{fig:per1}
    \end{center}
\end{figure}

For space reasons, results for diagonal Gaussians and Gaussians with arbitrary full-rank covariances as variational distributions are shown in Appendix \ref{app:resultsdiag}. Results for full-rank Gaussians are similar to the ones shown in Figs.~\ref{fig:opt1} and \ref{fig:per1}. Our method performs considerably better than competing approaches (our method with $M = 10$ outperforms competing approaches with $M = 50$), and Taylor-based control variates yield no improvement over the no control variate baseline.

On the other hand, with diagonal Gaussians, our approach and Taylor-based control variates perform similarly -- both are significantly better than the no control variate baseline. We attribute the success of Taylor-based approaches in this case to two related factors. First, diagonal approximations tend to under-estimate the true variance, so a local Taylor approximation may be more effective. Second, for diagonal Gaussians most of the gradient variance comes from mean parameters, where a second-order Taylor approach is tractable.

\subsubsection{Estimator's Variance as Optimization Proceeds}

Fig.~\ref{fig:results_underover_s4} showed a comparison of the variance reduction achieved by our control variate in the ideal setting for which the control variate's parameters $v$ were fully optimized at every step. While insightful, the analysis did not reflect how the control variate is used in practice, where the parameters $v$ "track" the optimal parameters as $w$ changes via a double-descent scheme. We now show results for this practical setting. We set the variational distribution to be a diagonal plus low rank Gaussian and perform optimization with each of the gradient estimators. We estimate the variance of each estimator as optimization proceeds. Results are shown in Fig.~\ref{fig:variance}. It can be observed that our control variate yields variance reductions of several orders of magnitude, while Taylor-based control variates lead to barely any variance reduction at all. 
This is aligned with our previous analysis and results.

\begin{figure}[ht!]
    \begin{center}
    \includegraphics[scale=0.28,trim={0 0 0 0},clip]{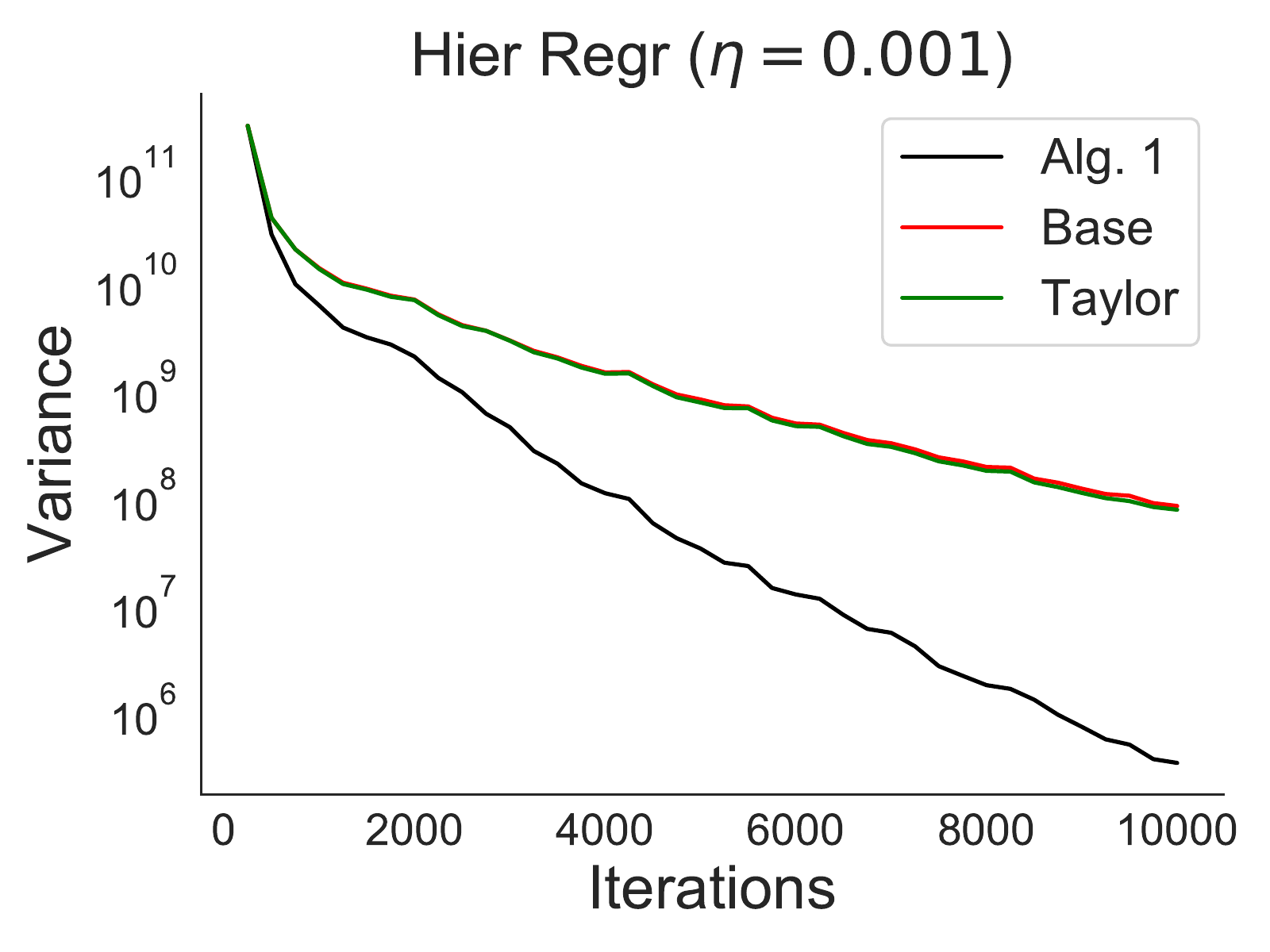}\hfill
    \includegraphics[scale=0.28,trim={1.3cm 0 0 0},clip]{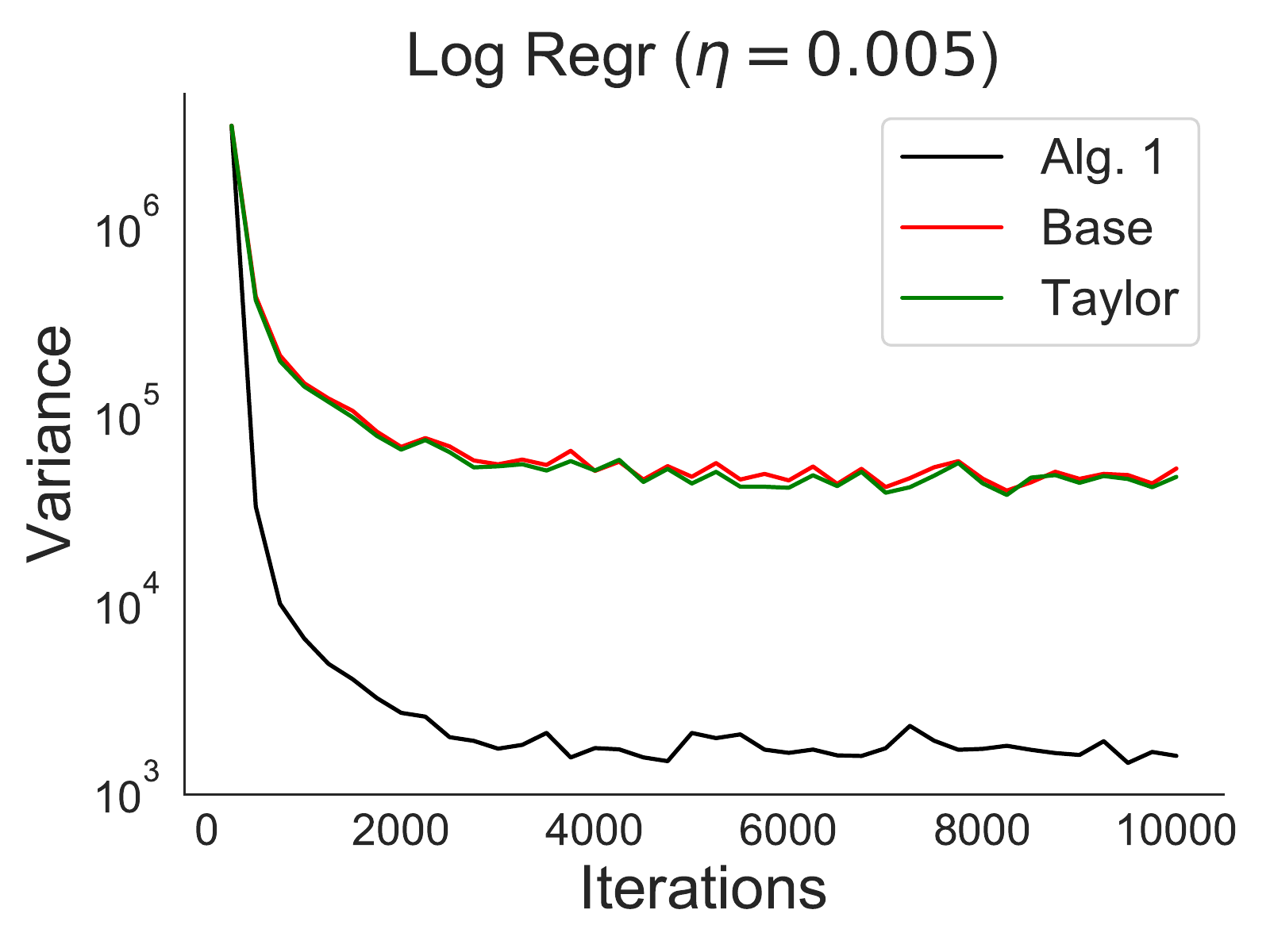}\hfill
    \includegraphics[scale=0.28,trim={1.3cm 0 0 0},clip]{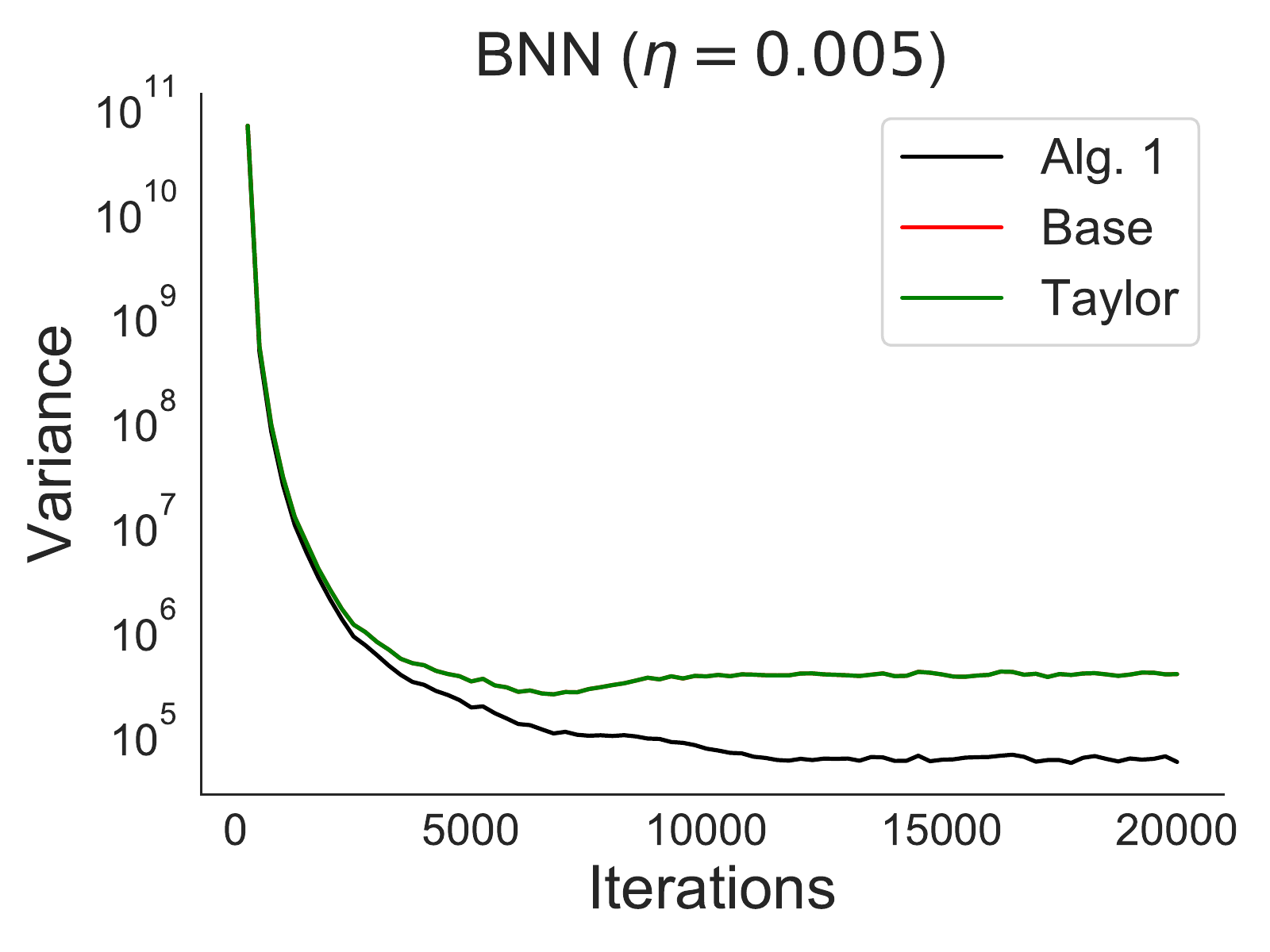}
    \caption{\textbf{The use of our control variate yields large reductions in variance.} Variance of different gradient estimators as optimization proceeds for the three models considered. "Base" stands for the base reparameterization gradient, and "Taylor" for using a Taylor-expansion based control variate for variance reduction. For the BNN model the lines for the base estimator and the Taylor control variate are almost completely overlapped, and thus indistinguishable in the plot. (All methods have the same variance at initialization because control variate weights are initialized to $0$.)}
    \label{fig:variance}
    \end{center}
\end{figure}

\subsubsection{Wall-clock Time Results}

Fig.~\ref{fig:opt1time} shows results in terms of wall-clock time instead of iterations.
These results' main purpose is visualization, they are the same as the ones in Fig.~\ref{fig:opt1} (right column) with the x-axis scaled for each estimator with the values from Table~\ref{tab:costs}.

\begin{figure}[ht!]
    \begin{center}
    \includegraphics[scale=0.28,trim={0 0 0 0},clip]{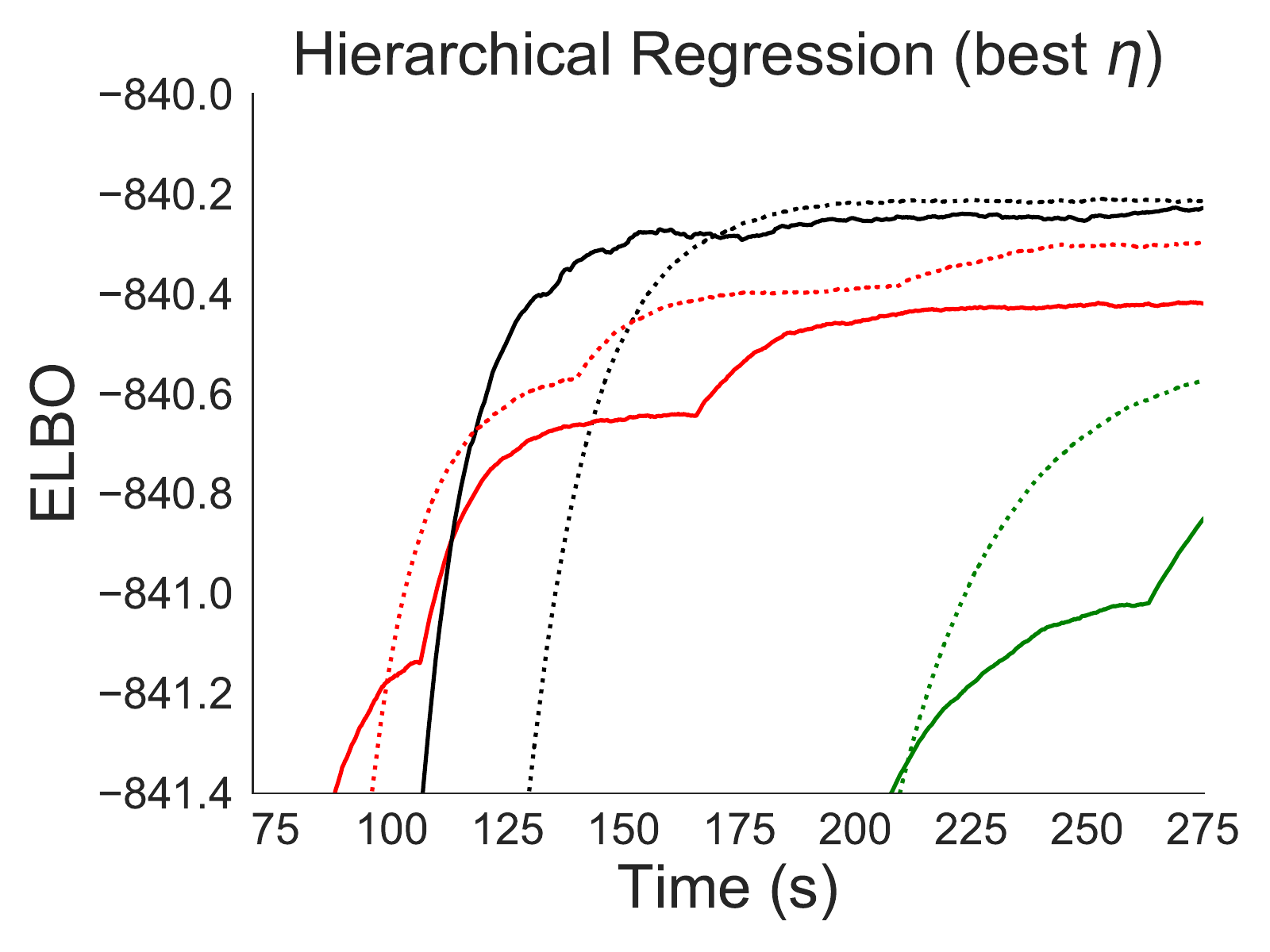}\hfill
    \includegraphics[scale=0.28,trim={1.3cm 0 0 0},clip]{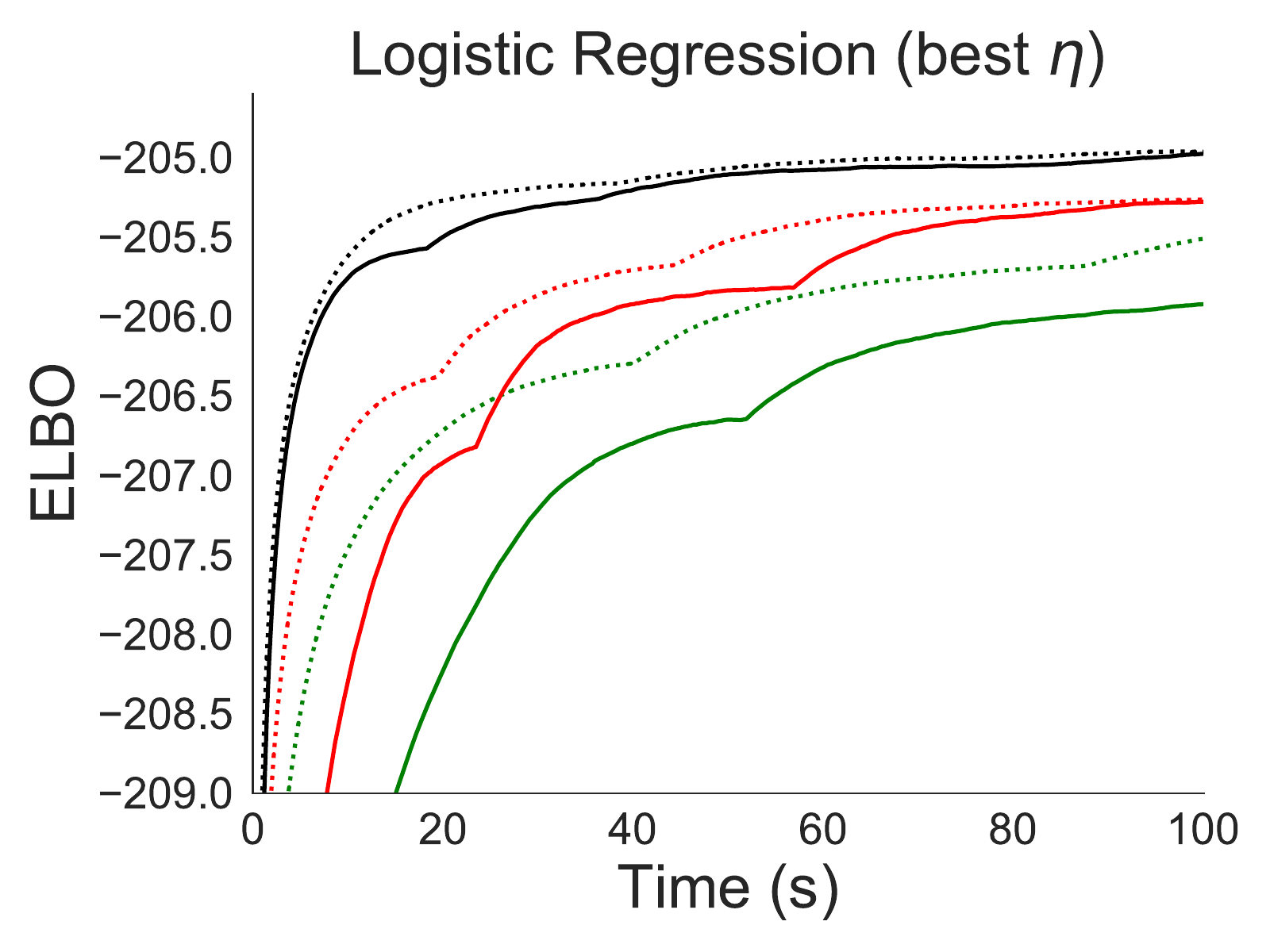}\hfill
    \includegraphics[scale=0.28,trim={1.3cm 0 0 0},clip]{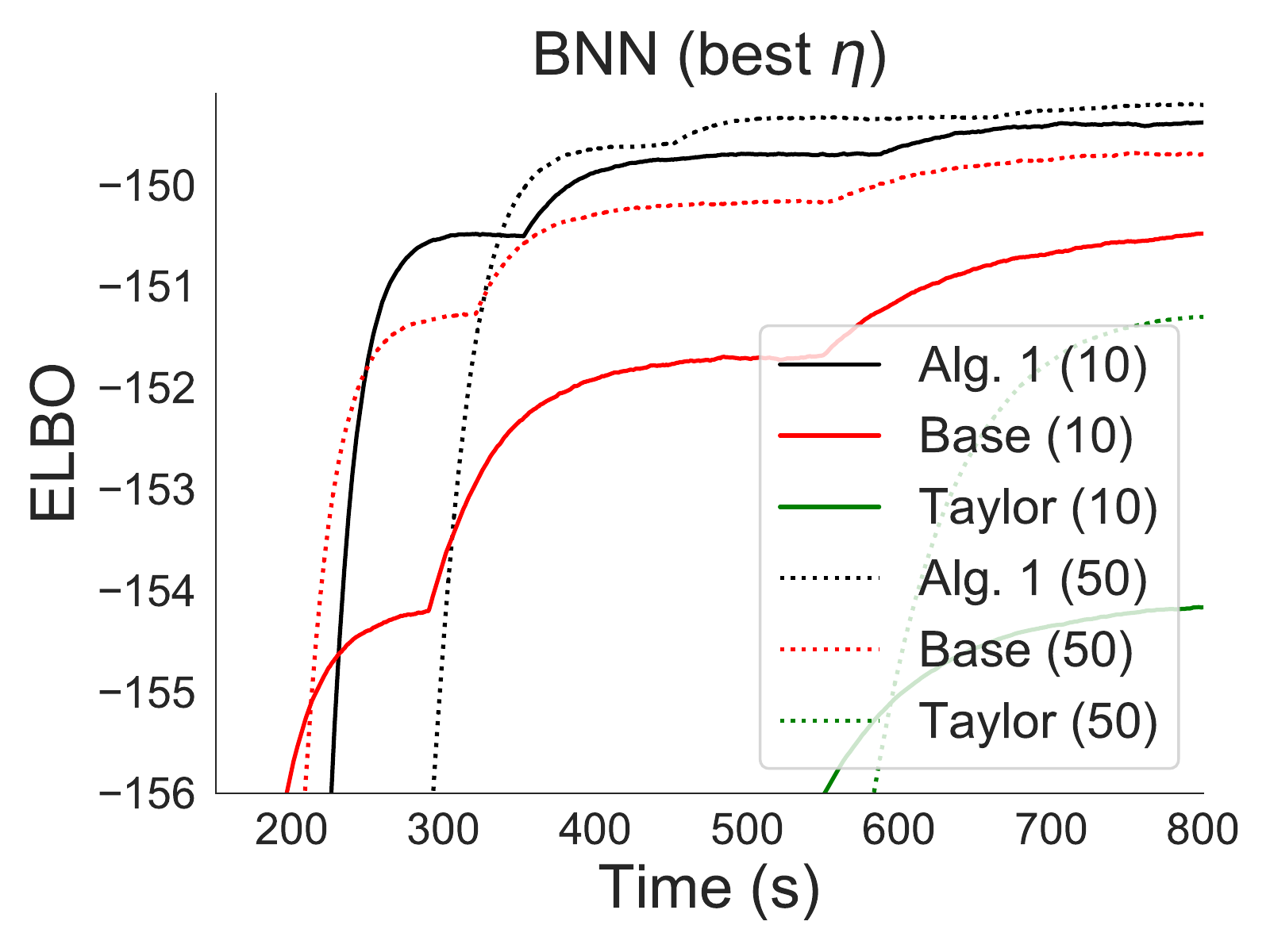}
    \caption{\textbf{The use of our control variate yields improved optimization convergence.} VI using a diagonal plus low rank Gaussian variational distribution, with the best step-size chosen retrospectively for each time horizon. "Base (M)" stands for the base reparameterization gradient estimated using $M$ samples, and "Taylor (M)" for using a Taylor-expansion based control variate for variance reduction.}
    \label{fig:opt1time}
    \end{center}
\end{figure}

Finally, it is worth mentioning that our control variate may be used jointly with other variance reduction methods. For instance, the sticking-the-landing (STL) estimator \cite{stickingthelanding} can be used with our control variate in two ways: (i) setting the base gradient estimator to be the STL estimator; and (ii) creating the ``STL control variate'' and using in concert with our control variate \cite{cvs}. In addition, while we focus on reparameterization, our control variate could be used with other estimators as well, such as the score function or generalized reparameterization \cite{generalreparam_blei}, as long as the covariance of the variational distribution is known. This is done by obtaining the second term from eq.~\ref{eq:cv} using the corresponding estimator (instead of reparameterization).

\newpage
\clearpage


\section*{Broader Impact}


In this work we present a new algorithm that yields improved performance for VI with non factorized distributions. We believe this algorithm could be included in VI-based automatic inference tools to improve their performance. This could have an impact in several areas since these tools, such as ADVI \cite{advi} (in Stan \cite{stan}), are used by researchers and practitioners in many different fields.

\bibliography{control.bib}
\bibliographystyle{apalike}
\clearpage
\newpage


\appendix

\section{Results with Other Variational Distributions} \label{app:resultsdiag}

\subsection{Gaussian with Arbitrary Full-rank Covariance Variational Distribution}

\begin{figure}[ht!]
    \begin{center}
    \includegraphics[scale=0.29,trim={0 0 0 0},clip]{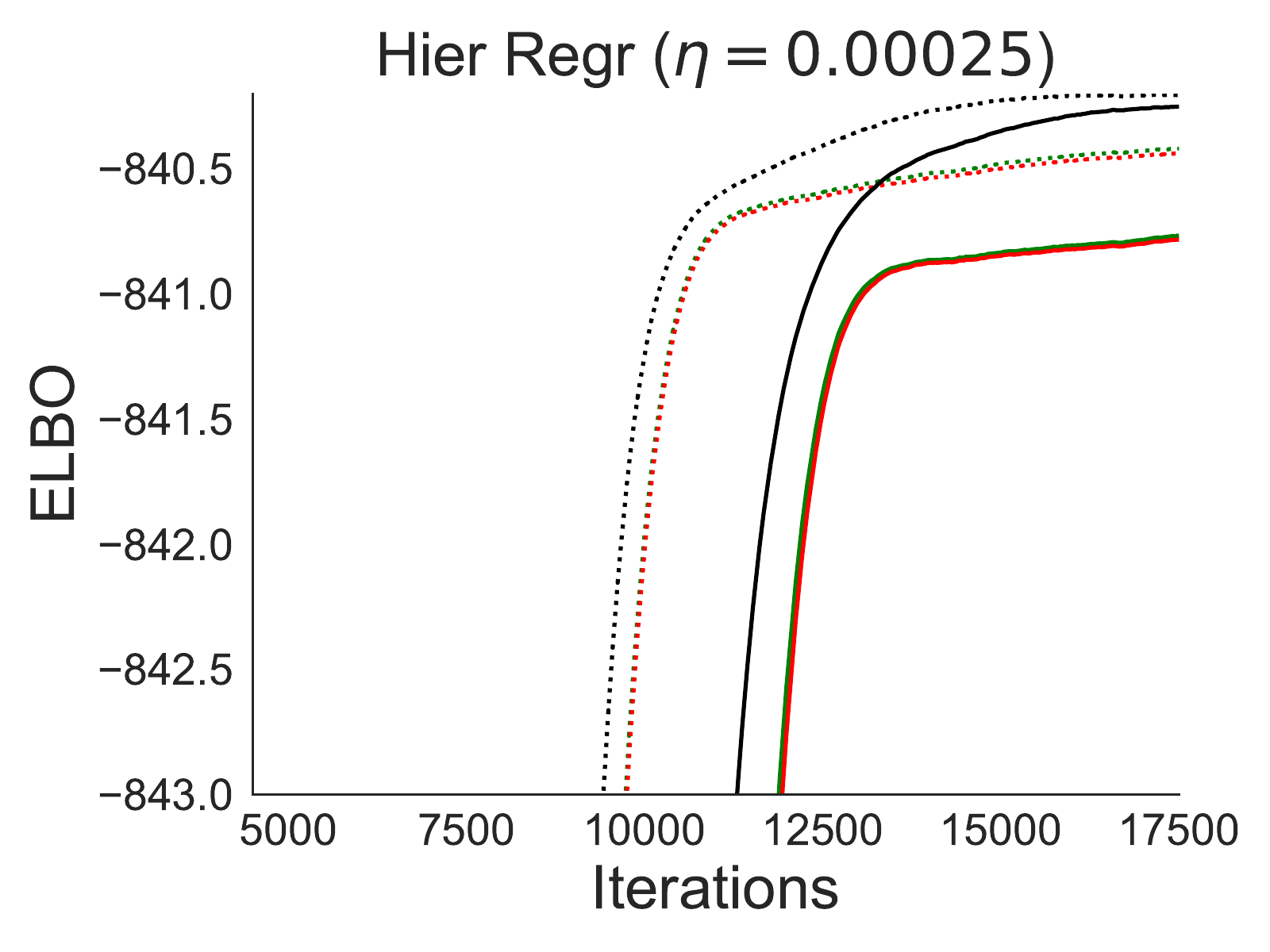}
    \includegraphics[scale=0.29,trim={3cm 0 0 0},clip]{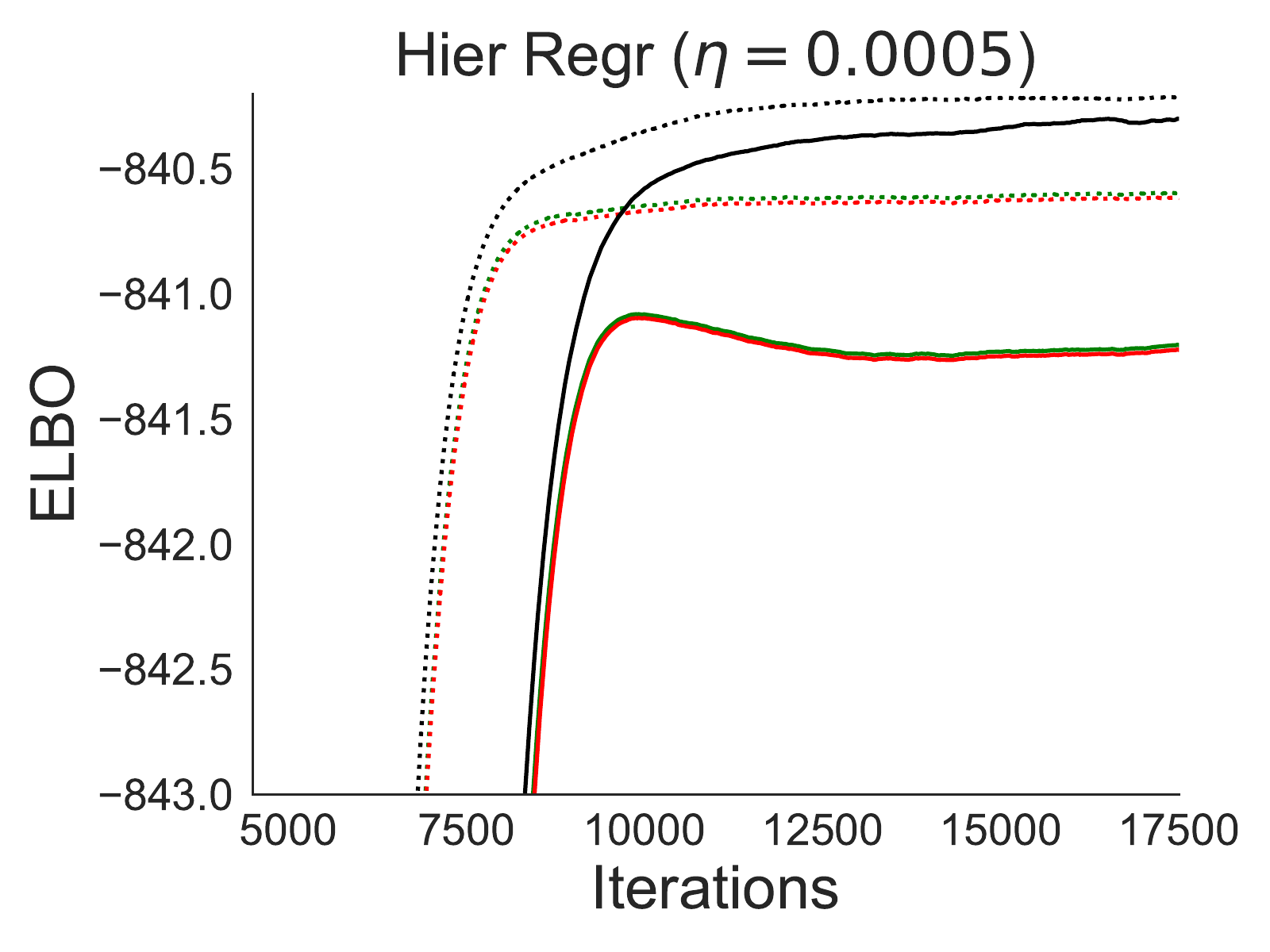}
    \includegraphics[scale=0.29,trim={3cm 0 0 0},clip]{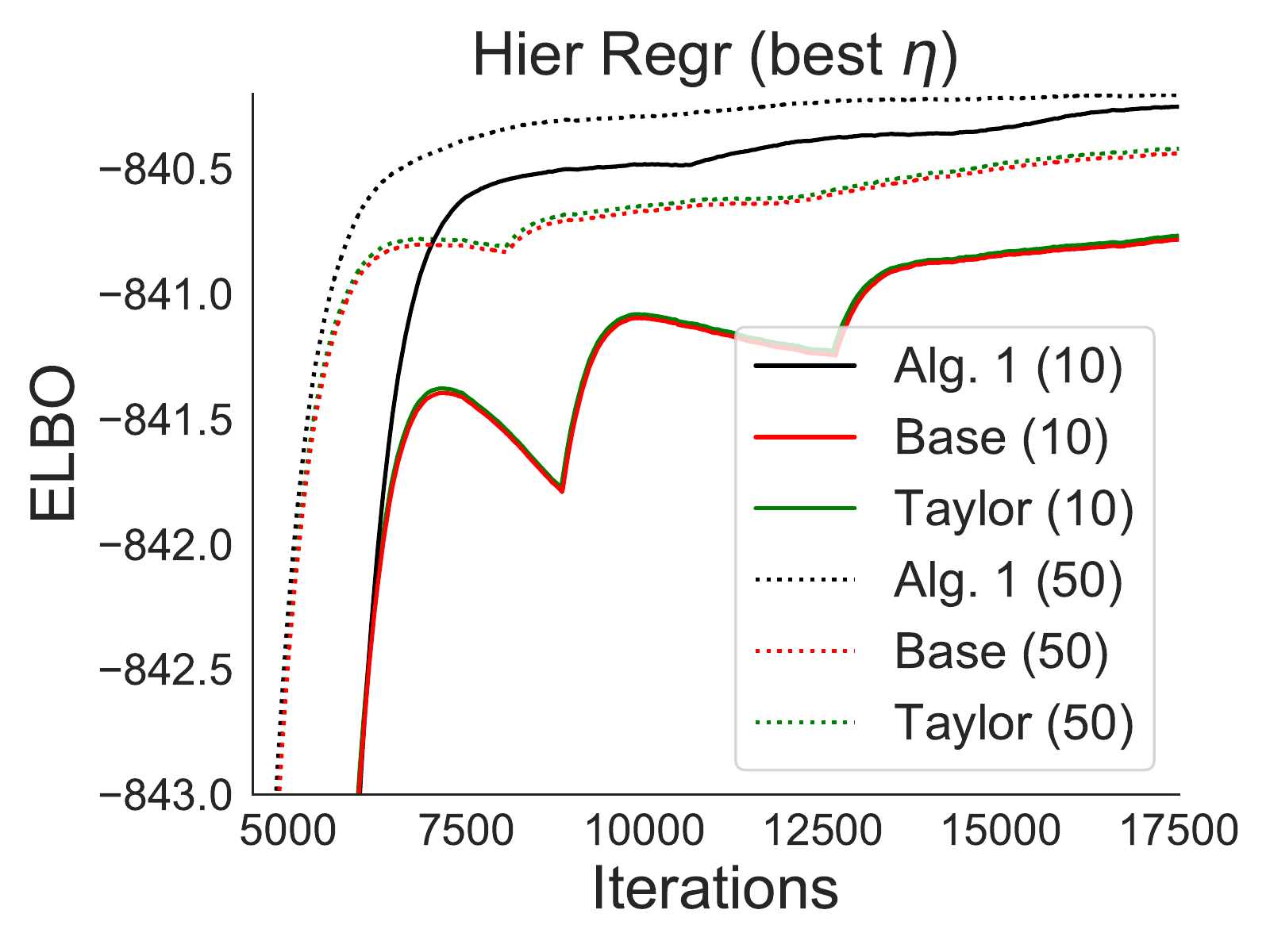}

    \includegraphics[scale=0.28,trim={0 0 0 0},clip]{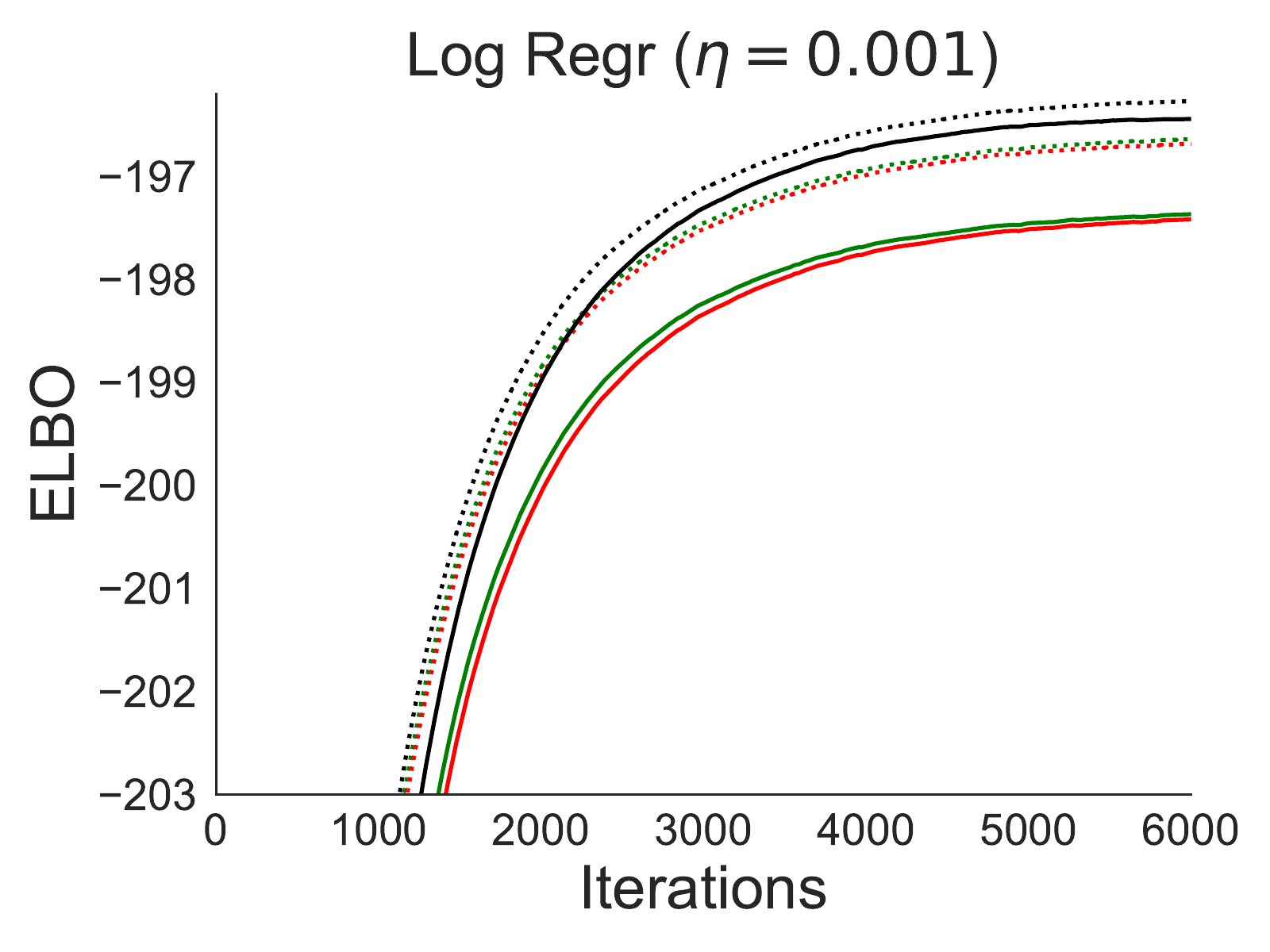}
    \includegraphics[scale=0.28,trim={2.5cm 0 0 0},clip]{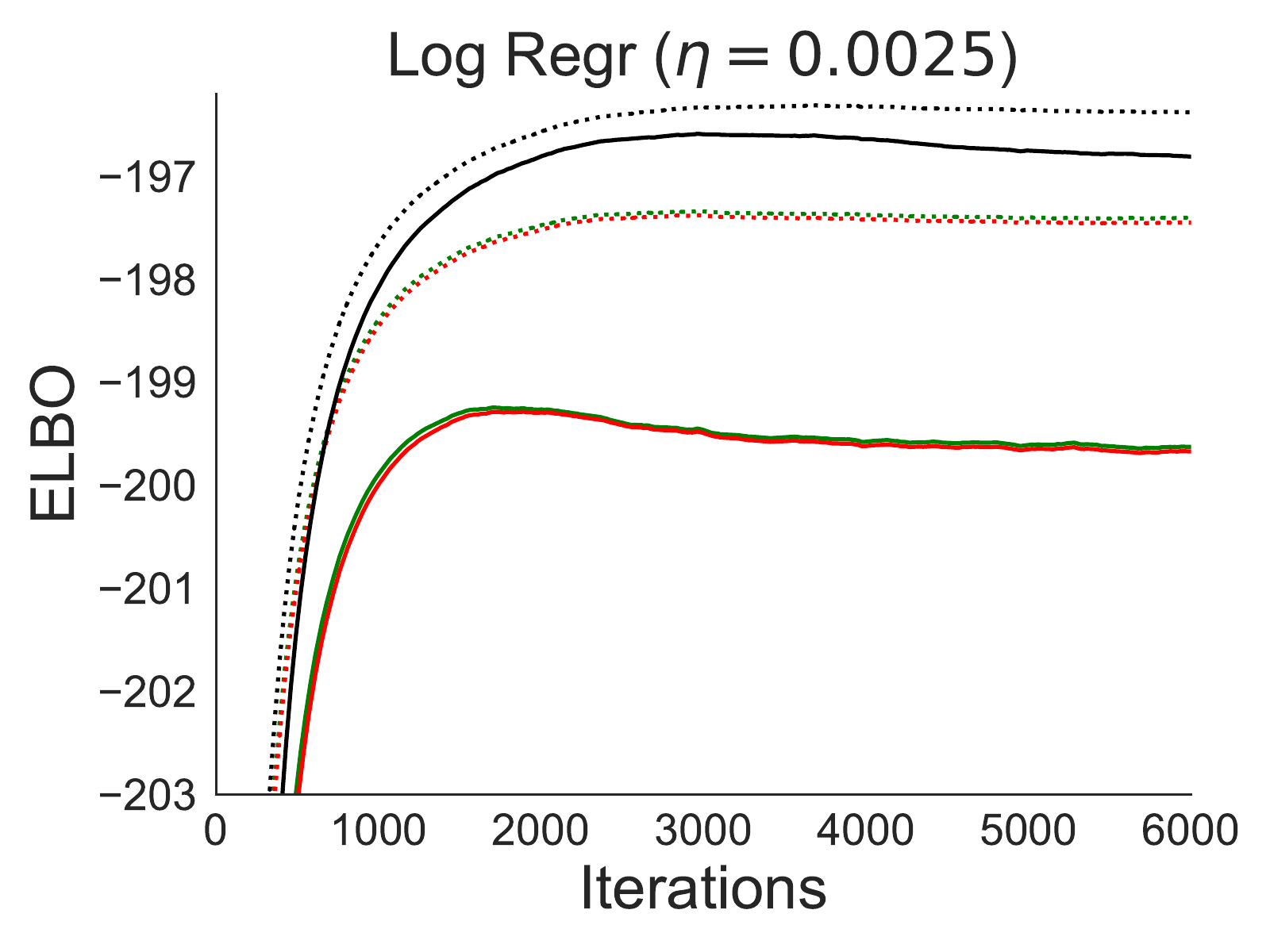}
    \includegraphics[scale=0.28,trim={2.5cm 0 0 0},clip]{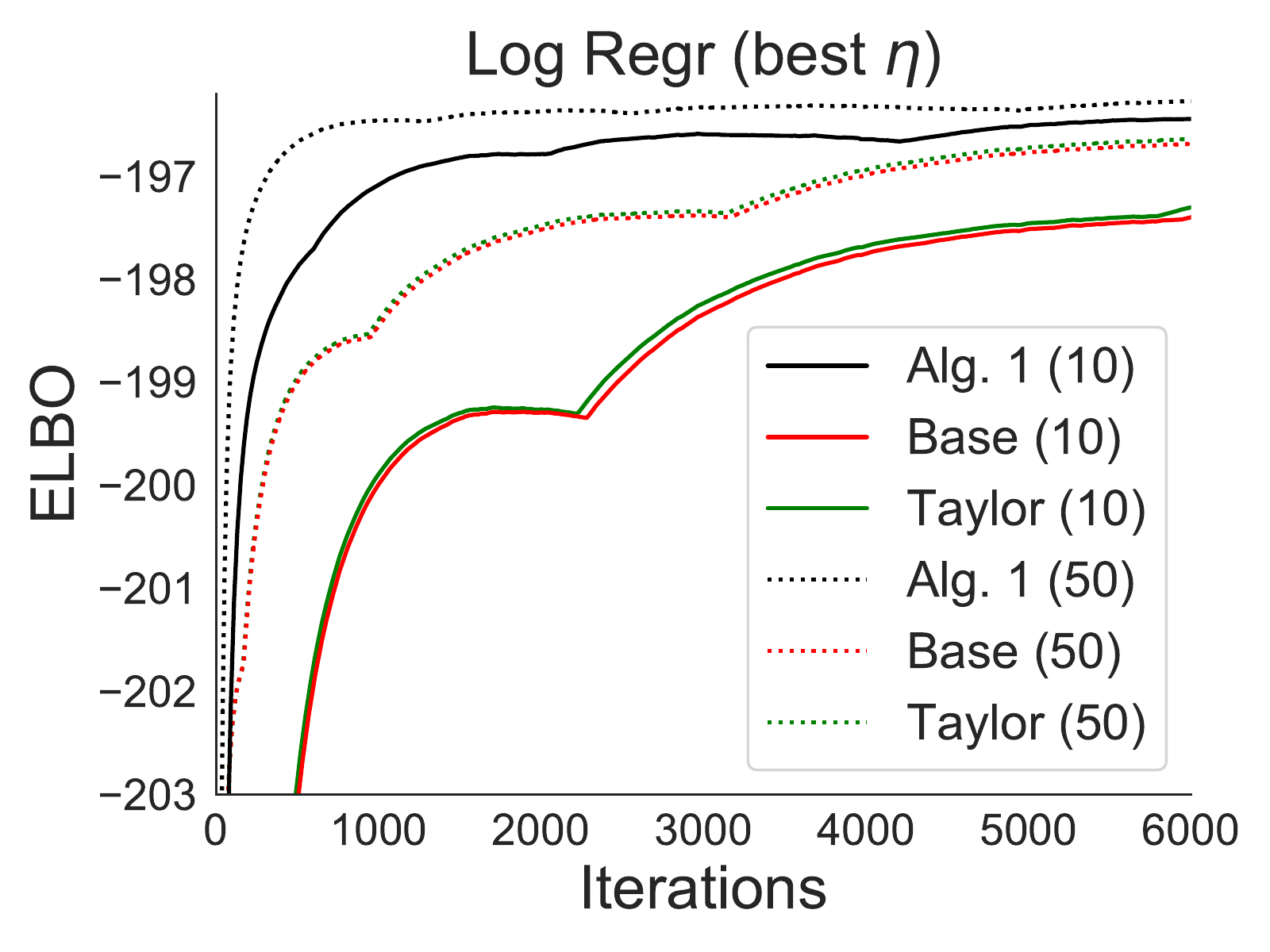}
    \caption{VI using a Gaussian with a full-rank covariance. The first two columns show results for two different step-sizes, and the third one using the best step-size chosen retrospectively. (Higher ELBO is better.)}
    \label{fig:opt22}
    \end{center}
\end{figure}

\begin{figure}[ht!]
    \begin{center}
    \includegraphics[scale=0.28,trim={0 0 0 0},clip]{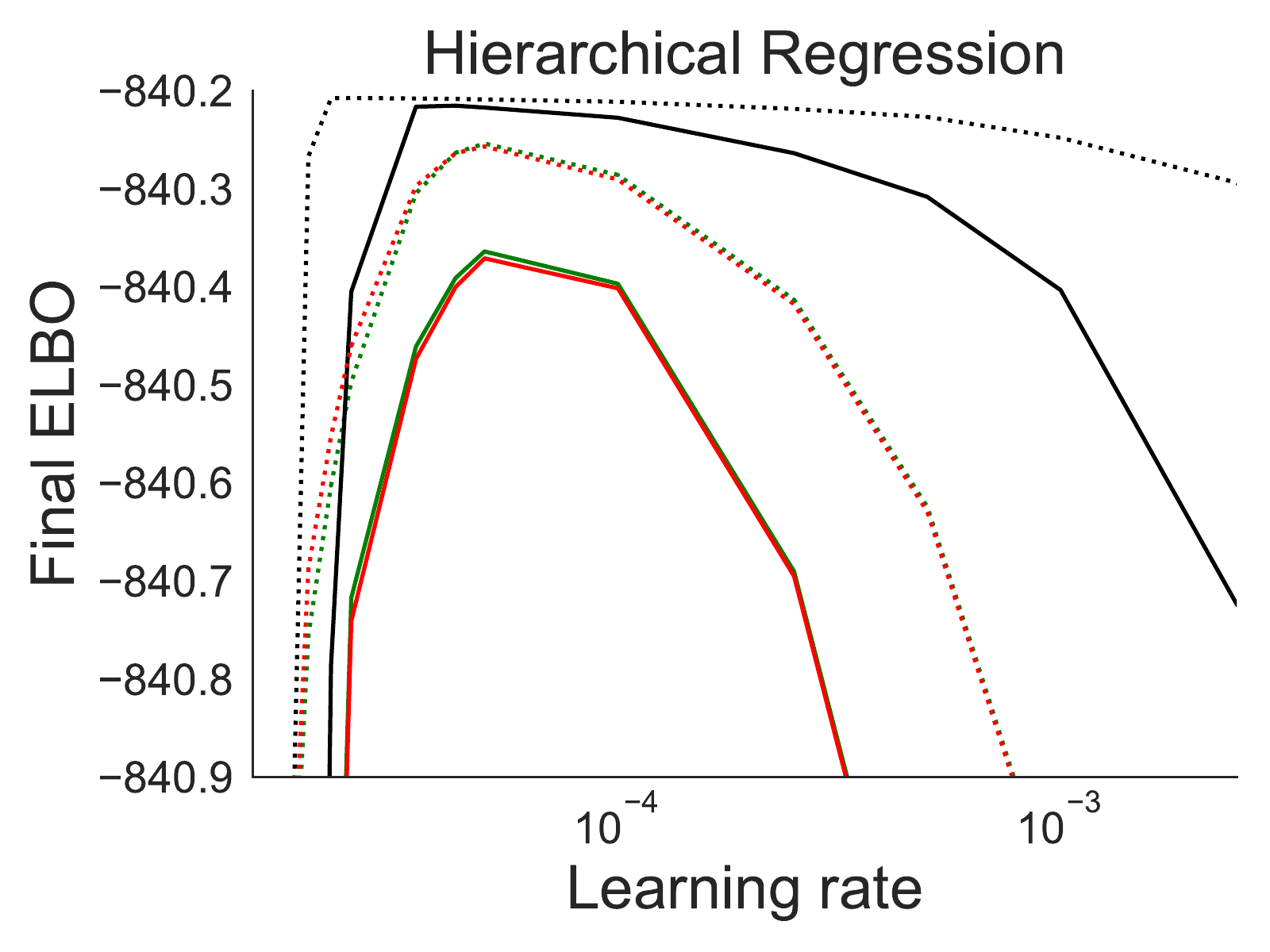}\hspace{0.7cm}
    \includegraphics[scale=0.28,trim={1.3cm 0 0 0},clip]{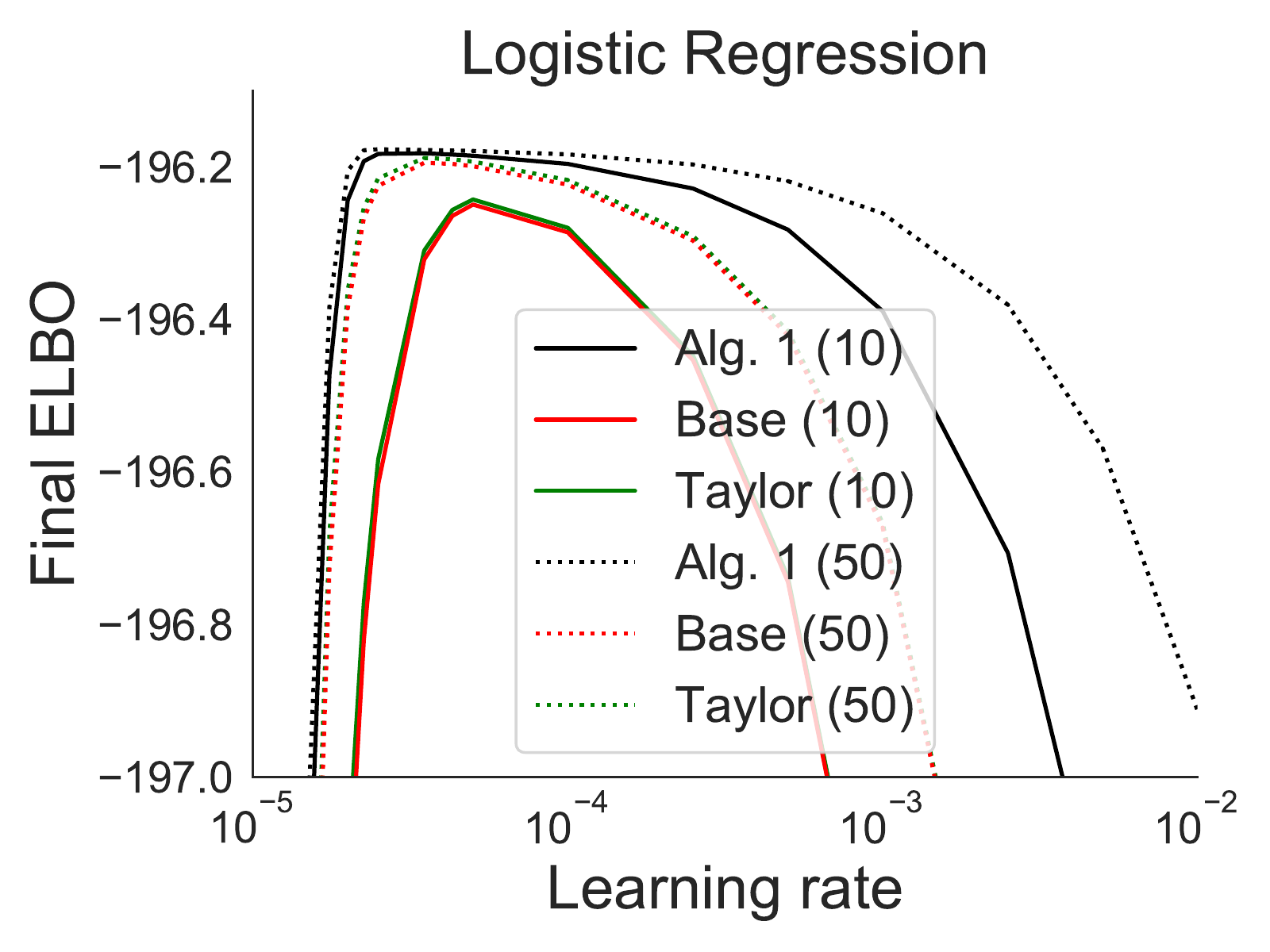}
    \caption{VI using a Gaussian with a full-rank covariance. The plots show the final ELBO achieved after training for 80000 steps vs. step size used. (Higher ELBO is better.)}
    \label{fig:per22}
    \end{center}
\end{figure}

\begin{figure}[ht!]
    \begin{center}
    \includegraphics[scale=0.28,trim={0 0 0 0},clip]{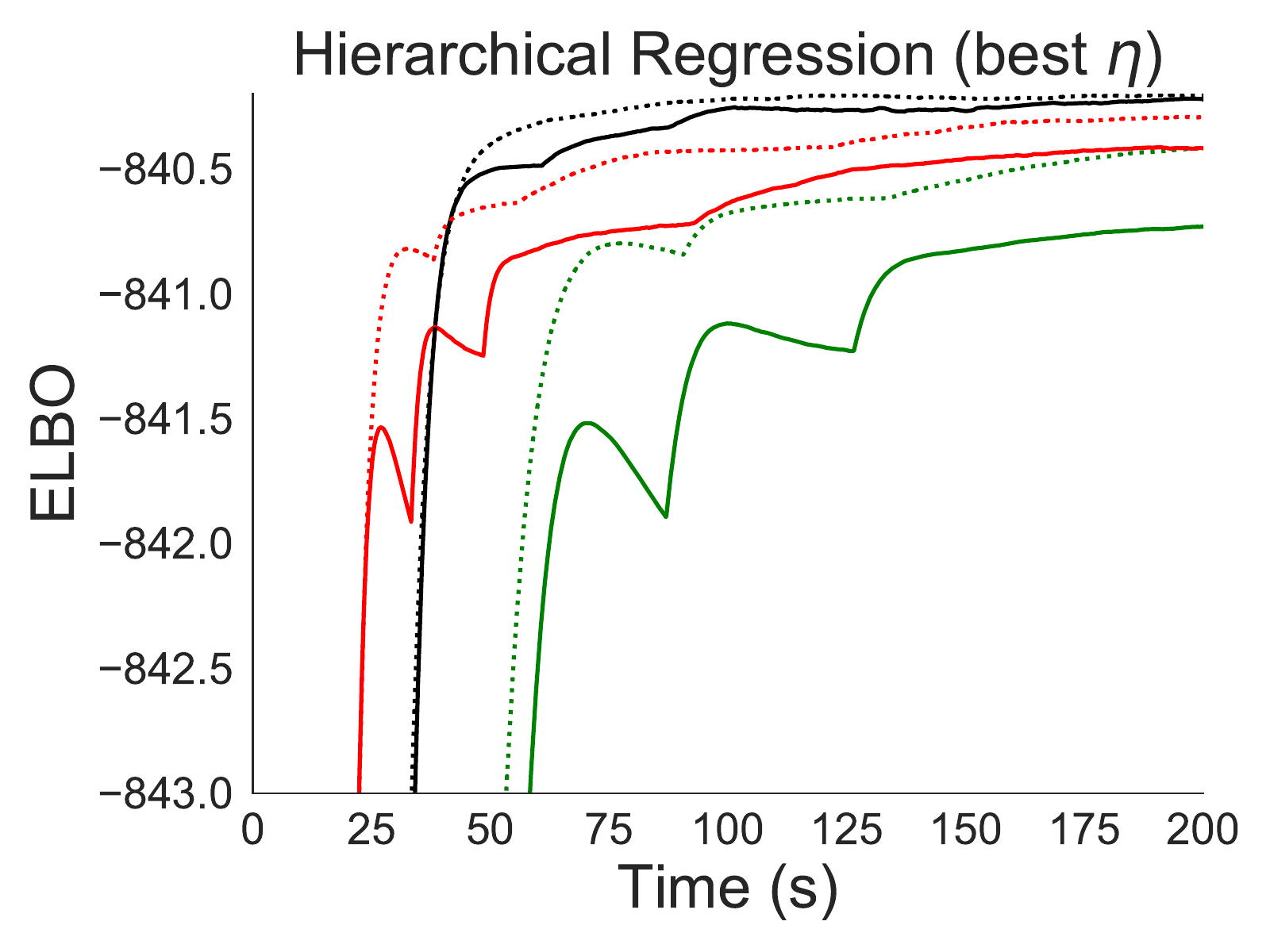}\hspace{0.7cm}
    \includegraphics[scale=0.28,trim={1.3cm 0 0 0},clip]{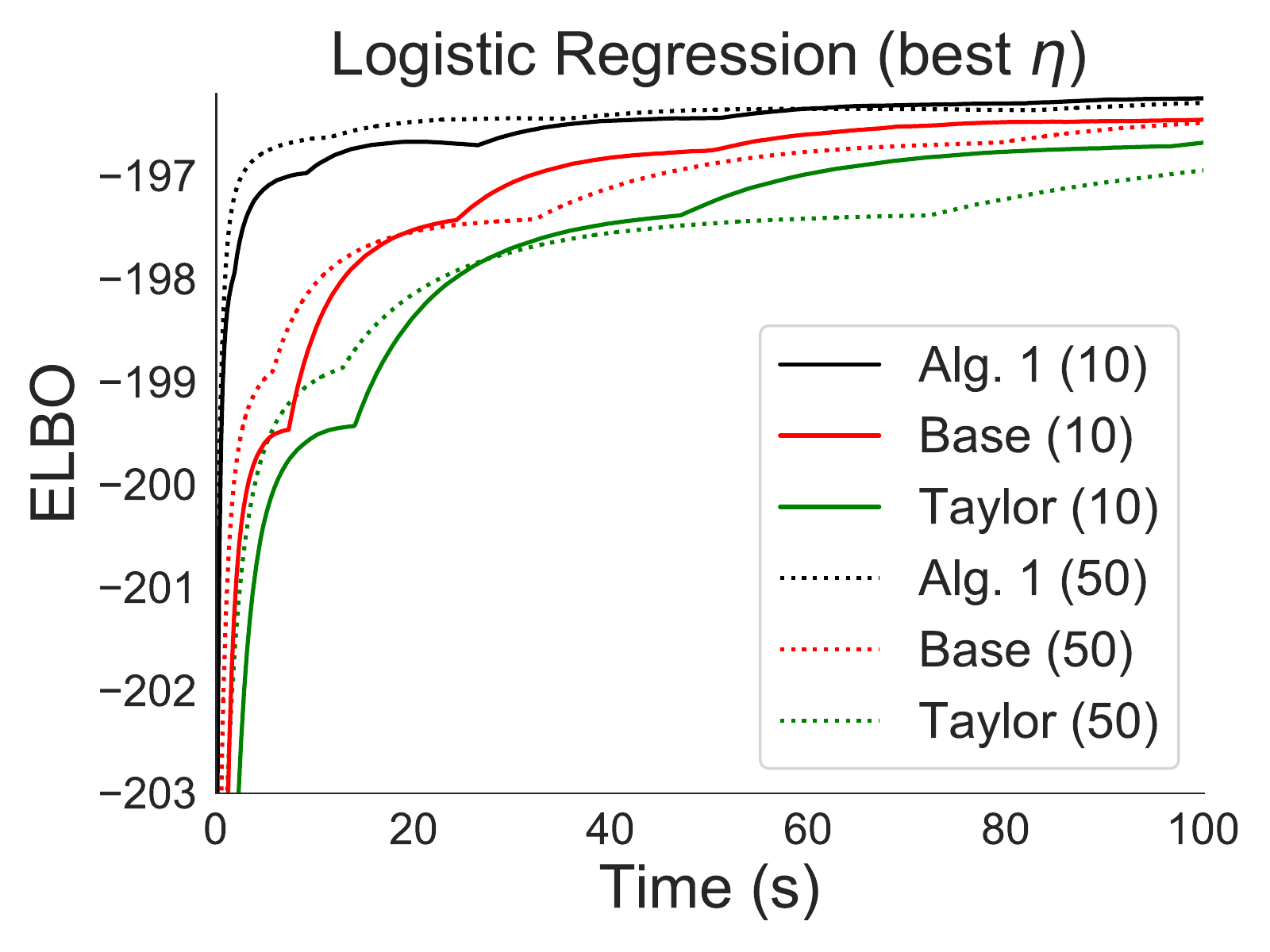}
    \caption{VI using a Gaussian with a full-rank covariance, with the best step-size chosen retrospectively. (Higher ELBO is better.)}
    \label{fig:opt2time}
    \end{center}
\end{figure}

\clearpage
\newpage

\subsection{Fully-factorized Gaussian Variational Distribution}

\begin{figure}[ht!]
    \begin{center}
    \includegraphics[scale=0.29,trim={0 0 0 0},clip]{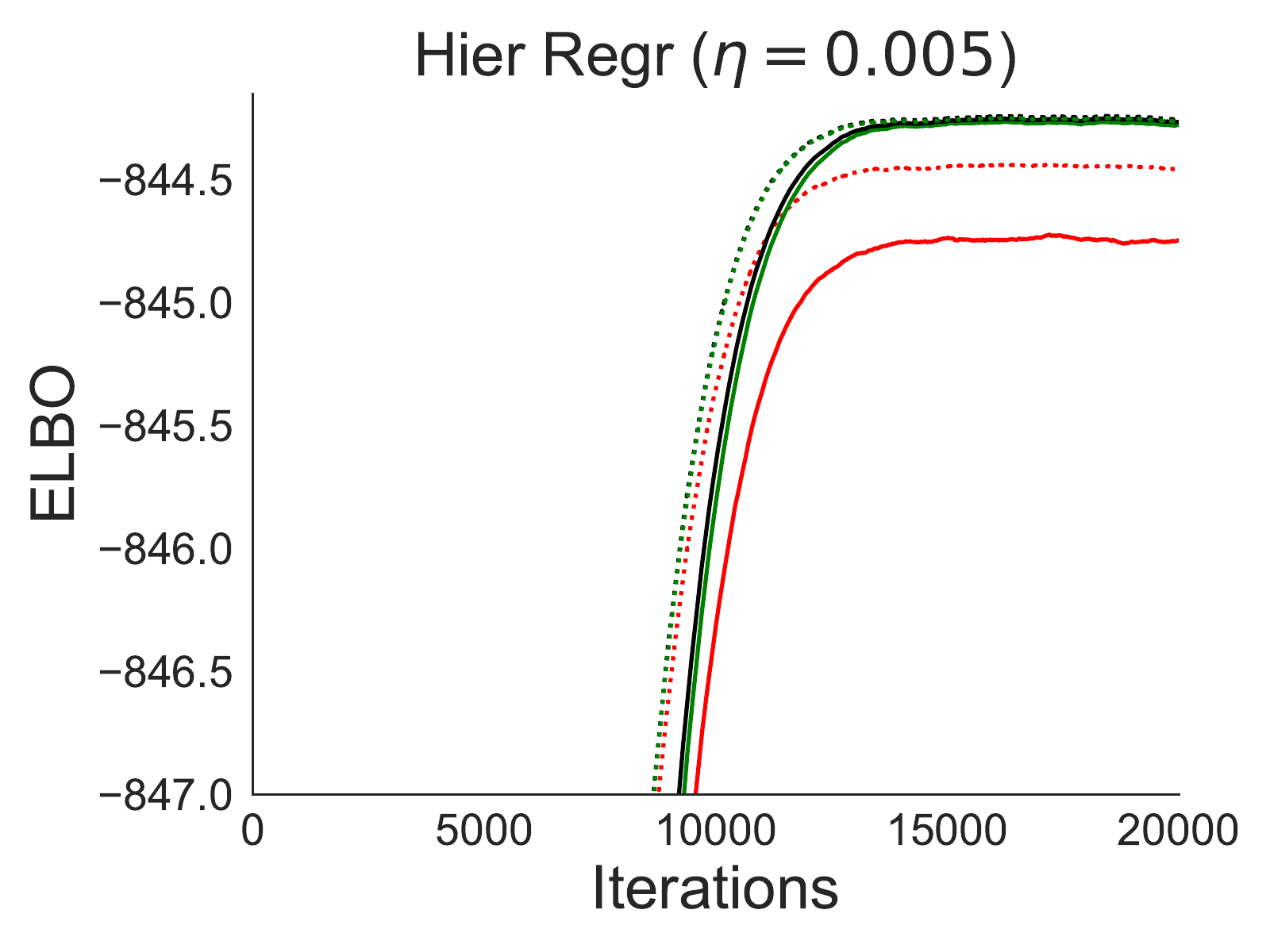}
    \includegraphics[scale=0.29,trim={3cm 0 0 0},clip]{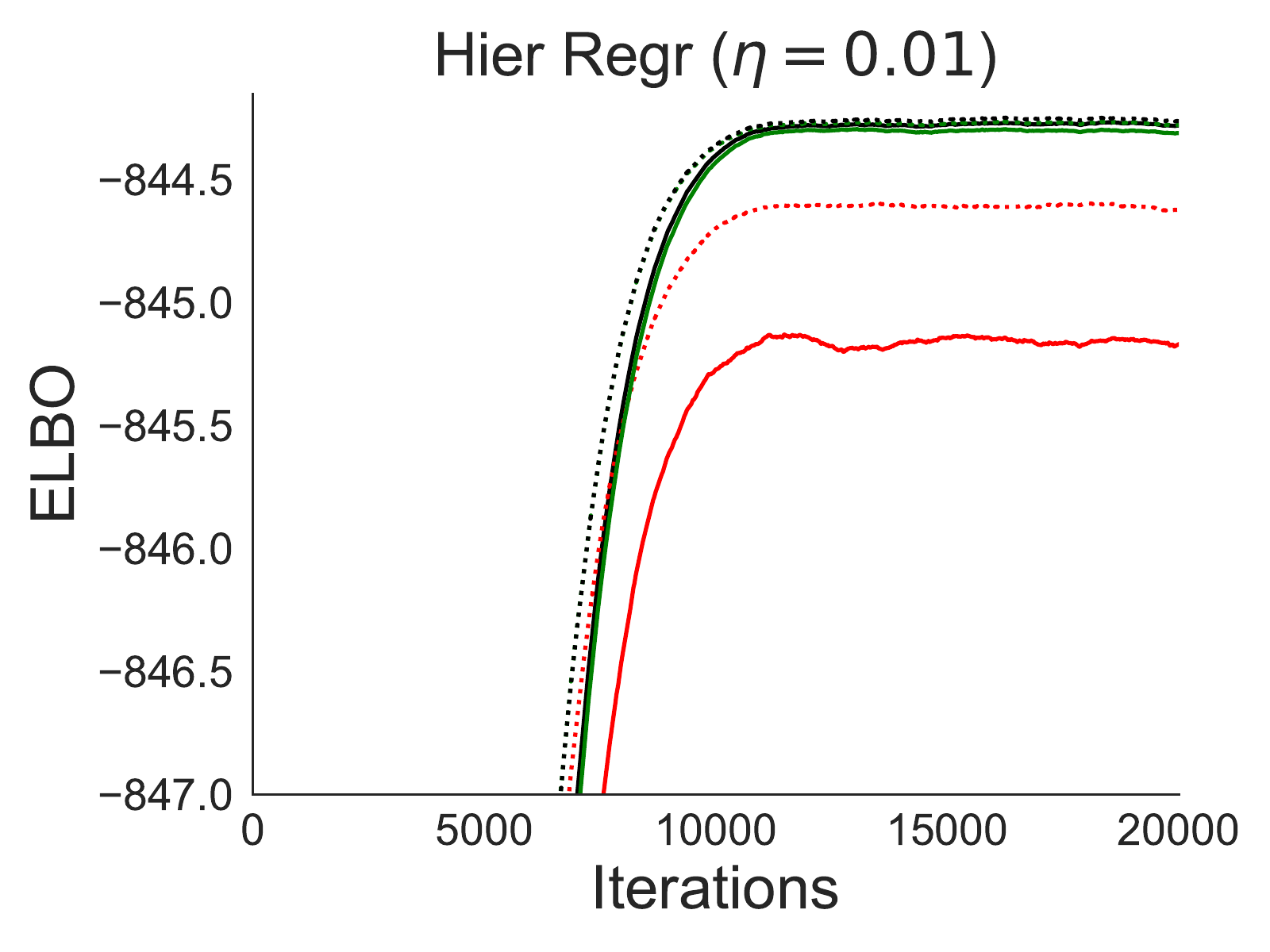}
    \includegraphics[scale=0.29,trim={3cm 0 0 0},clip]{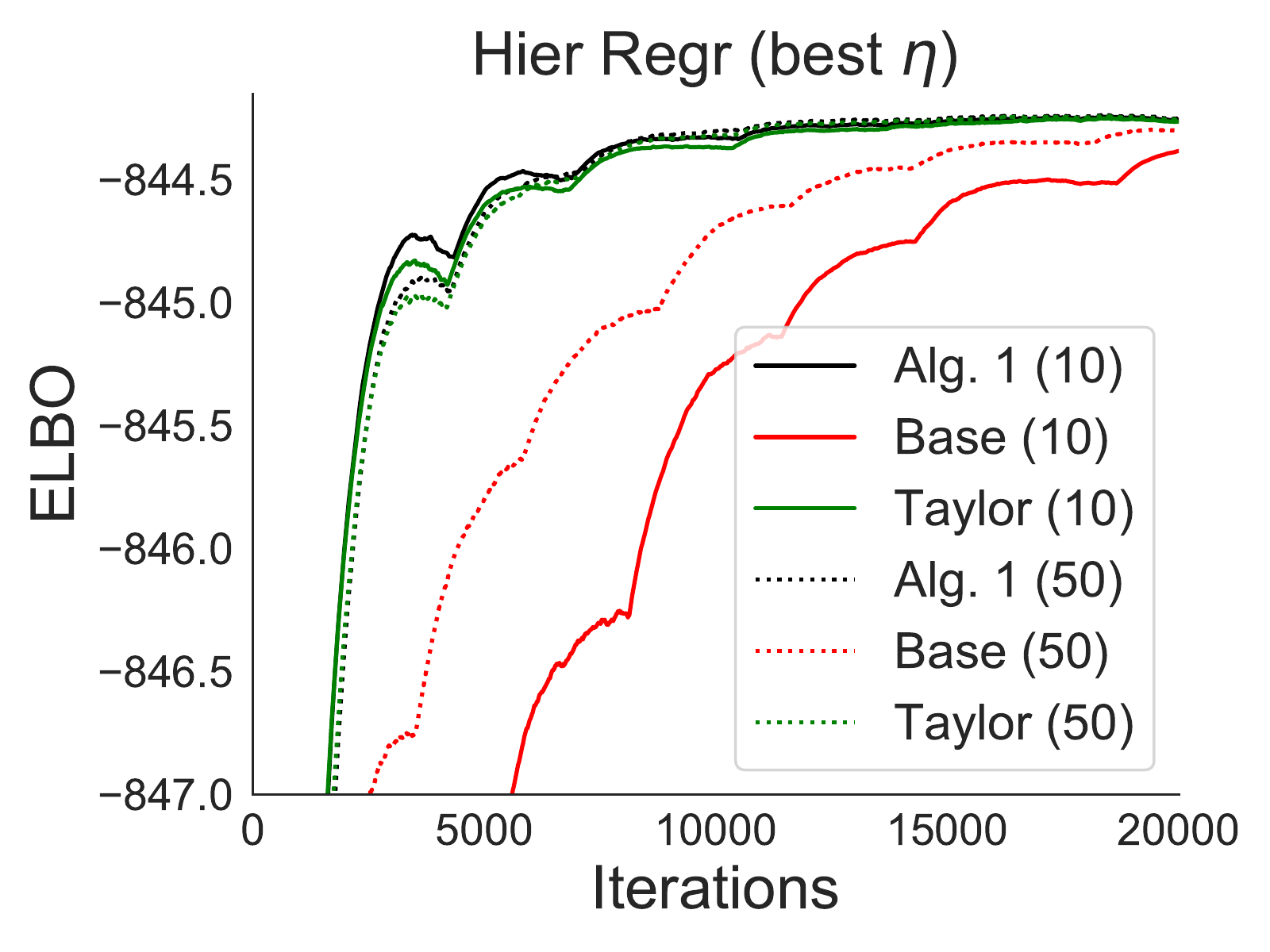}

    \includegraphics[scale=0.28,trim={0 0 0 0},clip]{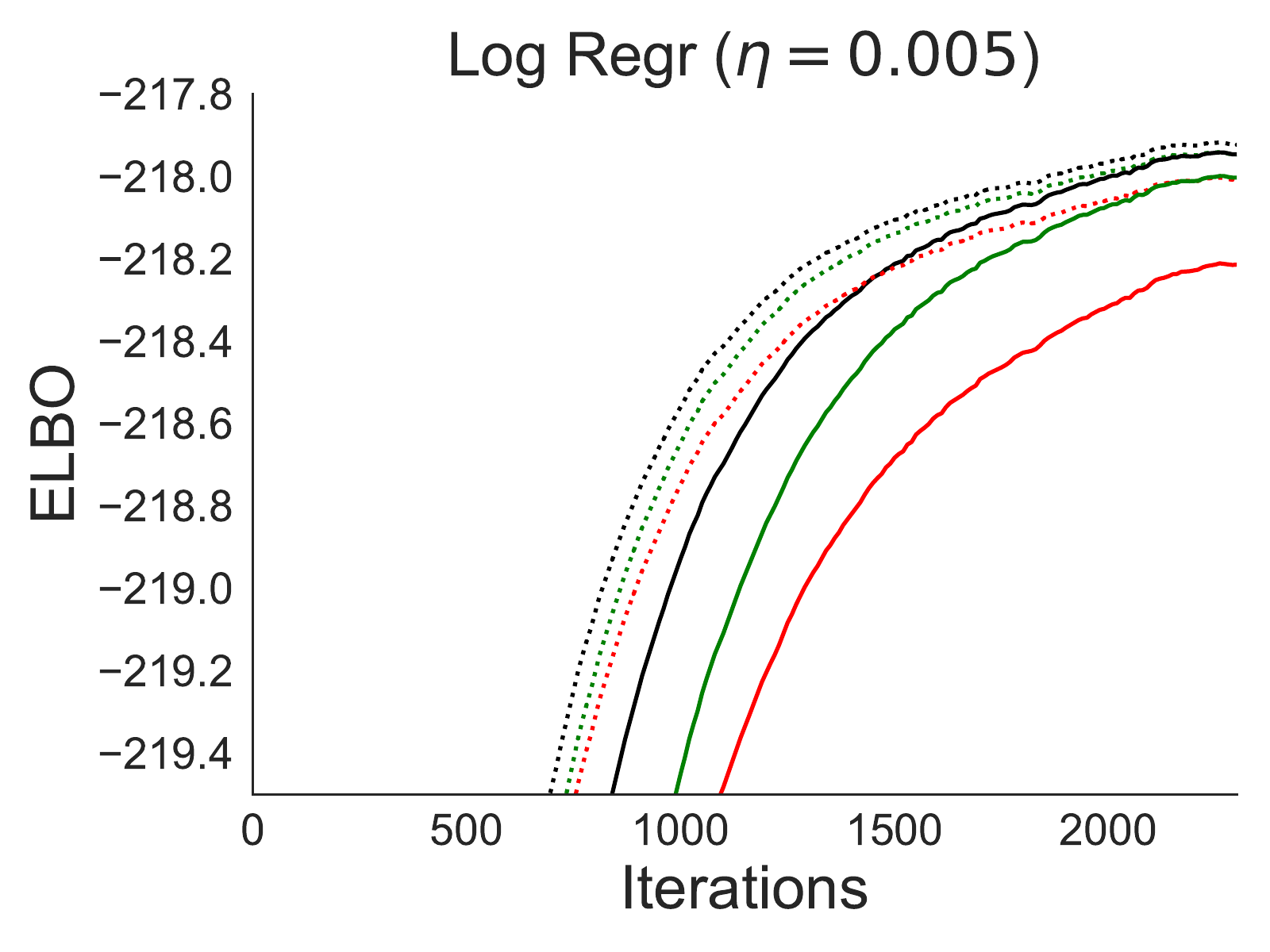}
    \includegraphics[scale=0.28,trim={3cm 0 0 0},clip]{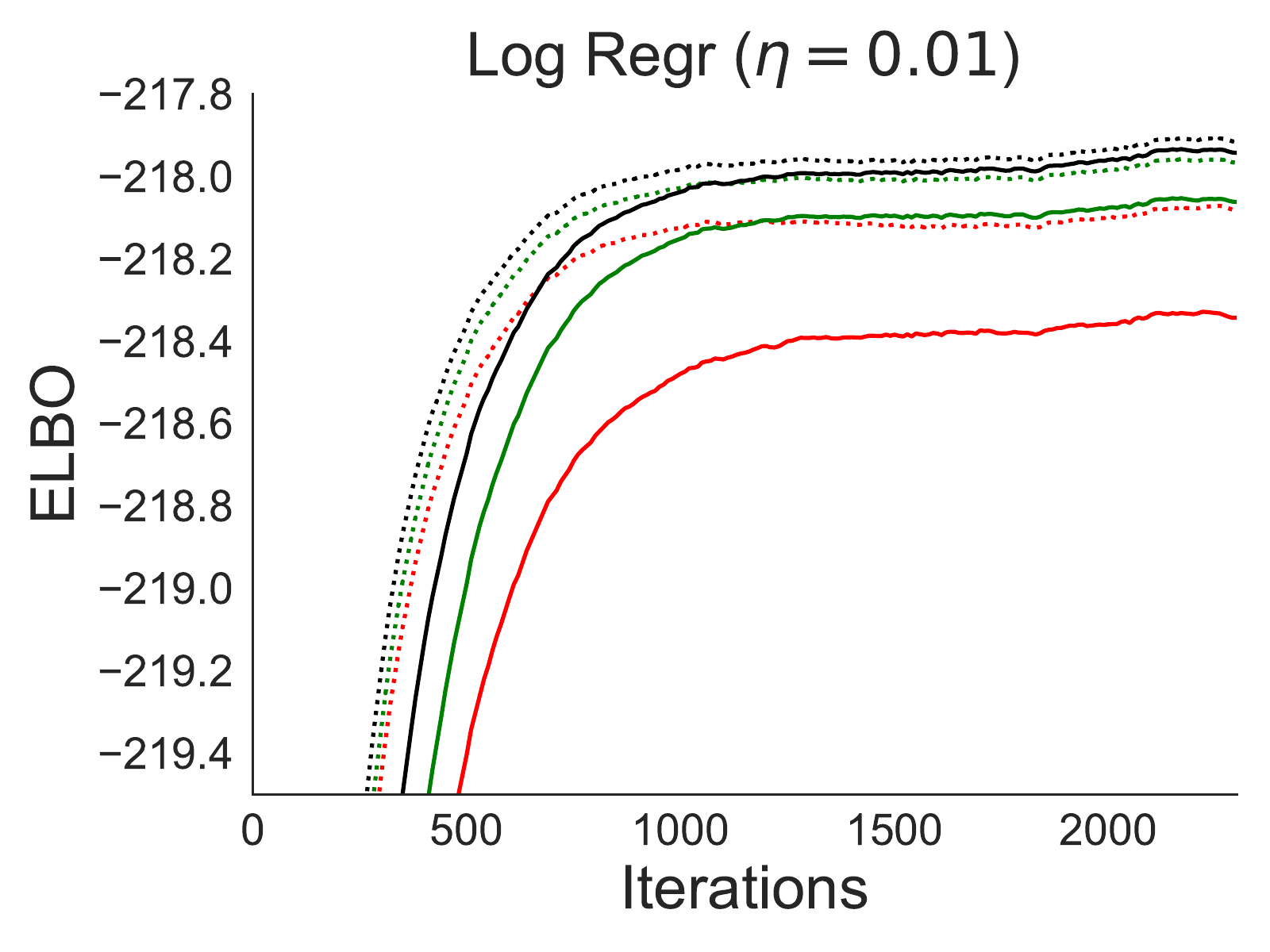}
    \includegraphics[scale=0.28,trim={3cm 0 0 0},clip]{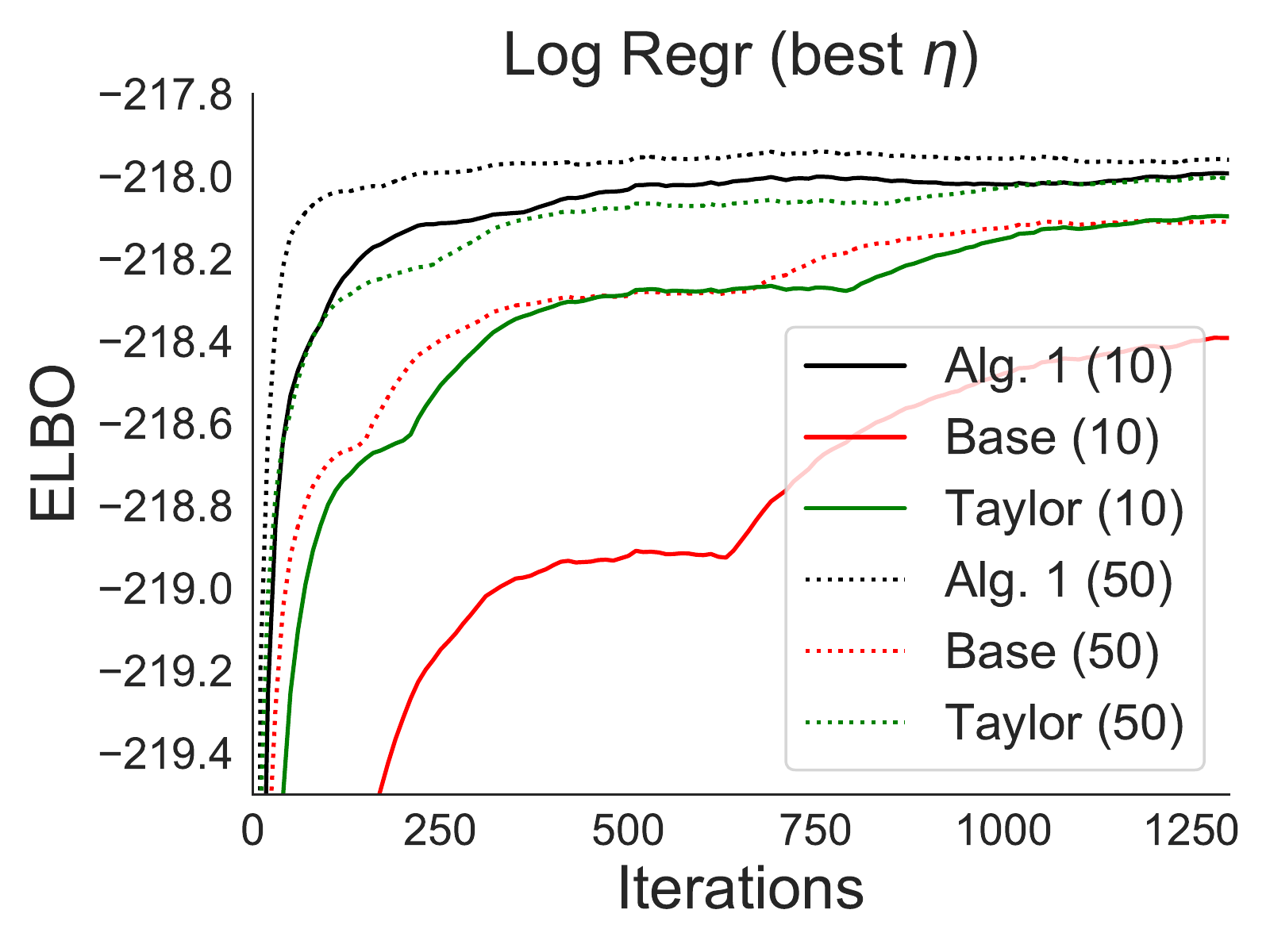}


    \includegraphics[scale=0.28,trim={0 0 0 0},clip]{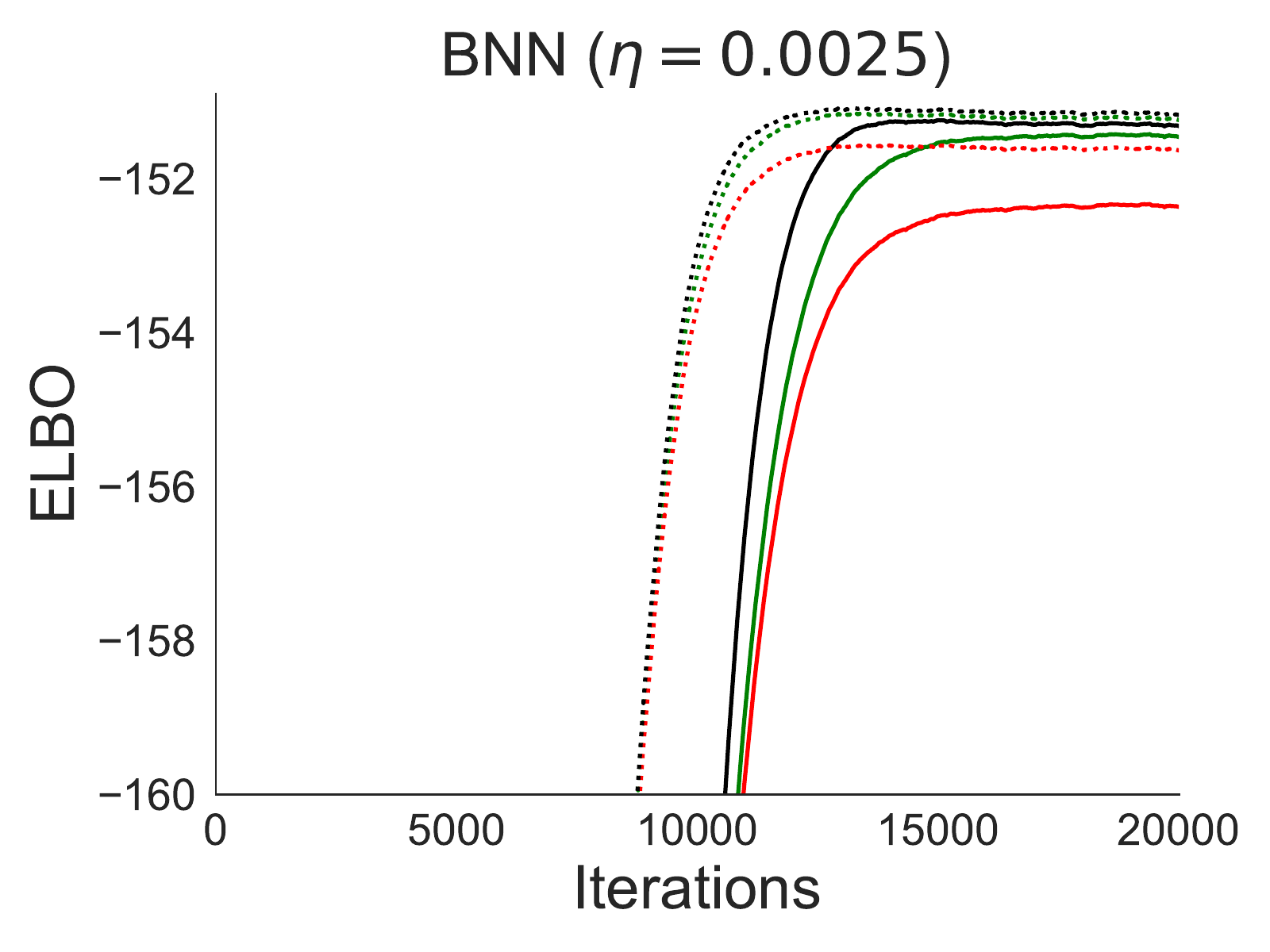}
    \includegraphics[scale=0.28,trim={2.5cm 0 0 0},clip]{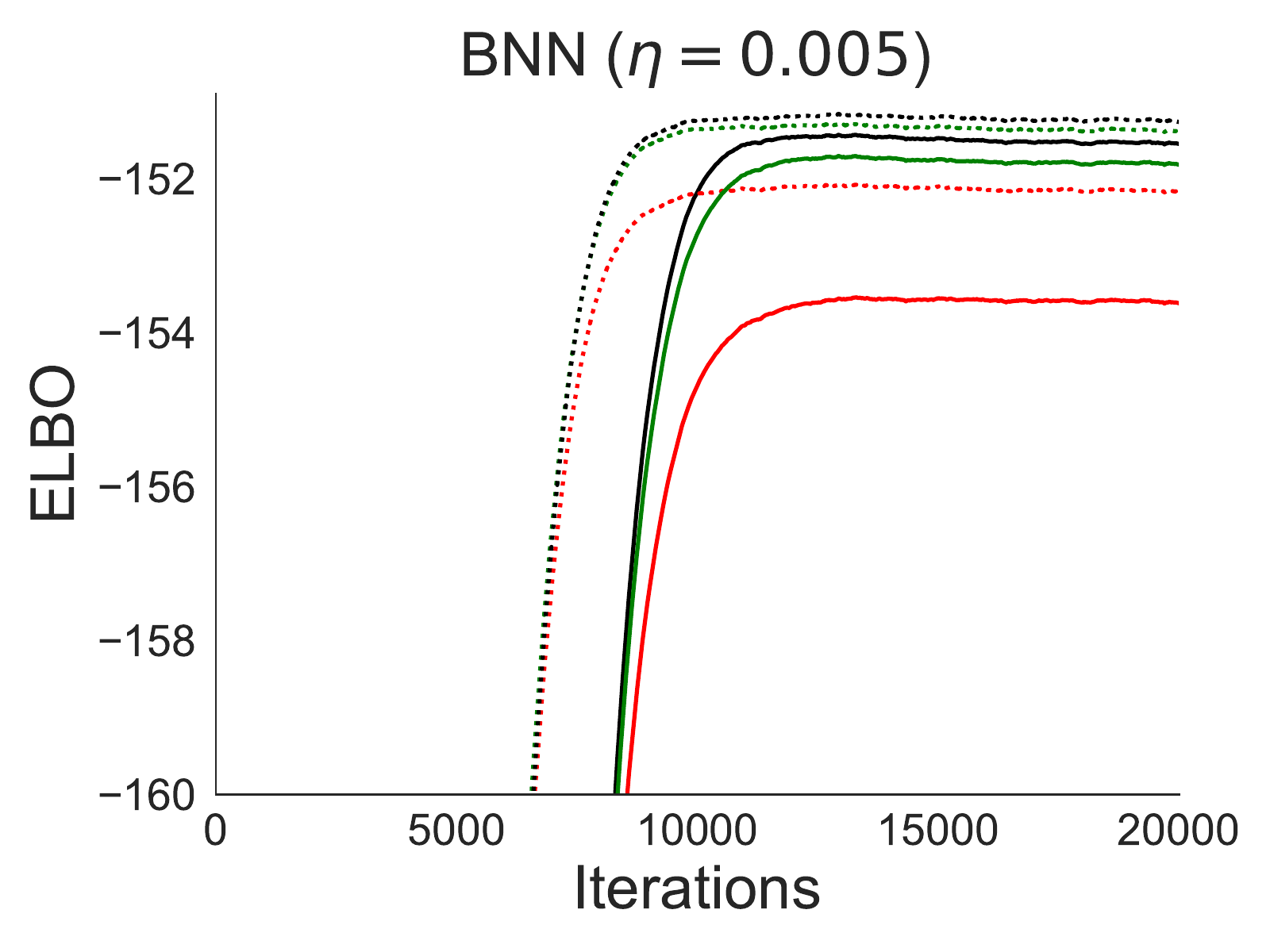}
    \includegraphics[scale=0.28,trim={2.5cm 0 0 0},clip]{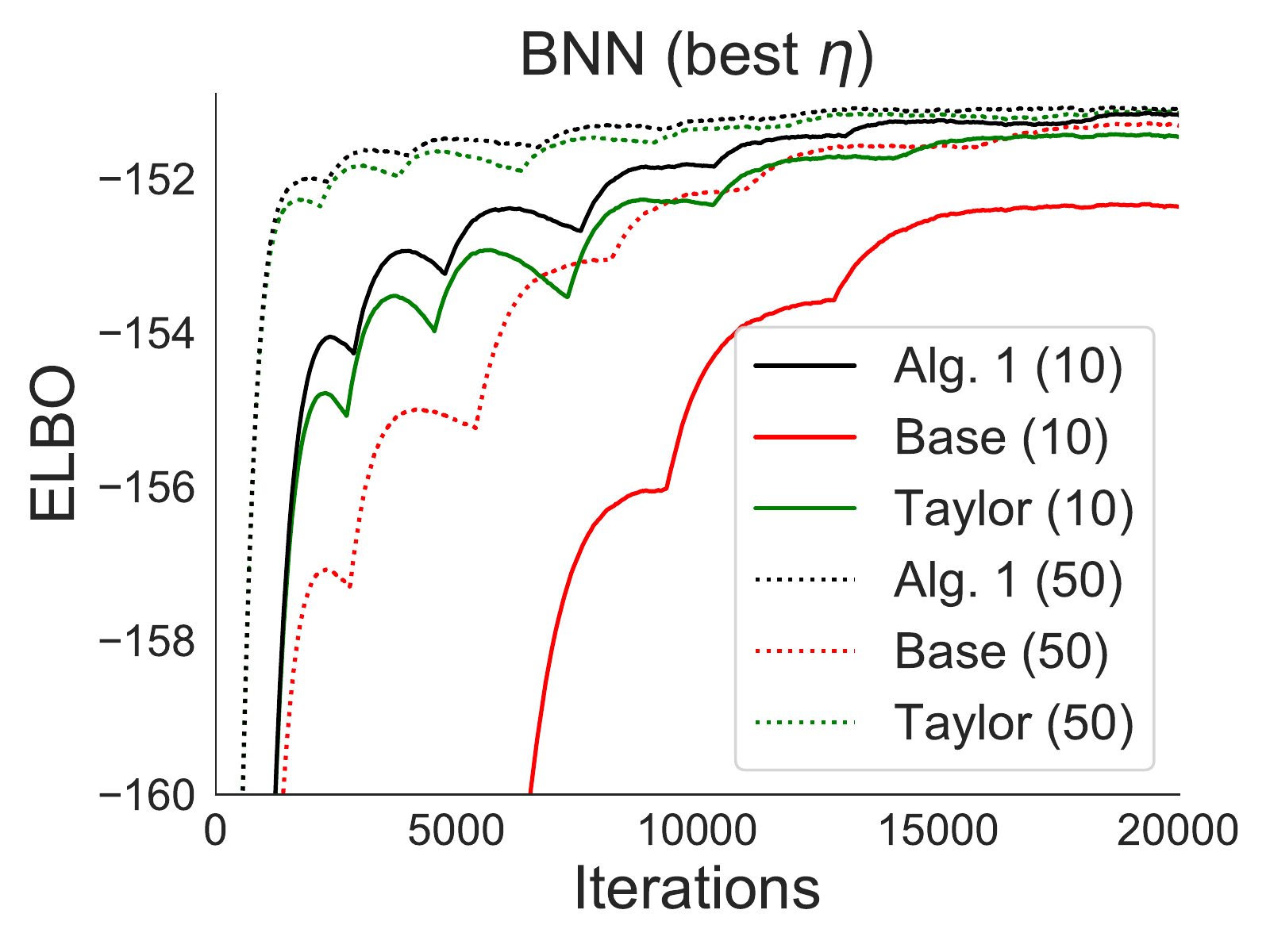}
    \caption{VI using a fully-factorized Gaussian. The first two columns show results for two different step-sizes, and the third one using the best step-size chosen retrospectively. (Higher ELBO is better.)}
    \label{fig:opt33}
    \end{center}
\end{figure}

\begin{figure}[ht!]
    \begin{center}
    \includegraphics[scale=0.26,trim={0 0 0 0},clip]{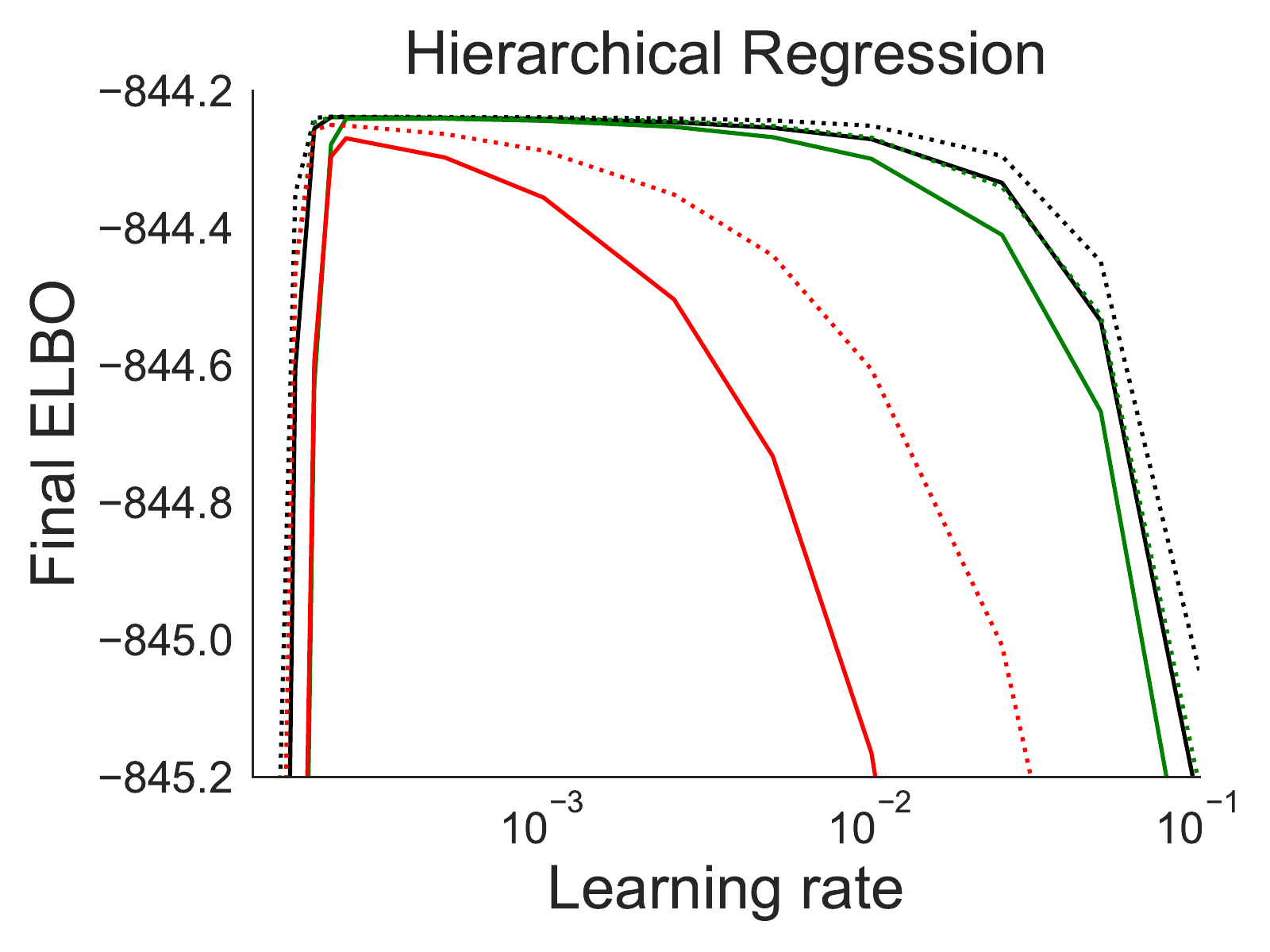}
    \includegraphics[scale=0.26,trim={1.3cm 0 0 0},clip]{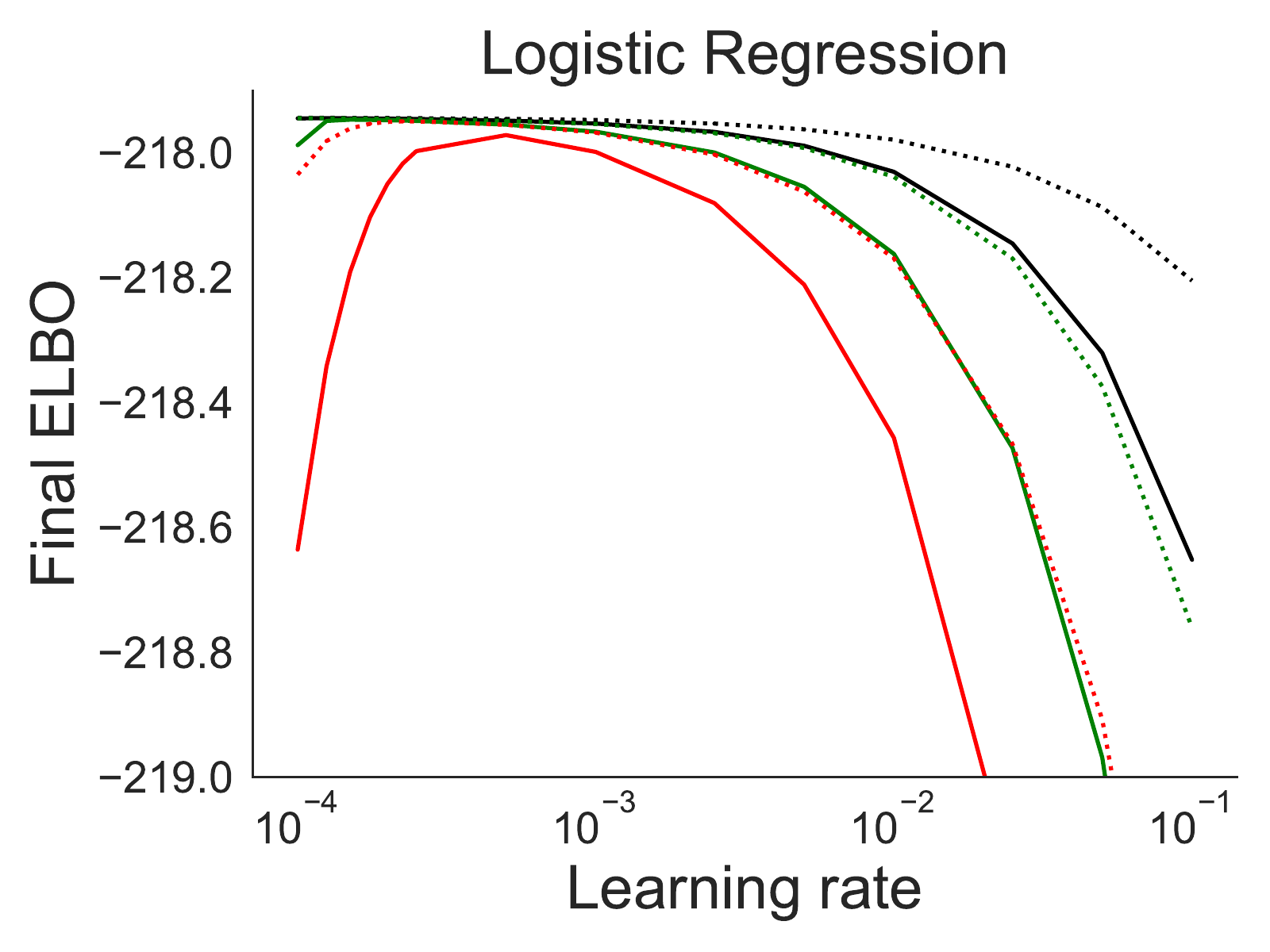}
    \includegraphics[scale=0.26,trim={1.3cm 0 0 0},clip]{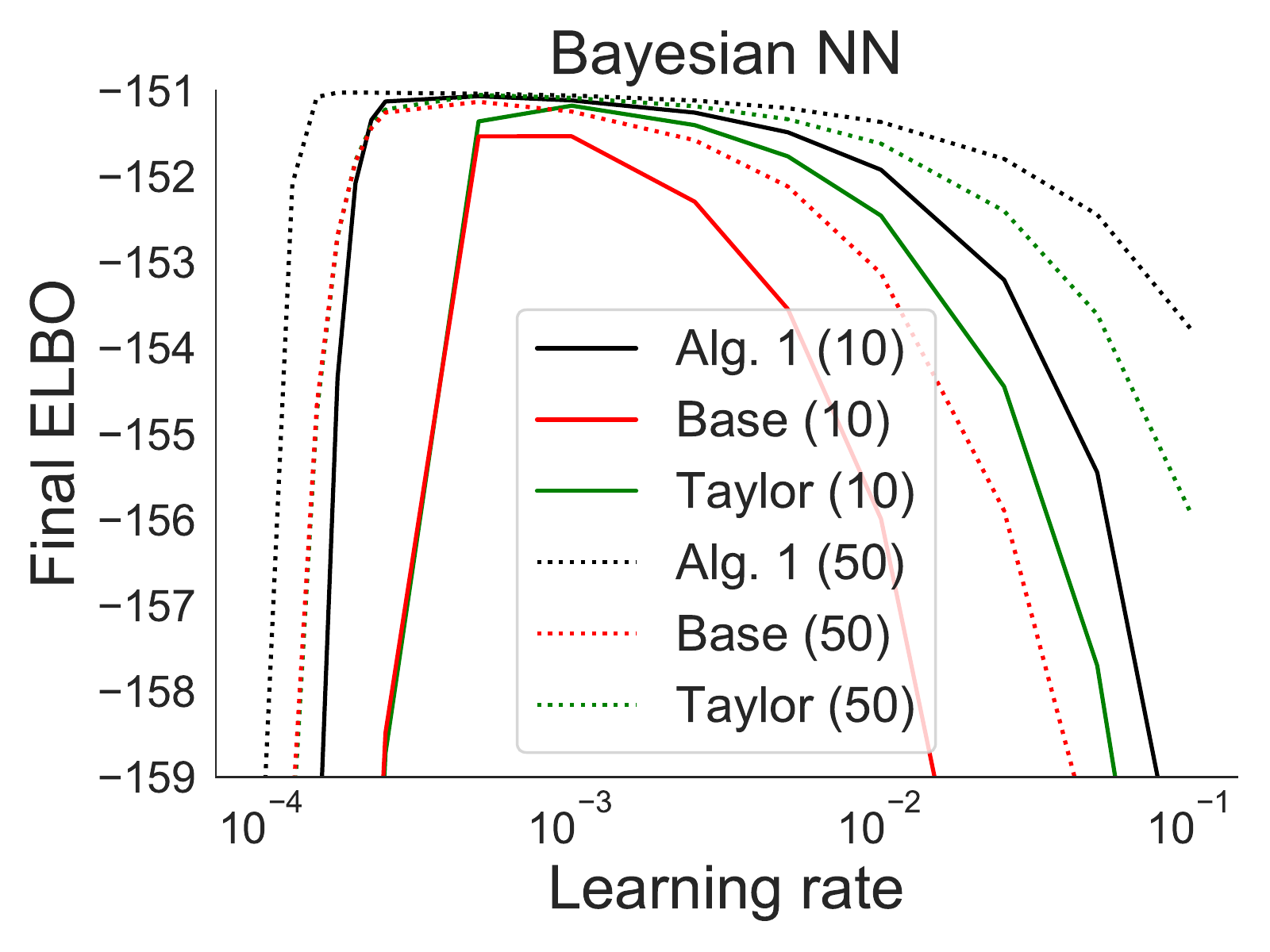}
    \caption{VI using a fully-factorized Gaussian. The plots show the final ELBO achieved after training for 40000 steps vs. step size used. (Higher ELBO is better.)}
    \label{fig:per33}
    \end{center}
\end{figure}

\begin{figure}[ht!]
    \begin{center}
    \includegraphics[scale=0.26,trim={0 0 0 0},clip]{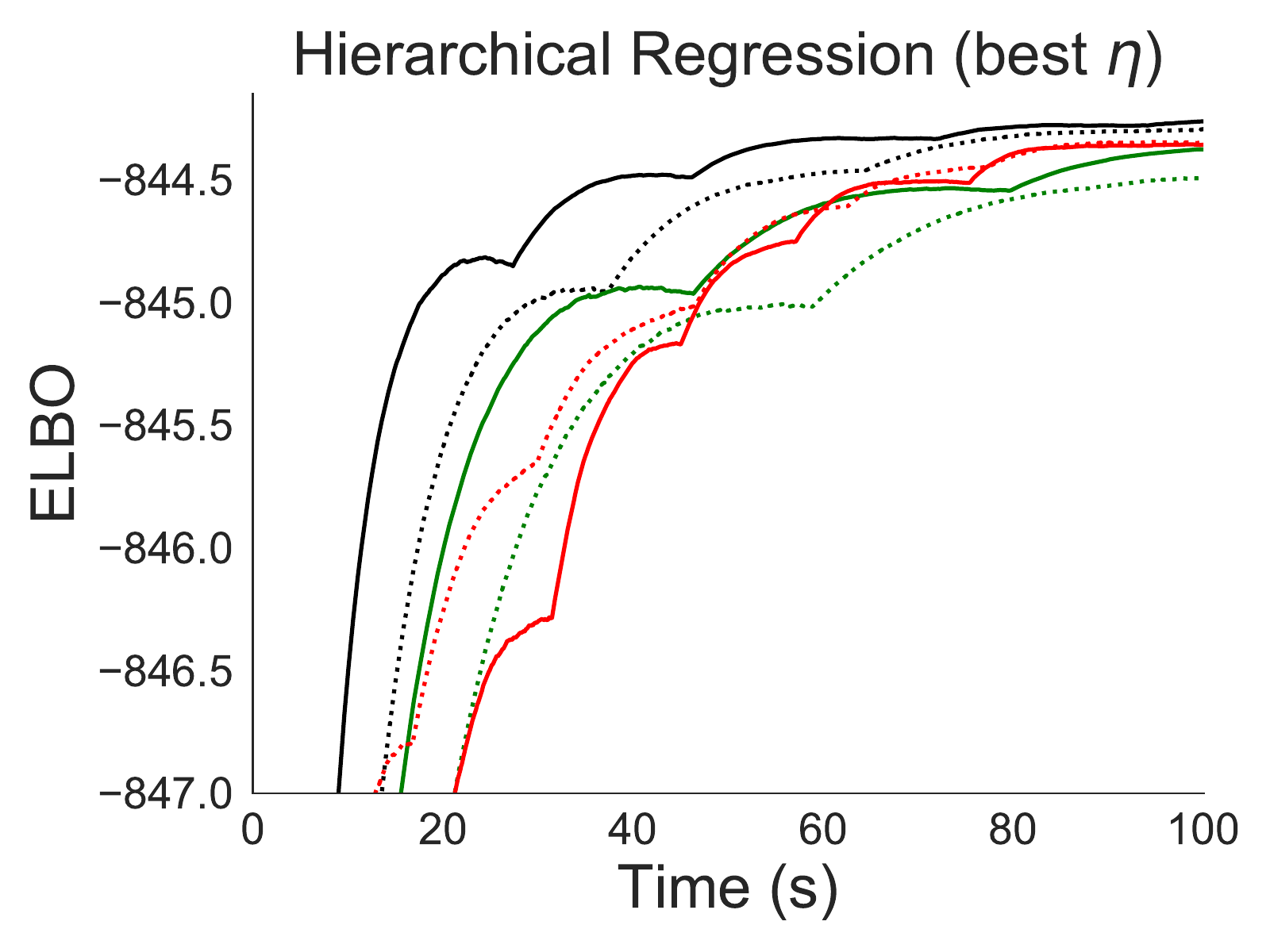}\hfill
    \includegraphics[scale=0.26,trim={1.3cm 0 0 0},clip]{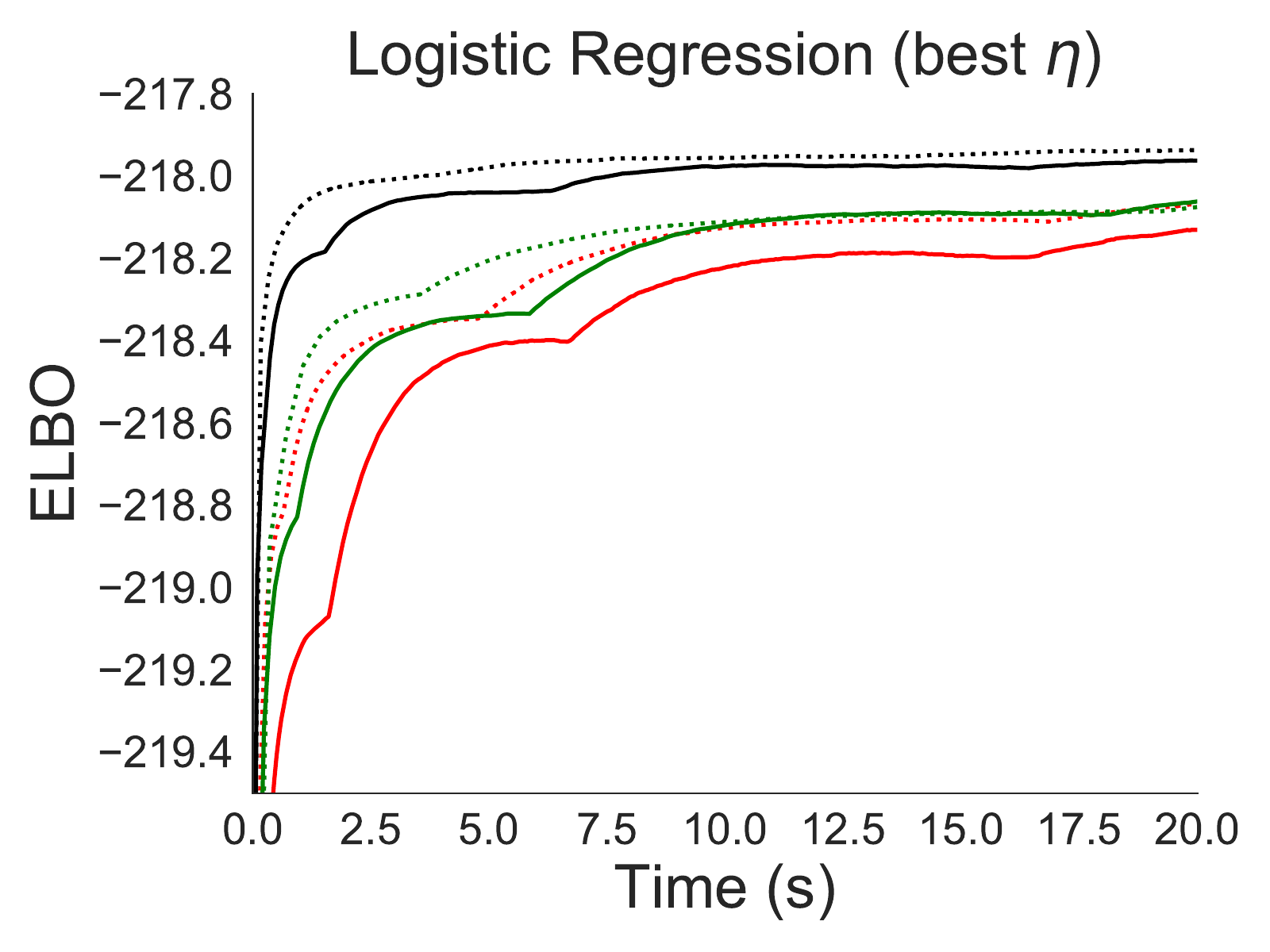}\hfill
    \includegraphics[scale=0.26,trim={1.3cm 0 0 0},clip]{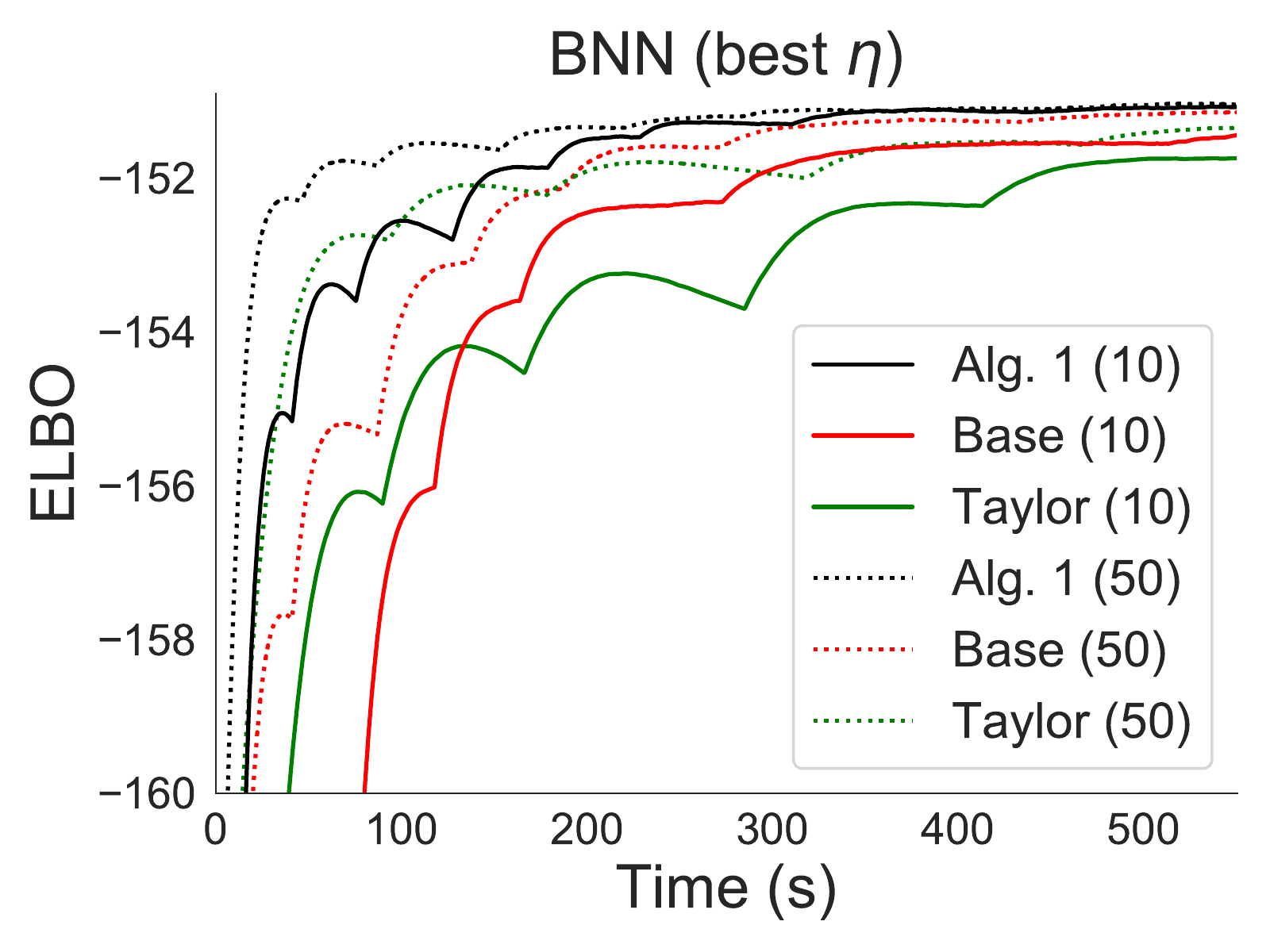}
    \caption{VI using a diagonal Gaussian, with the best step-size chosen retrospectively. (Higher ELBO is better.)}
    \label{fig:opt3time}
    \end{center}
\end{figure}

\clearpage
\newpage

\section{Results for Other Ranks} \label{sec:otherranks}

Fig.~\ref{fig:rankss} shows results obtained using different values for the control variate's rank $r_v$. For clarity, in all cases we use $M=10$ and we do not include results obtained using the Taylor expansion based control variate. It can be observed that the control variate leads to improved performance for a wide range of ranks. However, using a rank that is too low may hinder its benefits considerably (this can be clearly seen for the logistic regression model).

\begin{figure}[ht!]
    \begin{center}
    \includegraphics[scale=0.28,trim={0 0 0 0},clip]{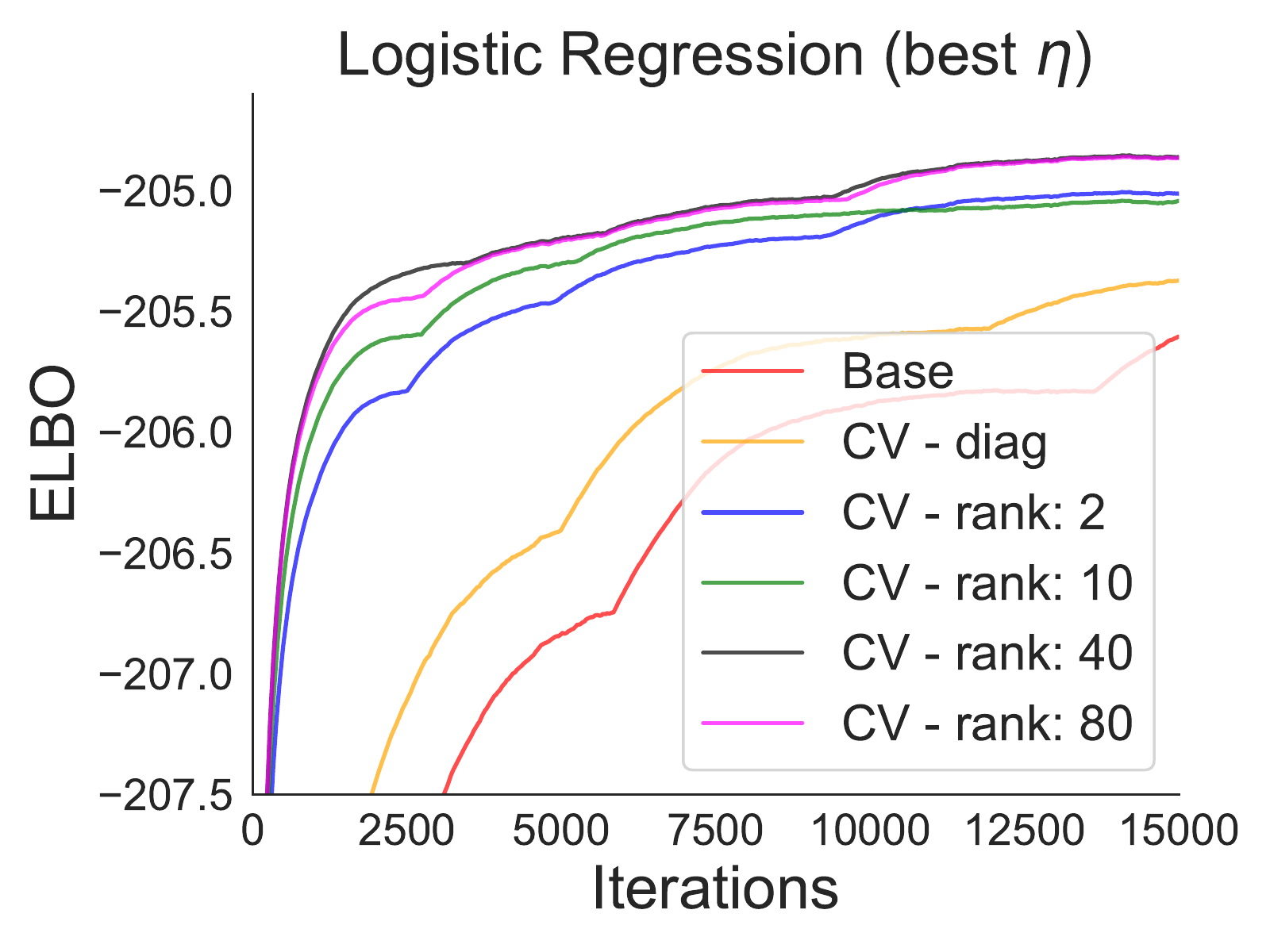}\hspace{0.3cm}
    \includegraphics[scale=0.28,trim={3cm 0 0 0},clip]{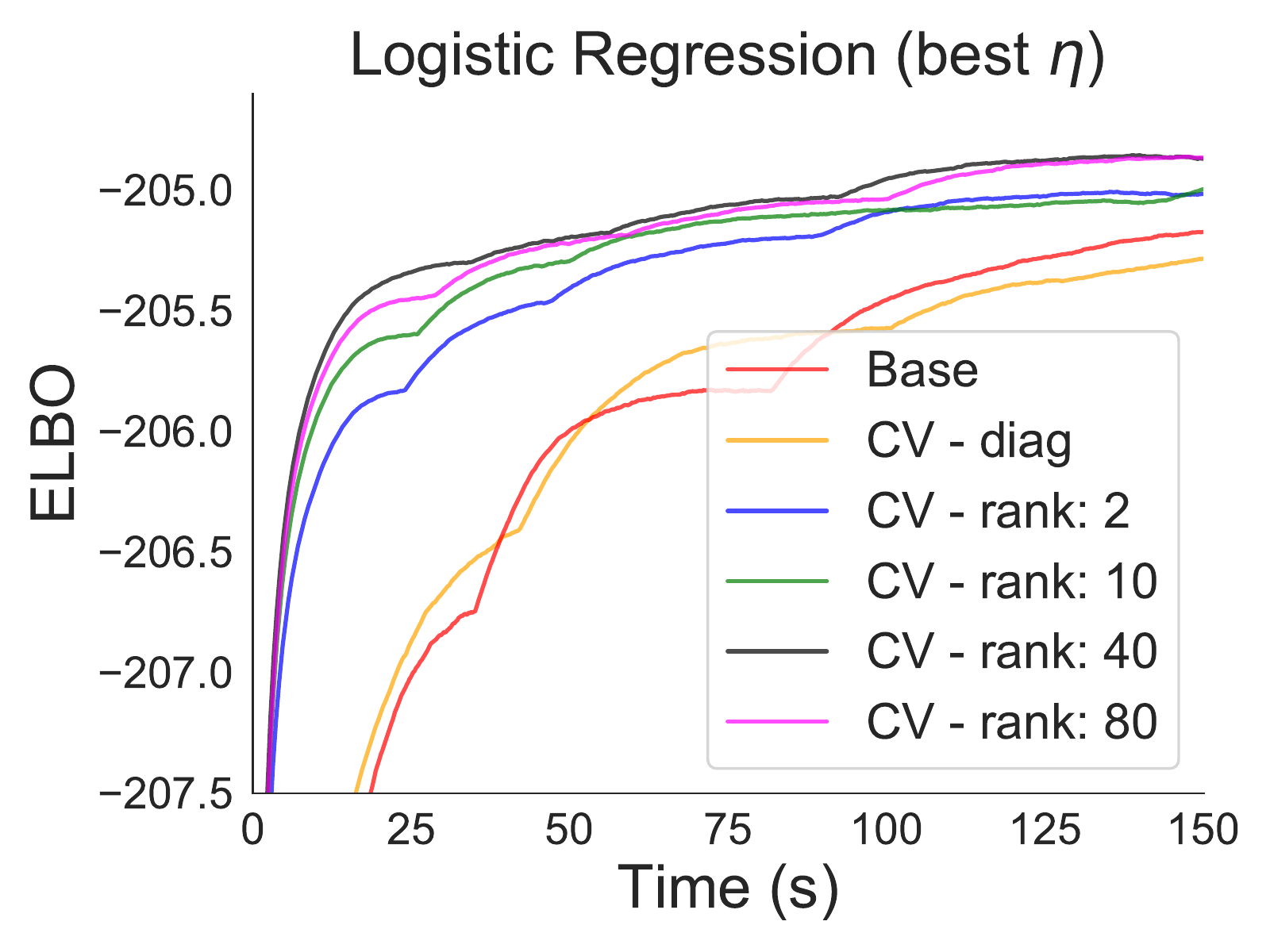}
    
    \vspace{0.5cm}
    \includegraphics[scale=0.28,trim={0 0 0 0},clip]{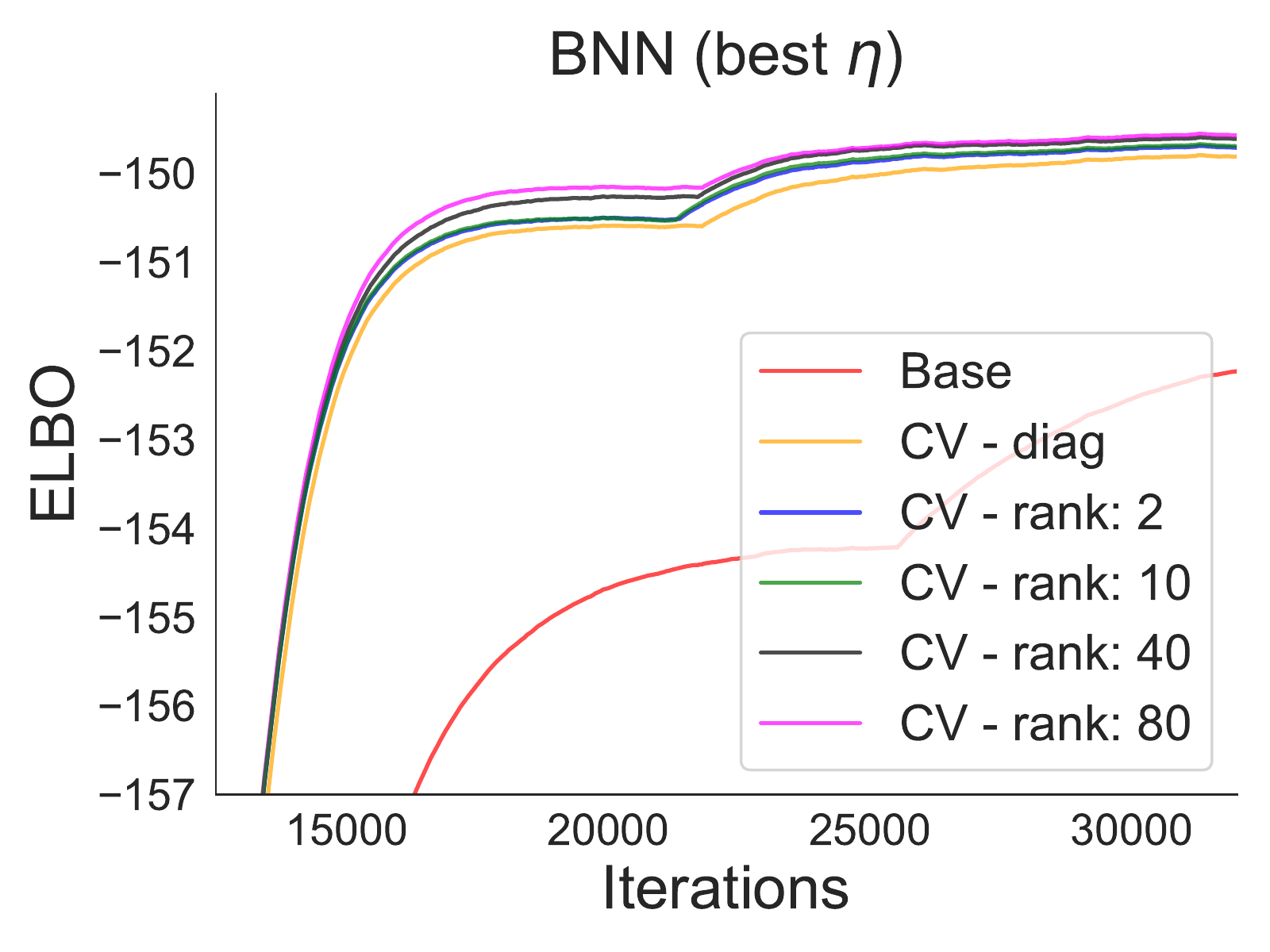}\hspace{0.3cm}
    \includegraphics[scale=0.28,trim={2.5cm 0 0 0},clip]{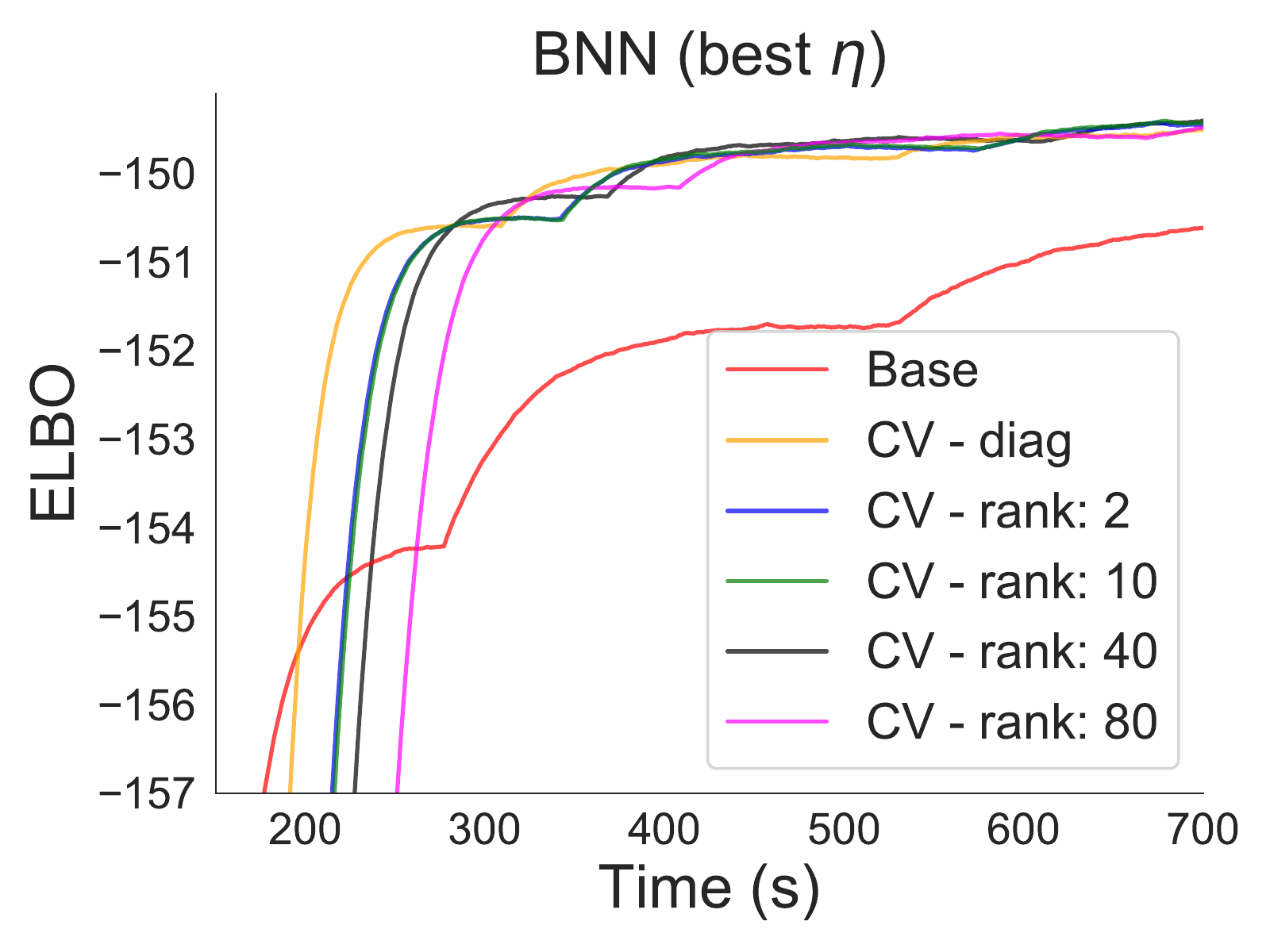}
    \caption{VI using a diagonal plus low rank Gaussian, using different ranks for our control variate.}
    \label{fig:rankss}
    \end{center}
\end{figure}

\section{Models Used} \label{app:models}

\textbf{Bayesian logistic regression:} We use a subset of $700$ rows of the \textit{a1a} dataset. In this case the posterior $p(z|x)$ has dimensionality $d = 120$. Let $\{x_i, y_i\}$, where $y_i$ is binary, represent the \textit{i}-th sample in the dataset. The model is given by
\begin{align*}
w_i & \sim \mathcal{N}(0, 1),\\
p_i & = \left(1 + \exp (w_0 + w \cdot x_i)\right)^{-1},\\
y_i & \sim \mathrm{Bernoulli}(p_i).
\end{align*}


\textbf{Hierarchical Poisson model:} By Gelman et al. \cite{frisk}. The model measures the relative stop-and-frisk events in different precincts in New York city, for different ethnicities. In this case the posterior $p(z|x)$ has dimensionality $d = 37$. The model is given by
\begin{align*}
\mu & \sim \mathcal{N}(0, 10^2)\\
\log \sigma_\alpha & \sim \mathcal{N}(0, 10^2),\\
\log \sigma_\beta & \sim \mathcal{N}(0, 10^2),\\
\alpha_e & \sim \mathcal{N}(0, \sigma_\alpha^2),\\
\beta_p & \sim \mathcal{N}(0, \sigma_\beta^2),\\
\lambda_{ep} & = \exp(\mu + \alpha_e + \beta_p + \log N_{ep}),\\
Y_{ep} & \sim \mathrm{Poisson}(\lambda_{ep}).
\end{align*}

Here, $e$ stands for ethnicity, $p$ for precinct, $Y_{ep}$ for the number of stops in precinct $p$ within ethnicity group $e$ (observed), and $N_{ep}$ for the total number of arrests in precinct $p$ within ethnicity group $e$ (which is observed).

\textbf{Bayesian neural network:} As done by Miller et al. \cite{traylorreducevariance_adam} we use a subset of 100 rows from the ``Red-wine'' dataset. We implement a neural network with one hidden layer with 50 units and Relu activations. In this case the posterior $p(z|x)$ has dimensionality $d = 653$. Let $\{x_i, y_i\}$, where $y_i$ is an integer between one and ten, represent the \textit{i}-th sample in the dataset. The model is given by
\begin{align*}
\log \alpha & \sim \mathrm{Gamma}(1, 0.1),\\
\log \tau & \sim \mathrm{Gamma}(1, 0.1),\\
w_i & \sim \mathcal{N}(0, 1/\alpha), & \mbox{(weights and biases)}\\
\hat{y}_i & = \mathrm{FeedForward}(x_i, W),\\
y_i & \sim \mathcal{N}(\hat{y}_i, 1/\tau).
\end{align*}

\section{Proof of Lemma} \label{app:proof}

\begin{lemma}
Let $\hat f(z)$ be defined as in Eq. \ref{eq:fhat}. If $q_w(z)$ is a distribution with mean $\mu_w$ and covariance matrix $\Sigma_w$, then
\begin{equation}
\E_{q_w(\rz)} \hat f_v(\rz) = b_v^\top(\mu_w - z_0) + \frac{1}{2} \mathrm{tr}(B_v \Sigma_w)
+ \frac{1}{2}\big(\mu_w^\top B_v \mu_w - z_0^\top B_v \mu_w - \mu_w^\top B_v z_0 + z_0^\top B_v z_0 \big)
\end{equation}
\end{lemma}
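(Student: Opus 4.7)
The plan is to split $\E_{q_w(\rz)} \hat f_v(\rz)$ into its linear and quadratic parts and evaluate each using only the mean $\mu_w$ and covariance $\Sigma_w$ of $q_w$ (no further distributional assumptions are needed). By linearity of expectation,
\begin{equation*}
\E_{q_w(\rz)} \hat f_v(\rz) = b_v^\top \E_{q_w(\rz)}[\rz - z_0] + \tfrac{1}{2}\, \E_{q_w(\rz)}\!\left[(\rz - z_0)^\top B_v (\rz - z_0)\right].
\end{equation*}
The linear term is immediate: $\E[\rz - z_0] = \mu_w - z_0$, giving $b_v^\top(\mu_w - z_0)$.

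For the quadratic term, I would first expand $(\rz - z_0)^\top B_v (\rz - z_0) = \rz^\top B_v \rz - z_0^\top B_v \rz - \rz^\top B_v z_0 + z_0^\top B_v z_0$ and then apply the standard identity
\begin{equation*}
\E_{q_w(\rz)}[\rz^\top B_v \rz] = \mathrm{tr}(B_v \Sigma_w) + \mu_w^\top B_v \mu_w,
\end{equation*}
which follows from writing $\rz^\top B_v \rz = \mathrm{tr}(B_v \rz \rz^\top)$, taking expectations inside the trace, and using $\E[\rz \rz^\top] = \Sigma_w + \mu_w \mu_w^\top$. The cross terms reduce by linearity to $z_0^\top B_v \mu_w$ and $\mu_w^\top B_v z_0$, and $z_0^\top B_v z_0$ is deterministic. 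Collecting everything and multiplying by $\tfrac{1}{2}$ yields exactly the stated formula.

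There is no real obstacle here: the identity $\E[\rz^\top B_v \rz] = \mathrm{tr}(B_v \Sigma_w) + \mu_w^\top B_v \mu_w$ is the only nontrivial ingredient, and it is a classical fact about second moments. The one minor bookkeeping point is that the lemma keeps the two cross terms $z_0^\top B_v \mu_w$ and $\mu_w^\top B_v \mu_w^\top z_0$ separate rather than combining them (which would require $B_v$ to be symmetric); since we are making no symmetry assumption on $B_v$, I would preserve both terms throughout the derivation.
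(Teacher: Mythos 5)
Your proposal is correct and follows essentially the same route as the paper's proof: both reduce the quadratic term to a trace via $\E[\rz^\top B_v \rz^{\vphantom{\top}}]$-type identities using only the first two moments of $q_w$, differing only in whether the expansion of $(\rz - z_0)$ happens before or after invoking the trace trick. The only blemish is a typo in your closing remark (``$\mu_w^\top B_v \mu_w^\top z_0$'' should read $\mu_w^\top B_v z_0$); the derivation itself is fine.
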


\begin{proof}
We have 

\[\hat f(z) = b^\top (z-z_0) + \frac{1}{2} (z - z_0)^\top B (z - z_0).\]

Taking the expectation with respect to $q_w(z)$ gives
\begin{equation}
\E_{q_w(z)} \hat f(z) = b^\top(\mu_w - z_0)
+ \frac{1}{2} \underbrace{\E_{q_w(z)}[(z - z_0)^\top B (z - z_0)}_{t(w)}] \label{eq:proof1}
\end{equation}

We now deal with the term in the second line of Eq. \ref{eq:proof1}, $t(w)$.
\begin{align*}
t(w) & = \E_{q_w(z)}[(z - z_0)^\top B (z - z_0)]\\
& = \E_{q_w(z)}[\mathrm{tr}\left((z - z_0)^\top B (z - z_0)\right)]\\
& = \E_{q_w(z)}[\mathrm{tr}\left(B (z - z_0) (z - z_0)^\top\right)]\\
& = \mathrm{tr} \left( B \E_{q_w(z)}[(z - z_0) (z - z_0)^\top]\right)\\
& = \mathrm{tr} \left( B \E_{q_w(z)}[z z^\top - z z_0^\top - z_0 z^\top + z_0 z_0^\top]\right)\\
& = \mathrm{tr} \left( B \E_{q_w(z)}[z z^\top - z z_0^\top - z_0 z^\top + z_0 z_0^\top]\right)\\
& = \mathrm{tr} \left( B \E[(z-\mu_w + \mu_w) (z-\mu_w+\mu_w)^\top - z z_0^\top - z_0 z^\top + z_0 z_0^\top]\right)\\
& = \mathrm{tr} \left( B \left(\E[(z-\mu_w) (z-\mu_w)^\top] + \mu_w \mu_w^\top - \mu_w z_0^\top - z_0 \mu_w ^\top + z_0 z_0^\top \right) \right)\\
& = \mathrm{tr} \left( B \left(\Sigma_w + \mu_w \mu_w^\top - \mu_w z_0^\top - z_0 \mu_w ^\top + z_0 z_0^\top \right) \right)\\
& = \mathrm{tr} \left( B \Sigma_w\right) + \mu_w^\top B \mu_w^\top - z_0^\top B \mu_w - \mu_w^\top B z_0 + z_0^\top B z_0.
\end{align*}

Combining Eq. \ref{eq:proof1} with the expression for $t(w)$ completes the proof.
\end{proof}

\newpage

\section{Details on Taylor-based Control Variates} \label{app:miller}

There is closely related work exploring Taylor-expansion based control variates for reparameterization gradients by Miller et al. \cite{traylorreducevariance_adam}. They develop a control variate for the case where $q_w$ is a fully-factorized Gaussian.

\textbf{Note:} In their paper, Miller et al. derived a control variate for the case where $q_w$ is a fully-factorized Gaussian parameterized by its mean $\mu = [\mu_1, \hdots, \mu_d]$ and standard deviation $\sigma = [\sigma_1, \hdots, \sigma_d]$ ($w = \{\mu, \sigma\}$). That is, $q_w(z) = \mathcal{N}(z|\mu, \mathrm{diag}(\sigma^2))$. However, in their code (publicly available) they use a different parameterization. Instead of using $\sigma$, they use a different set of parameters, $\psi$, to represent the log of the standard deviation of $q_w$. That is, $q_w(z) = \mathcal{N}(z|\mu, \mathrm{diag}(e^{2\psi}))$. In order to explain, replicate and compare against the method they use, we derive the details of their approach for the latter case. (This derivation is not present in their paper, but follows all the steps closely.)

Miller et al. introduced a control variate to reduce the variance of the estimator of the gradient with respect to the mean parameters $\mu$ and a control variate to reduce the variance of the estimator of the gradient with respect to the log-scale parameters $\psi$. We will denote these control variates $c_\mu(w, \epsilon)$ and $c_\psi(w, \epsilon)$, respectively. Their main idea is to use curvature information about the model (via its Hessian) to construct both control variates. The control variate they propose for the mean parameters $c_\mu(w, \epsilon)$ can be computed efficiently via Hessian-vector products. On the other hand, the original proposal for $c_\psi(w, \epsilon)$ requires computing the (often) intractable Hessian $\hess$. To avoid this the authors propose an alternative control variate $\tilde c_\psi(w, \epsilon)$ based on some tractable approximations.

The authors noted that the use of these approximations lead to a significant deterioration of the control variate's variance reduction capability. However, no formal analysis that explained this was presented. We study these approximations in detail and explain exactly why this quality reduction is observed. Simply put, we observe that these approximations lead to a control variate that does not use curvature information about the model at all.

The rest of this section is organized as follows. In \ref{app:finalmiller}, we present the resulting control variates obtained after applying the required approximations to deal with the intractable Hessian: $c_\mu(w, \epsilon)$ and $\tilde c_\psi(w, \epsilon)$. In \ref{app:originalmiller}, we present Miller et al. original (intractable) control variate, $c_\psi(w, \epsilon)$, explain the source of intractability, and explain how the approximation used leads to the "weaker" control variate $\tilde c_\mu(w, \epsilon)$ presented in \ref{app:finalmiller}. Finally, in \ref{app:millerdrawbacks} we describe the drawbacks of the approach, and extend the approach to the case where $q_w$ is a Gaussian with a full-rank or diagonal plus low rank covariance matrix.

\subsection{Final control variate after approximations} \label{app:finalmiller}

Let $q_w(z)$ be the variational distribution. The gradient that must be estimated is given by

\begin{align}
\nabla_w \E_{q_w(\rz)} f(\rz) & = \nabla_w \E_{q_0(\repsilon)} f(\Tr)\\
& = \E_{q_0(\repsilon)} \nabla_w f(\Tr)\\
& = \E_{q_0(\repsilon)} \left(\frac{d\, \Tr}{d\,w}\right)^\top \nabla f(\Tr),
\end{align}

where $\nabla f(\T)$ is $\nabla f(z)$ evaluated at $z = \T$. The gradient estimator obtained with a sample $\epsilon \sim q_0$ is given by

\begin{equation}
g(\epsilon) = \left(\frac{d\, \T}{d\,w}\right)^\top \nabla f(\T). \label{eq:mgm}
\end{equation}

Miller et al. \cite{traylorreducevariance_adam} propose to build a control variate using an approximation $\nabla \hat f(z)$ of $\nabla f(z)$. The control variate is given by the difference between the gradient estimator using this approximation and its expectation,

\begin{align}
c(w, \epsilon) & = \left(\frac{d\, \T}{d\,w}\right)^\top \nabla \hat f(\T) - \E_{q_0(\repsilon)} \left(\frac{d\, \T}{d\,w}\right)^\top \nabla \hat f(\T). \label{eq:0m0}
\end{align}

The quality of the control variate directly depends on the quality of the approximation $\nabla \hat f$. If $\nabla \hat f$ is very close to $\nabla f$, the control variate is able to approximate and cancel the estimator's noise. On the other hand, bad approximations lead to a small (or none) reduction in variance.

This idea is applied to fully-factorized Gaussian with parameters $\psi$ representing the log-scale. The reparameterization transformation is given by

\begin{align}
\mathcal{T}_w(\epsilon) = \mu + e^\psi \odot \epsilon,
\end{align}

where $\odot$ is the element-wise product between vectors. The parameters are $w=(\mu,\psi)$. The control variate is derived differently for $\mu$ and $\psi$. We discuss the two cases separately.
 
\textbf{Control variate for $\mu$}. For $\mu$, the authors set $\nabla \hat f(z)$ to be a first order Taylor expansion of the true gradient around $\mu$. That is, $\nabla \hat f(z) = \nabla f(\mu) + \nabla^2 f(\mu) (z - \mu)$, where $\hess$ is the Hessian of $f$ evaluated at $z = \mu$. Then, it is not hard to show that the control variate becomes\footnote{
 To see this, observe that
 
 \begin{align}
\nabla \hat f(\T)
&= \nabla f(\mu) + \nabla^2 f(\mu) (\T - \mu) = \nabla f(\mu) + \nabla^2 f(\mu)( e^\psi \odot \epsilon).
 \end{align}
  
The Jacobian of $\mathcal{T}$ with respect to $\mu$ is $ \frac{d\,\mathcal{T}_w(\epsilon)}{d \, \mu} = I$. Then, we can calculate that
 \begin{align}
 c_\mu(w, \epsilon)
& =  \left(\frac{d\, \T}{d\,\mu}\right)^\top \nabla \hat f(\T) - \E_{q_0(\repsilon)} \left(\frac{d\, \T}{d\,\mu}\right)^\top \nabla \hat f(\T) \\
& = \nabla f(\mu) + \nabla^2 f(\mu)(e^\psi \odot \epsilon) - \E \left[ \nabla f(\mu) + \nabla^2 f(\mu)(e^\psi \odot \epsilon) \right]\\
& = \nabla^2 f(\mu)(e^\psi \odot \epsilon).
 \end{align}
 
 }
 
 \begin{align}
c_\mu(w, \epsilon) &  = \nabla^2 f(\mu)(e^\psi \odot \epsilon).
\end{align}

This control variate can be computed efficiently using Hessian-vector products, and will be effective when the approximation $\nabla \hat f(z)$ is close to $\nabla f(z)$ for $z \sim q_w(z)$.

The following derivation for $\psi$ is different from that given by Miller et al. We show that it is equivalent in Sec. \ref{app:originalmiller}.

\textbf{Control variate for $\psi$}. For $\psi$, it is necessary -- in order to obtain a closed-form expectation -- to use a {\em constant} approximation of the form $\nabla \hat{f}(z) = \nabla f(\mu)$ (using the first order Taylor expansion as for $c_\mu(w, \epsilon)$ leads to intractable terms, see Section \ref{app:originalmiller}). Then, it turns out that the expectation part of the control variate is zero, and so the control variate becomes\footnote{In this case the Jacobian of $\mathcal{T}$ with respect to $\psi$ is
 $ \frac{d\,\mathcal{T}_w(\epsilon)}{d \, \psi} = \mathrm{diag}(e^\psi \odot \epsilon).$ It follows that

 \begin{align}
 \tilde c_\psi(w, \epsilon)
& =  \left(\frac{d\, \T}{d\,\psi}\right)^\top \nabla \hat f(\T) - \E_{q_0(\repsilon)} \left(\frac{d\, \T}{d\,\psi}\right)^\top \nabla \hat f(\T) \\
& =  \mathrm{diag}(e^\psi \odot \epsilon) \nabla f(\mu) - \E_{q_0(\repsilon)}\mathrm{diag}(e^\psi \odot \epsilon) \nabla f(\mu) \\
& =  e^\psi \odot \epsilon \odot  \nabla f(\mu) - \E_{q_0(\repsilon)}e^\psi \odot \epsilon \odot \nabla f(\mu) \\
& =  e^\psi \odot \epsilon \odot  \nabla f(\mu) 
 \end{align}}

\begin{align}
 \tilde c_\psi(w, \epsilon) = e^\psi \odot \epsilon \odot  \nabla f(\mu) \label{eq:millerbad}
\end{align}
 
 It can be observed that $\tilde c_\psi(w, \epsilon)$ does not use curvature information about the model. This control variate will be effective only in cases where $\nabla f(\mu)$ is close to $\nabla f(z)$ for $z \sim q_w(z)$.

\subsection{Original Derivation} \label{app:originalmiller}

Miller et al. \cite{traylorreducevariance_adam} gave a more elaborate derivation of the above control variate for $\psi$. They start with the same first-order Taylor expansion $\nabla \hat f(z) = \nabla f(\mu) + \nabla^2 f(\mu) (z - \mu)$ as used for $\mu$. Applied directly, this suggests the control variate\footnote{Again, $\frac{d\,\mathcal{T}_w(\epsilon)}{d \, \psi} = \mathrm{diag}(e^\psi \odot \epsilon)$ and $\nabla \hat f(\T) = \nabla f(\mu) + \nabla^2 f(\mu)( e^\psi \odot \epsilon)$. We thus have that

\begin{align}
 c_\psi(w, \epsilon)
& = \left(\frac{d\, \T}{d\,\psi}\right)^\top \nabla \hat f(\T) - \E_{q_0(\repsilon)} \left(\frac{d\, \T}{d\,\psi}\right)^\top \nabla \hat f(\T) \\
& = \mathrm{diag}(e^\psi \odot \epsilon) \left( \nabla f(\mu) + \nabla^2 f(\mu)( e^\psi \odot \epsilon) \right)- \E_{q_0(\repsilon)} \mathrm{diag}(e^\psi \odot \epsilon) \left( \nabla f(\mu) + \nabla^2 f(\mu)( e^\psi \odot \epsilon) \right) \\
& = \left( \nabla f(\mu) + \nabla^2 f(\mu)( e^\psi \odot \epsilon) \right) \odot e^\psi \odot \epsilon - \E_{q_0(\repsilon)} \left( \nabla^2 f(\mu)( e^\psi \odot \epsilon) \right) \odot (e^\psi \odot \epsilon) \\
\end{align}
Finally, we can observe that $\E \left[ \left(\nabla f(\mu) + \nabla^2 f(\mu)(e^\psi \odot \epsilon)\right) \odot \epsilon \odot e^s \right] = \mathrm{diag}(\nabla^2 f(\mu)) \odot e^{2\psi}$.
}

\small
\begin{align}
c_\psi(w, \epsilon)
& = \left(\nabla f(\mu) + \nabla^2 f(\mu)(e^\psi \odot \epsilon)\right) \odot \epsilon \odot e^\psi - \underbrace{\E_{q_0(\epsilon)} \left(\nabla f(\mu) + \nabla^2 f(\mu)(e^\psi \odot \epsilon)\right) \odot \epsilon \odot e^\psi}_{\mathrm{diag}(\nabla^2 f(\mu)) \odot e^{2\psi}}. \label{eq:origcpsi}
\end{align}\normalsize

The first term from Eq. \ref{eq:origcpsi} can be computed efficiently using Hessian-vector products. The second term, however, is often intractable, since it requires the diagonal of the Hessian. In such cases, the authors propose to apply a further estimation process to estimate it using a baseline \cite{bekas2007estimator, VIforMCobjectives_mnih}. The idea is that often gradients are estimated in a minibatch, based on a set of samples $\epsilon_1, \hdots, \epsilon_N$. Then, the expectation can be estimated without bias using the other samples in the minibatch. This results in the control variate for sample $i$ of
\small
\begin{align}
c_\psi(w, \epsilon_i) & = \left(\nabla f(\mu) + \nabla^2 f(\mu)(e^\psi \odot \epsilon_i)\right) \odot \epsilon \odot e^\psi 
 - \underbrace{\frac{1}{N - 1} \sum_{\substack{j = 1 \\ j \neq i}}^N \left(\hess (e^\psi \odot \epsilon_j)\right) \odot \epsilon_j \odot e^\psi}_{\mathrm{baseline}}.
\end{align}\normalsize
At a first glance it may appear that this control variate uses curvature information from the model via the Hessian $\hess$. However, a careful inspection shows that all these terms cancel out. The control variate for the full minibatch is simply

\begin{align}
c_\psi(w, \epsilon_1, \cdots, \epsilon_N) &= \sum_{i=1}^N c_\psi(w, \epsilon_i) = \sum_{i=1}^N \nabla f(\mu) \odot \epsilon_i \odot e^\psi.
\end{align}

This, of course, is exactly the same as taking a minibatch of the control variate derived in Eq. \ref{eq:millerbad}. Thus, the ideas of minibatch and baseline may somewhat obscure what is happening. It is not necessary to invoke the machinery of a baseline, nor to draw samples in a minibatch. A zero-th order Taylor expansion is equivalent, and has the practical advantage of remaining valid with a single sample. While some details of the baseline procedure were not available in the published paper, we confirmed this is equivalent to the control variate used in the publicly available code.

\subsection{Limitations of the approach and extensions} \label{app:millerdrawbacks}

One limitation of the above approach is that the control variate for $\psi$ is not very effective. Unless the diagonal of the Hessian is tractable, it uses a very crude approximation for $\nabla f(z)$. Thus, one would naturally expect this control variate to perform worse when the diagonal of the Hessian is not tractable. Indeed, this can be observed in the results obtained by Miller et al. \cite{traylorreducevariance_adam}. Table 1 in their paper shows that the tractable control variate (Eq. \ref{eq:millerbad}, tractable), leads to a variance reduction several orders of magnitude worse than the one obtained using the control variate based on the true Hessian (Eq. \ref{eq:origcpsi}, often intractable to compute).

In their simulations, this relatively poor performance for $\psi$ does not represent a big inconvenience. That is because of the following empirical observation: when using a fully-factorized Gaussian as variational distribution most of the gradient variance comes from mean parameters $\mu$, where a much better approximation of $\nabla f$ can be used. However, our results in this paper show that with non fully-factorized distributions most of the variance is often contributed by the scale parameters (see Fig. \ref{fig:results_underover_s4}).

A second limitation is that their approach requires manual distribution-specific derivations. More specifically, in order to use the control variate with another distribution the expectation

\[ \E \jact^\top \nabla \hat f(\Tr)\]

must be computed. In order to do so, a closed form expression for the Jacobian of $\T$ is required. (One cannot use automatic differentiation for this since a mathematical expression for the Jacobian is needed in order to derive the expectation). Thus, extending the approach to other variational distributions is not trivial, and the difficulty depends on the variational distribution chosen. We now present three cases, two for which the extension can be done without much work (full-rank and diagonal plus low rank Gaussians), and other for which the extension requires extensive calculations (Householder flows \cite{householderflows}).

\textbf{Full-rank Gaussian:} In this case we have $q_w(z) = \mathcal{N}(z|\mu, \Sigma)$. The parameters are $w = (\mu, S)$, where S parameterizes the covariance matrix as $S S^\top = \Sigma$, and reparameterization is given by $z = \mu + S \epsilon$. If we let $\mathrm{vec}(S)$ be a vector that contains all rows of $S$ in order, we get that the required Jacobians are given by

\begin{equation}
\frac{d\,\mathcal{T}_w(\epsilon)}{d\,\mu} = I \,\,\,\, \mbox{ and } \,\,\,\, \frac{d\,\mathcal{T}_w(\epsilon)}{d\,\mathrm{vec}(S)} = 
\left[ \begin{array}{cccc}
\epsilon^\top & 0_d^\top & \hdots & 0_d^\top\\
0_d^\top & \epsilon^\top & \hdots & 0_d^\top\\
& \hdots &  & \\
0_d^\top & 0_d^\top & \hdots & \epsilon^\top\\
\end{array} \right],
\end{equation}

where $\epsilon^\top$ is a row vector of dimension $d$ and $0_d^\top$ is the zero row vector of dimension $d$. The Jacobian $\frac{d\,\mathcal{T}_w(\epsilon)}{d\,\mathrm{vec}(S)}$ has dimension $d \times d^2$. Following section \ref{app:finalmiller} and using the above expressions for the Jacobians we get 

\begin{equation}
c_\mu(w, \epsilon) = \hess S \epsilon \,\,\,\,\,\, \mbox{ and } \,\,\,\,\,\, \tilde c_S(w, \epsilon) = \nabla f(\mu) \epsilon^\top.
\end{equation}

Both $c_\mu(w, \epsilon)$ and $\tilde  c_S(w, \epsilon)$ can be computed efficiently.

\textbf{Diagonal plus low rank Gaussian: } In this case we have $q_w(z) = \mathcal{N}(z|\mu, \Sigma)$. The parameters are $w = (\mu, \psi, U)$, where $\mu$ and $\psi$ are vectors of dimension $d$, and $U$ is a matrix of size $d \times r$. The covariance is parameterized as $\Sigma = \mathrm{diag}(e^{2\psi}) +  U U^\top$. Reparameterization is given by $z = \mu + e^\psi \odot \epsilon_d + U \epsilon_r$, where $\epsilon_d$ and $\epsilon_r$ are independent samples of standard Normal distributions of dimension $d$ and $r$, respectively. In this case the required Jacobians are given by

\small
\begin{equation}
\frac{d\,\mathcal{T}_w(\epsilon_d, \epsilon_r)}{d\,\mu} = I
\,\, , \,\,
\frac{d\,\mathcal{T}_w(\epsilon_d, \epsilon_r)}{d \, \psi} = \mathrm{diag}(e^\psi \odot \epsilon_d)
\,\,\,\, \mbox{ and } \,\,\,\, \frac{d\,\mathcal{T}_w(\epsilon_d, \epsilon_r)}{d\,\mathrm{vec}(U)} = 
\left[ \begin{array}{cccc}
\epsilon_r^\top & 0_r^\top & \hdots & 0_r^\top\\
0_r^\top & \epsilon_r^\top & \hdots & 0_r^\top\\
& \hdots &  & \\
0_r^\top & 0_r^\top & \hdots & \epsilon_r^\top\\
\end{array} \right].
\end{equation}\normalsize

Following section \ref{app:finalmiller} and using the above expressions for the Jacobians we get 

\begin{align}
c_\mu(w, \epsilon_d, \epsilon_r) & = \hess (e^\psi \odot \epsilon_d + U \epsilon_r)\\
\tilde c_\psi(w, \epsilon_d, \epsilon_r) & = \nabla f(\mu) \odot e^\psi \odot \epsilon_d\\
\tilde c_U(w, \epsilon_d, \epsilon_r) & = \nabla f(\mu) \epsilon_r^\top.
\end{align}

\textbf{Householder flows:} In this case we have a Gaussian distribution with reparameterization given by $z = \mu + \prod_{i=1}^M H(v_i) D \epsilon$, where $M$ is the number of flow steps used, $D = \mathrm{diag}(\sigma)$ is a diagonal matrix, and $H_i$ is a matrix parameterized by vector $v_i$ as $H_i(v_i) = \left(I - 2\frac{v_i v_i^\top}{\Vert v_i \Vert^2} \right)$. The parameter set is given by $w = \{\mu, \sigma, v_1, \hdots, v_M\}$. In this case, computing the Jacobians required to apply Miller et al. approach is quite complex, because of the complex dependency of $\mathcal{T}_w$ on the parameters $v_i$.

\end{document}